\setlist[description]{leftmargin=\parindent,labelindent=\parindent}
\newcommand{\qcr}[1]{{\fontfamily{qcr}\selectfont #1}}
\newcommand{\tsf}[1]{{\text{\textsf{#1}}}}
\pgfplotsset{%
	,compat=1.12
	,every axis x label/.style={at={(current axis.right of origin)},anchor=north west}
	,every axis y label/.style={at={(current axis.above origin)},anchor=north east}
}
\definecolor{scarlet}{rgb}{1.0, 0.13, 0.0}
\definecolor{brightmaroon}{rgb}{0.76, 0.13, 0.28}
\definecolor{mediumturquoise}{rgb}{0.28, 0.82, 0.8}
\definecolor{fandango}{rgb}{0.71, 0.2, 0.54}
\definecolor{antiquewhite}{rgb}{0.98, 0.92, 0.84}
\definecolor{babyblue}{rgb}{0.54, 0.81, 0.94}
\definecolor{brilliantlavender}{rgb}{0.96, 0.73, 1.0}
\definecolor{bronze}{rgb}{0.8, 0.5, 0.2}
\definecolor{cornsilk}{rgb}{1.0, 0.97, 0.86}
\definecolor{lavenderpink}{rgb}{0.98, 0.68, 0.82}
\definecolor{sandybrown}{rgb}{0.96, 0.64, 0.38}
\definecolor{celadon}{rgb}{0.67, 0.88, 0.69}
\newcommand{\ph}{CoxPH\xspace}
\newcommand{\ind}{\perp\!\!\!\!\perp} 
\newcommand{\esssup}{\text{esssup}}
\newcommand{\holderspace}[3]{\mathcal{W}^{#1}_{#2}(#3)}
\newcommand{\result}[2]{${#1}_{\pm#2}$}
\newcommand{\resultf}[2]{$\mathbf{#1}_{\pm#2}$}
\newcommand{\results}[2]{$\underline{#1}_{\pm#2}$}
\newcommand{\cn}[3]{N\left(#1, #2, #3\right)}
\newcommand{\bn}[3]{N_{[]}\left(#1, #2, #3\right)}
\newcommand{\var}{\text{Var}}
\newcommand{\vcdim}[1]{\text{VC}\left(#1\right)}
\newcommand{\fdivergence}[3]{\text{D}_{\text{#1}}\left(#2 \parallel #3\right)}
\theoremstyle{plain}
\newtheorem{theorem}{Theorem}[section]
\newtheorem{lemma}[theorem]{Lemma}
\theoremstyle{definition}
\theoremstyle{remark}
\newtheorem{remark}[theorem]{Remark}
\theoremstyle{plain}
\newtheorem{condition}[theorem]{Condition}
\title{Neural Frailty Machine: Beyond proportional hazard assumption in neural survival regressions}
\author[$\dagger$]{Ruofan Wu$^*$}
\author[$\ddagger$]{Jiawei Qiao\thanks{Equal contribution}\ \ }
\author[$\S$]{Mingzhe Wu}
\author[$\ddagger$]{\authorcr Wen Yu}
\author[$\ddagger$]{Ming Zheng}
\author[$\dagger$]{Tengfei Liu}
\author[$\dagger$]{Tianyi Zhang}
\author[$\dagger$]{Weiqiang Wang}
\affil[$\dagger$]{Ant Group}
\affil[$\ddagger$]{Fudan University}
\affil[$\S$]{Coupang}
\affil[ ]{\footnotesize{\texttt{\{ruofan.wrf, aaron.ltf, zty113091, weiqiang.wwq\}@antgroup.com}}}
\affil[ ]{\footnotesize{\texttt{jeremyqjw@163.com, wumingzhe.darcy@gmail.com, \{wenyu, mingzheng\}@fudan.edu.cn}}}
\begin{document}

\maketitle

    \begin{abstract}
        We present neural frailty machine (NFM), a powerful and flexible neural modeling framework for survival regressions. The NFM framework utilizes the classical idea of multiplicative frailty in survival analysis as a principled way of extending the proportional hazard assumption, at the same time being able to leverage the strong approximation power of neural architectures for handling nonlinear covariate dependence. Two concrete models are derived under the framework that extends neural proportional hazard models and nonparametric hazard regression models. Both models allow efficient training under the likelihood objective. Theoretically, for both proposed models, we establish statistical guarantees of neural function approximation with respect to nonparametric components via characterizing their rate of convergence. Empirically, we provide synthetic experiments that verify our theoretical statements. We also conduct experimental evaluations over $6$ benchmark datasets of different scales, showing that the proposed NFM models achieve predictive performance comparable to or sometimes surpassing state-of-the-art survival models. Our code is publicly availabel at \href{https://github.com/Rorschach1989/nfm}{\texttt{https://github.com/Rorschach1989/nfm}}
        % Our code is publicly availabel at \href{https://github.com/Rorschach1989/nfm}{https://github.com/Rorschach1989/nfm}
        % \todo{Justify the usage of Hellinger distance, Argue why SODEN is not estimated in MIMIC-III, place the derivation of PF likelihood in the right place. Preditive performance is not guaranteed by theory, state it as only empirical evidence. }
    \end{abstract}

    \section{Introduction}
    Regression analysis of time-to-event data \cite{kalbfleisch2002statistical} has been among the most important modeling tools for clinical studies and has witnessed a growing interest in areas like corporate finance \cite{duffie2009frailty}, recommendation systems \cite{jing2017neural}, and computational advertising \cite{wu2015predicting}. The key feature that differentiates time-to-event data from other types of data is that they are often \emph{incompletely observed}, with the most prevailing form of incompleteness being the \emph{right censoring} mechanism \cite{kalbfleisch2002statistical}. In the right censoring mechanism, the duration time of a sampled subject is (sometimes) only known to be larger than the observation time instead of being recorded precisely. It is well known in the community of survival analysis that even in the case of linear regression, naively discarding the censored observations produces estimation results that are statistically biased \cite{buckley1979linear}, at the same time losses sample efficiency if the censoring proportion is high.\par 
    Cox's proportional hazard (\ph) model \cite{cox1972regression} using the convex objective of negative partial likelihood \cite{cox1975partial} is the \emph{de facto} choice in modeling right censored time-to-event data (hereafter abbreviated as censored data without misunderstandings). The model is \emph{semiparametric} \cite{bickel1993efficient} in the sense that the baseline hazard function needs no parametric assumptions. 
    % The rigorous theoretical properties of the maximum partial likelihood estimators were derived by \cite{andersen1982cox}. 
    The original formulation of \ph model assumes a linear form and therefore has limited flexibility since the truth is not necessarily linear. Subsequent studies extended \ph model to nonlinear variants using ideas from nonparametric regression \cite{huang1999efficient, cai2007partially, cai2008partially}, ensemble learning \cite{ishwaran2008random}, and neural networks \cite{faraggi1995neural, katzman2018deepsurv}. While such extensions allowed a more flexible nonlinear dependence structure with the covariates, the learning objectives were still derived under the proportional hazards (PH) assumption, which was shown to be inadequate in many real-world scenarios \cite{gray2000estimation}. The most notable case was the failure of modeling the phenomenon of crossing hazards \cite{stablein1985two}. It is thus of significant interest to explore extensions of \ph that both allow nonlinear dependence over covariates and relaxations of the PH assumption. \par 
    Frailty models \cite{wienke2010frailty, duchateau2007frailty} are among the most important research topics in modern survival analysis, in that they provide a principled way of extending \ph model via incorporating a multiplicative random effect to capture unobserved heterogeneity. The resulting parameterization contains many useful variants of \ph like the proportional odds model \cite{bennett1983analysis}, under specific choices of frailty families. While the theory of frailty models has been well-established \cite{murphy1994consistency, murphy1995asymptotic, parner1998asymptotic, kosorok2004robust}, most of them focused on the linear case. Recent developments on applying neural approaches to survival analysis \cite{katzman2018deepsurv, kvamme2019time, tang2022soden, rindt2022a} have shown promising results in terms of empirical predictive performance, with most of them lacking theoretical discussions. Therefore, it is of significant interest to build more powerful frailty models via adopting techniques in modern deep learning \cite{goodfellow2016deep} with provable statistical guarantees. \par
    In this paper, we present a general framework for neural extensions of frailty models called the \textbf{neural frailty machine (NFM)}. Two concrete neural architectures are derived under the framework: The first one adopts the proportional frailty assumption, allowing an intuitive interpretation of the neural \ph model with a multiplicative random effect.
    The second one further relaxes the proportional frailty assumption and could be viewed as an extension of nonparametric hazard regression (NHR) \cite{cox1990asymptotic, kooperberg1995hazard}, sometimes referred to as "fully neural" models under the context of neural survival analysis \cite{omi2019fully}. We summarize our contributions as follows.\par
    \begin{itemize}[leftmargin=*]
        \item We propose the neural frailty machine (NFM) framework as a principled way of incorporating unobserved heterogeneity into neural survival regression models. The framework includes many commonly used survival regression models as special cases.
        \item We derive two model architectures based on the NFM framework that extend neural \ph models and neural NHR models. Both models allow stochastic training and scale to large datasets.
        \item Theoretically, we show \emph{statistical correctness} of the two proposed models via characterizing the rates of convergence of the proposed nonparametric function estimators. The proof technique is different from previous theoretical studies on neural survival analysis and is applicable to many other types of neural survival models.
        \item Empirically, we verify the \emph{empirical efficacy} of the proposed framework via conducting extensive studies on various benchmark datasets at different scales. Under standard performance metrics, both models are empirically shown to perform competitively, matching or sometimes outperforming state-of-the-art neural survival models.
    \end{itemize}
    
    \section{Related works}
    \subsection{Nonlinear extensions of \ph}
    Most nonlinear extensions of \ph model stem from the equivalence of partial likelihood and semiparametric profile likelihood \cite{murphy2000profile} of \ph model, resulting in nonlinear variants that essentially replaces the linear term in partial likelihood with nonlinear variants: \cite{huang1999efficient} used smoothing splines, \cite{cai2007partially, cai2008partially} used local polynomial regression \cite{fan1996local}. The empirical success of tree-based models inspired subsequent developments like \cite{ishwaran2008random} that equip tree-based models such as gradient boosting trees and random forests with losses in the form of negative log partial likelihood. Early developments of neural survival analysis \cite{faraggi1995neural} adopted similar extension strategies and obtained neural versions of partial likelihood. Later attempts \cite{katzman2018deepsurv} suggested using the successful practice of stochastic training which is believed to be at the heart of the empirical success of modern neural methods \cite{hardt2016train}. However, stochastic training under the partial likelihood objective is highly non-trivial, as mini-batch versions of log partial likelihood \cite{katzman2018deepsurv} are no longer valid stochastic gradients of the full-sample log partial likelihood \cite{tang2022soden}. 
    
    \subsection{Beyond \ph in survival analysis}
    In linear survival modeling, there are standard alternatives to \ph such as the accelerated failure time (AFT) model \cite{buckley1979linear, ying1993large}, the extended hazard regression model \cite{etezadi1987extended}, and the family of linear transformation models \cite{zeng2006efficient}. While these models allow certain types of nonlinear extensions, the resulting form of (conditional) hazard function is still restricted to be of a specific form. The idea of nonparametric hazard regression (NHR) \cite{cox1990asymptotic, kooperberg1995hazard, strawderman1996asymptotic} further improves the flexibility of nonparametric survival analysis via directly modeling the conditional hazard function by nonparametric regression techniques such as spline approximation. Neural versions of NHR have been developed lately such as the CoxTime model \cite{kvamme2019time}. \cite{rindt2022a} used a neural network to approximate the conditional survival function and could be thus viewed as another trivial extension of NHR.\par
    Aside from developments in NHR, \cite{lee2018deephit} proposed a discrete-time model with its objective being a mix of the discrete likelihood and a rank-based score; \cite{zhong2021deep} proposed a neural version of the extended hazard model, unifying both neural \ph and neural AFT model; \cite{tang2022soden} used an ODE approach to model the hazard and cumulative hazard functions. 
    
    \subsection{Theoretical justification of neural survival models}
    Despite the abundance of neural survival models, assessment of their theoretical properties remains nascent. In \cite{zhong2021partially}, the authors developed minimax theories of partially linear cox model using neural networks as the functional approximator. \cite{zhong2021deep} provided convergence guarantees of neural estimates under the extended hazard model. The theoretical developments therein rely on specific forms of objective function (partial likelihood and kernel pseudo-likelihood) and are not directly applicable to the standard likelihood-based objective which is frequently used in survival analysis.
    
    \section{Methodology}
    \subsection{The neural frailty machine framework}
    % \subsection{From \ph to frailty models}
    Let $\tilde{T} \ge 0$ be the interested event time with survival function denoted by $S(t)=\mathbb{P}(\tilde{T}>t)$ associated with a feature(covariate) vector $Z \in \mathbb{R}^d $. Suppose that $\tilde{T}$ is a continuous random variable and let $f(t)$ be its density function. Then $\lambda(t)=f(t)/S(t)$ is the hazard function and $\Lambda(t)=\int_0^t\lambda(s)ds$ is the cumulative hazard function. Aside from the covariate $Z$, we use a positive scalar random variable $\omega \in \mathbb{R}^+$ to express the unobserved heterogeneity corresponding to individuals, or \emph{frailty}.
    \footnote{For example in medical biology, it was observed that genetically identical animals kept in as similar an environment as possible will typically not behave the same upon exposure to environmental carcinogens \cite{brennan2002gene}}. 
    In this paper we will assume the following generating scheme of $\tilde{T}$ via specifying its conditional hazard function:
    \begin{align}\label{eqn: frailty_general}
        \lambda(t| Z, \omega) = \omega \widetilde{\nu}(t, Z).
    \end{align}
    Here $\widetilde{\nu}$ is an unspecified non-negative function, and we let the distribution of $\omega$ be parameterized by a one-dimensional parameter $\theta \in \mathbb{R}$. 
    \footnote{The choice of one-dimensional frailty family is mostly for simplicity and clearness of theoretical derivations. Note that there exist multi-dimensional frailty families like the PVF family \cite{wienke2010frailty}. Generalizing our theoretical results to such kinds of families would require additional sets of regularity conditions, and will be left to future explorations.}
    The formulation \eqref{eqn: frailty_general} is quite general and contains several important models in both traditional and neural survival analysis:
    \begin{enumerate}[leftmargin=*]
        \item When $\omega$ follows parametric distributional assumptions, and $\widetilde{\nu}(t, Z) = \lambda(t) e^{\beta^\top Z}$, \eqref{eqn: frailty_general} reduces to the standard proportional frailty model \cite{kosorok2004robust}. A special case is when $\omega$ is degenerate, i.e., it has no randomness, then the model corresponds to the classic \ph model.
        \item When $\omega$ is degenerate and $\widetilde{\nu}$ is arbitrary, the model becomes equivalent to nonparametric hazard regression (NHR) \cite{cox1990asymptotic, kooperberg1995hazard}. In NHR, the function parameter of interest is usually the logarithm of the (conditional) hazard function.
        % the model is essentially the same as the setup in several recent works on "fully-neural" approaches to survival analysis and point process modeling \cite{omi2019fully, rindt2022a}
    \end{enumerate}
    In this paper we construct neural approximations to the logarithm of $\widetilde{\nu}$, i.e., $\nu(t, Z) = \log \widetilde{\nu}(t, Z)$. The resulting models are called \textbf{Neural Frailty Machines (NFM)}. Depending on the prior knowledge of the function $\nu$, we propose two function approximation schemes:\par
    \textbf{The proportional frailty (PF) scheme} assumes the dependence of $\nu$ on event time and covariates to be completely \emph{decoupled}, i.e., 
    \begin{align}\label{eqn: proportional_frailty}
        \nu(t, Z) = h(t) + m(Z).
    \end{align}
    Proportional-style assumption over hazard functions has been shown to be a useful inductive bias in survival analysis. We will treat both $h$ and $m$ in \eqref{eqn: proportional_frailty} as function parameters, and device two multi-layer perceptrons (MLP) to approximate them separately. \par
    \textbf{The fully neural (FN) scheme} imposes no a priori assumptions over $\nu$ and is the most general version of NFM. It is straightforward to see that the most commonly used survival models, such as \ph, AFT\cite{ying1993large}, EH\cite{zhong2021deep}, or PF models are included in the proposed model space as special cases. We treat $\nu = \nu(t, Z)$ as the function parameter with input dimension $d + 1$ and use a multi-layer perceptron (MLP) as the function approximator to $\nu$. Similar approximation schemes with respect to the hazard function have been proposed in some recent works \cite{omi2019fully, rindt2022a}, referred to as "fully neural approaches" without theoretical characterizations. \par
    % The resulting frailty model is also closely related to the nonparametric frailty model proposed in \cite{du2010frailty}, where the authors use regression splines as the function approximator. 
    \textbf{The choice of frailty family} There are many commonly used families of frailty distributions \cite{kosorok2004robust, duchateau2007frailty, wienke2010frailty}, among which the most popular one is the \emph{gamma frailty}, where $\omega$ follows a gamma distribution with mean $1$ and variance $\theta$. We briefly introduce some other types of frailty families in appendix \ref{sec: frailty_spec}.
    
    \subsection{Parameter learning under censored observations}
    In time-to-event modeling scenarios, the event times are typically observed under right censoring. Let $C$ be the right censoring time which is assumed to be conditionally independent of the event time $\tilde{T}$ given $Z$, i.e., $\tilde{T} \ind C | Z$. In data collection, one can observe the minimum of the survival time and the censoring time, that is, observe $T=\tilde{T}\wedge C$ as well as the censoring indicator $\delta=I(\tilde{T}\leqslant C)$, where $a\wedge b=\min(a,b)$ for constants $a$ and $b$ and $I(\cdot)$ stands for the indicator function. We assume $n$ independent and identically distributed (i.i.d.) copies of $(T, \delta, Z)$ are used as the training sample $(T_i, \delta_i, Z_i), i \in [n]$, where we use $[n]$ to denote the set $\{1, 2, \ldots, n\}$. Additionally, we assume the unobserved frailties are independent and identically distributed, i.e., $\omega_i \overset{\text{i.i.d.}}{\sim} f_\theta(\omega), i \in [n]$. 
    % Next we use the FN scheme of NFM as an example to derive the learning procedure based on the \textbf{observed log-likelihood (OLL)} objective. 
    Next, we derive the learning procedure based on the \textbf{observed log-likelihood (OLL)} objective under both PF and FN scheme.
    To obtain the observed likelihood, we first integrate the conditional survival function given the frailty:
    \begin{align}
        \begin{aligned}
            S(t|Z) &= \mathbb{E}_{\omega \sim f_\theta}\left[e^{-\omega \int_0^t e^{\nu(s, Z)}ds}\right] =: e^{- G_\theta\left(\int_0^t e^{\nu(s, Z)}ds\right)}.
        \end{aligned}
    \end{align}
    Here the \emph{frailty transform} $G_\theta (x) = - \log \left(\mathbb{E}_{\omega \sim f_\theta}\left[e^{-\omega x}\right]\right)$ is defined as the negative of the logarithm of the Laplace transform of the frailty distribution. The conditional cumulative hazard function is thus $\Lambda(t|Z) = G_\theta(\int_0^t e^{\nu(s, Z)}ds)$. For the PF scheme of NFM, we use two MLPs $\widehat{h} = \widehat{h}(t; \mathbf{W}^h, \mathbf{b}^h)$ and $\widehat{m} = \widehat{m}(Z; \mathbf{W}^m, \mathbf{b}^m)$ as function approximators to $\nu$ and $m$, parameterized by $(\mathbf{W}^h, \mathbf{b}^h)$ and $(\mathbf{W}^m, \mathbf{b}^m)$, respectively. 
    \footnote{Here we adopt the conventional notation that $\mathbf{W}$ is the collection of the weight matrices of the MLP in all layers, and $\mathbf{b}$ corresponds to the collection of the bias vectors in all layers.}
    According to standard results on censored data likelihood \cite{kalbfleisch2002statistical}, we write the learning objective under the PF scheme as:
    \begin{align}\label{eqn: pf_obj}
    \resizebox{0.93\columnwidth}{!}{$
        \begin{aligned}
            &\mathcal{L}(\mathbf{W}^h, \mathbf{b}^h, \mathbf{W}^m, \mathbf{b}^m, \theta) \\
            =& \frac{1}{n}\left[\sum_{i \in [n]} \delta_i \log g_{\theta}\left(e^{\widehat{m}(Z_i)}\int_{0}^{T_i}e^{\widehat{h}(s)}ds\right) +\delta_i \widehat{h}(T_i)+\delta_i \widehat{m}(Z_i) -G_{\theta}\left(e^{\widehat{m}(Z_i)}\int_{0}^{T_i}e^{\widehat{h}(s)}ds\right)\right].
        \end{aligned}
    $}
    \end{align}
    Here we define $g_\theta(x) = \frac{\partial}{\partial x}G_\theta (x)$. Let $(\widehat{\mathbf{W}}^h_n, \widehat{\mathbf{b}}^h_n, \widehat{\mathbf{W}}^m_n, \widehat{\mathbf{b}}^m_n, \widehat{\theta}_n)$ be the maximizer of \eqref{eqn: pf_obj} and further denote $\widehat{h}_n(t) = \widehat{h}(t; \widehat{\mathbf{W}}^h_n, \widehat{\mathbf{b}}^h_n)$ and $\widehat{m}_n(Z) = \widehat{m}(Z; \widehat{\mathbf{W}}^m_n, \widehat{\mathbf{b}}^m_n)$. 
    The resulting estimators for conditional cumulative hazard and survival functions are:
    \begin{align}
        \begin{aligned}
            \widehat{\Lambda}_{\tsf{PF}}(t|Z) = G_{\widehat{\theta}_n}\left(\int_0^t e^{\widehat{h}_n(s) + \widehat{m}_n(Z)} ds\right),\qquad \widehat{S}_{\tsf{PF}}(t|Z) = e^{-\widehat{\Lambda}_{\tsf{PF}}(t|Z)},
        \end{aligned}
    \end{align}
    For the FN scheme, we use $\widehat{\nu} = \widehat{\nu}(t, Z; \mathbf{W}^\nu, \mathbf{b}^\nu)$ to approximate $\nu(t, Z)$ parameterized by $(\mathbf{W}^\nu, \mathbf{b}^\nu)$. The OLL objective is written as:
    % \begin{align}\label{eqn: mlp}
    %     \text{MLP}\left(x; \mathbf{W}, \mathbf{b}\right) = W_L \sigma(\cdots \sigma(W_2\sigma(W_1 x + b_1) + b_2)+\cdots)+b_L, 
    % \end{align}
    % with input $x \in \mathbb{R}^{d_I}$ and output a scalar. The parameters of the MLP is the collection of weight matrices $\mathbf{W} = \{W_l\}_{l \in [L]}$ with $W_1 \in \mathbb{R}^{d_I \times d_H}$, $W_l \in \mathbb{R}^{d_H \times d_H}$ for $l\ in \{2, \ldots, L-1\}$ and $W_L \in \mathbb{R}^{d_H}$, as well as the collection of bias vectors $\mathbf{b} = \{b_l\}_{l \in [L]}$.
    % The following multi-layer perceptron is devised as the function approximator of $\nu(t, Z)$:
    % \begin{align}
    %     \widehat{\nu}(t, Z; \mathbf{W}, \mathbf{b}) = W_L \sigma(\cdots \sigma(W_2\sigma(W_1 [t\|Z] + b_1) + b_2)+\cdots)+b_L.
    % \end{align}
    % Here we use $[t \| Z]$ to denote the concatenation of $t$ and $Z$, the parameters of the MLP is the collection of weight matrices $\mathbf{W} = \{W_l\}_{l \in [L]}$ and bias vectors $\mathbf{b} = \{b_l\}_{l \in [L]}$, with $L$ being the \emph{depth} of the neural network. According to standard results on censored data likelihood \cite{kalbfleisch2002statistical}, we write the learning objective as:
    \begin{align}\label{eqn: fn_obj}
    \resizebox{0.93\columnwidth}{!}{$
        \begin{aligned}
            &\mathcal{L}(\mathbf{W}^\nu, \mathbf{b}^\nu, \theta) \\
            =& \frac{1}{n}\left[\sum_{i \in [n]} \delta_i \log g_\theta\left(\int_0^{T_i} e^{\widehat{\nu}(s, Z_i; \mathbf{W}^\nu, \mathbf{b}^\nu)}ds\right) + \delta_i \widehat{\nu}(T_i, Z_i; \mathbf{W}^\nu, \mathbf{b}^\nu) - G_\theta\left(\int_0^{T_i} e^{\widehat{\nu}(s, Z_i; \mathbf{W}^\nu, \mathbf{b}^\nu)}ds\right)\right].
        \end{aligned}
    $}
    \end{align}
    Let $(\widehat{\mathbf{W}}^\nu_n, \widehat{\mathbf{b}}^\nu_n, \widehat{\theta}_n)$ be the maximizer of \eqref{eqn: fn_obj}, and further denote $\widehat{\nu}_n(t, Z) = \widehat{\nu}(t, Z; \widehat{\mathbf{W}}^\nu_n, \widehat{\mathbf{b}}^\nu_n)$. The conditional cumulative hazard and survival functions are therefore estimated as:
    \begin{align}
        \begin{aligned}
            \widehat{\Lambda}_{\tsf{FN}}(t|Z) = G_{\widehat{\theta}_n}\left(\int_0^t e^{\widehat{\nu}_n(s, Z)} ds\right),\qquad \widehat{S}_{\tsf{FN}}(t|Z) = e^{-\widehat{\Lambda}_{\tsf{FN}}(t|Z)}.
        \end{aligned}
    \end{align}
    The evaluation of objectives like \eqref{eqn: fn_obj} and its gradient requires computing a definite integral of an exponentially transformed MLP function. Instead of using exact computations that are available for only a restricted type of activation functions and network structures, we use numerical integration for such kinds of evaluations, using the method of Clenshaw-Curtis quadrature \cite{boyd2001chebyshev}, which has shown competitive performance and efficiency in recent applications to monotonic neural networks \cite{wehenkel2019unconstrained}.
    \begin{remark}
        The interpretation of frailty terms differs in the two schemes. In the PF scheme, introducing the frailty effect strictly increases the modeling capability (i.e., the capability of modeling crossing hazard) in comparison to \ph or neural variants of \ph \cite{kosorok2004robust}. In the FN scheme, it is arguable that in the i.i.d. case, the marginal hazard function is a reparameterization of the hazard function in the context of NHR. Therefore, we view the incorporation of frailty effect as injecting a domain-specific inductive bias that has proven to be useful in survival analysis and time-to-event regression modeling and verify this claim empirically in section \ref{sec: benchmark_data}. Moreover, frailty becomes especially helpful when handling correlated or clustered data where the frailty term is assumed to be shared among certain groups of individuals \cite{parner1998asymptotic}. Extending NFM to such scenarios is valuable and we left it to future explorations.
    \end{remark}

    \section{Statistical guarantees}\label{sec: theory}
    In this section, we present statistical guarantees regarding both NFM estimates in the sense of nonparametric regression \cite{tsybakov2008introduction}, where we obtain rates of convergence to the ground truth function parameters (which is frequently referred to as the \emph{true parameter} in statistics literature). The results in this section is interpreted as showing the \emph{statistical correctness} of our approach. \\
    \textbf{Proof strategy} Our proof technique is based on the method of sieves \cite{shen1994convergence, shen1997methods, chen2007large} that views neural networks as a special kind of nonlinear sieve \cite{chen2007large} that satisfies desirable approximation properties \cite{yarotsky2017error}. Our strategy is different from previous theoretical works on neural survival models \cite{zhong2021deep, zhong2021partially} where the developments implicitly requires the loss function to be well-controlled by the $L_2$ loss and is therefore not directly applicable to our model due to the flexibility in choosing the frailty transform. \\
    Since both models produce estimates of function parameters, we need to specify a suitable function space to work with. Here we choose the following H\"older ball as was also used in previous works on nonparametric estimation using neural networks \cite{schmidt2020nonparametric, farrell2021deep, zhong2021partially}
    \begin{align}\label{eqn: holder_ball}
        \holderspace{\beta}{M}{\mathcal{X}} = \left\lbrace f: \max_{\mathbf{\alpha}: \left|\mathbf{\alpha}\right|\le \beta }\underset{x \in \mathcal{X}}{\esssup} \left|D^{\mathbf{\alpha}}(f(x))\right| \le M\right\rbrace,
    \end{align}
    where the domain $\mathcal{X}$ is assumed to be a subset of $d$-dimensional euclidean space. $\mathbf{\alpha} = (\alpha_1, \ldots, \alpha_d)$ is a $d$-dimensional tuple of nonnegative integers satisfying $\left|\mathbf{\alpha}\right| = \alpha_1 + \cdots + \alpha_d$ and $D^{\mathbf{\alpha}}f = \frac{\partial^{\left|\mathbf{\alpha}\right|}f}{\partial x_1^{\alpha_1} \cdots x_d^{\alpha_d}}$ is the weak derivative of $f$. Now assume that $M$ is a reasonably large constant, and let $\Theta$ be a closed interval over the real line. We make the following assumptions for the \emph{true parameters} under both schemes:
    \begin{condition}[True parameter, PF scheme]\label{cond: param_PF}
        The euclidean parameter $\theta_0 \in \Theta \subset \mathbb{R}$, and the two function parameters $m_0 \in \holderspace{\beta}{M}{[-1, 1]^d}, h_0 \in \holderspace{\beta}{M}{[0, \tau]}$, and $\tau > 0$ is the ending time of the study duration, which is usually adopted in the theoretical studies in survival analysis \cite{van2000asymptotic}.
    \end{condition}
    \begin{condition}[True parameter, FN scheme]\label{cond: param_FN}
        The euclidean parameter $\theta_0 \in \Theta \subset \mathbb{R}$, and the function parameter $\nu_0 \in \holderspace{\beta}{M}{[0, \tau] \times [-1, 1]^d}$, %where $M_\nu$ is a sufficiently large constant.
    \end{condition}
    
    Next, we construct sieve spaces for function parameter approximation via restricting the complexity of the MLPs to "scale" with the sample size $n$. 
    \begin{condition}[Sieve space, PF scheme]\label{cond: sieve_PF}
        % The estimated frailty parameter $\widehat{\theta}$ lies in $\Theta$. The depth of both approximating networks $\widehat{h}$ and $\widehat{m}$ is of order $O(\log n)$. For $\widehat{h}$, the total number of parameters is of the order $O(n^{\frac{1}{\beta + d}}\log n)$; For $\widehat{m}$, the total number of parameters is of the order $O(n^{\frac{d}{\beta + d}}\log n)$. Suppose that $\widehat{h}\in \holderspace{\beta}{M_{h}}{[0, \tau]}$ and $\widehat{m}\in \holderspace{\beta}{M_{m}}{[-1, 1]^d}$
        The sieve space $\mathcal{H}_n$ is constructed as a set of MLPs satisfying $\widehat{h} \in \holderspace{\beta}{M_{h}}{[0, \tau]}$, with depth of order $O(\log n)$ and total number of parameters of order $O(n^{\frac{1}{\beta + d}}\log n)$. The sieve space $\mathcal{M}_n$ is constructed as a set of MLPs satisfying $\widehat{m} \in \holderspace{\beta}{M_{m}}{[-1, 1]^d}$, with depth of order $O(\log n)$ and total number of parameters of order $O(n^{\frac{d}{\beta + d}}\log n)$. Here $M_h$ and $M_m$ are sufficiently large constants such that every function in $\holderspace{\beta}{M}{[-1, 1]^d}$ and $\holderspace{\beta}{M}{[0, \tau]}$ could be accurately approximated by functions inside $\mathcal{H}_n$ and $\mathcal{M}_n$, according to \cite[Theorem 1]{yarotsky2017error}.
    \end{condition}
    \begin{condition}[Sieve space, FN scheme]\label{cond: sieve_FN}
        % The estimated frailty parameter $\widehat{\theta}$ still lies in $\Theta$. The depth of the approximating network $\widehat{\nu}$ is of order $O(\log n)$ and the number of parameters is of the order $O(n^{\frac{d+1}{\beta + d + 1}}\log n)$. Suppose that $\widehat{\nu}\in  \holderspace{\beta}{M_{\nu}}{[0, \tau] \times [-1, 1]^d}$.
        The sieve space $\mathcal{V}_n$ is constructed as a set of MLPs satisfying $\widehat{\nu} \in \holderspace{\beta}{M_{\nu}}{[0, \tau]}$, with depth of order $O(\log n)$ and total number of parameters of order $O(n^{\frac{d+1}{\beta + d + 1}}\log n)$. Here $M_\nu$ is a sufficiently large constant such that $\mathcal{V}_n$ satisfies approximation properties, analogous to condition \ref{cond: sieve_PF}.
    \end{condition}
    For technical reasons, we will assume the nonparametric function estimators are constrained to fall inside the corresponding sieve spaces, i.e., $\widehat{h}_n \in \mathcal{H}_n$, $\widehat{m}_n \in \mathcal{M}_n$ and $\widehat{\nu} \in \mathcal{V}_n$. This will not affect the implementation of optimization routines as was discussed in \cite{farrell2021deep}. Furthermore, we restrict the estimate $\widehat{\theta}_n \in \Theta$ in both PF and FN schemes.
    
    Additionally, we need the following regularity condition on the function $G_\theta(x)$:
    \begin{condition}\label{cond: G}
        $G_{\theta}(x)$ is viewed as a bivariate function $G: \Theta \times \mathcal{B} \mapsto \mathbb{R}$, where $\mathcal{B}$ is a compact set on $\mathbb{R}$. 
        The functions $G_{\theta}(x)$,$\frac{\partial}{\partial \theta}G_{\theta}(x)$,$\frac{\partial}{\partial x}G_{\theta}(x)$,$\log g_{\theta}(x)$,$\frac{\partial}{\partial \theta}\log g_{\theta}(x)$, $\frac{\partial}{\partial x}\log g_{\theta}(x)$ are bounded on $\Theta \times \mathcal{B}$.
    \end{condition}
    We define two metrics that measures convergence of parameter estimates: For the PF scheme, let $\phi_0 = (h_0, m_0, \theta_0)$ be the true parameters and $\widehat{\phi}_n = (\widehat{h}_n, \widehat{m}_n, \widehat{\theta}_n)$ be the estimates. We abbreviate $\mathbb{P}_{\phi_0, Z=z}$ as the conditional probability distribution of $(T, \delta)$ given $Z=z$ under the true parameter, and $ \mathbb{P}_{\widehat{\phi}_n, Z=z} $ as the conditional probability distribution of $(T, \delta)$ given $Z=z$ under the estimates. Define the following metric
    \begin{align}
        d_{\tsf{PF}}\left(\widehat{\phi}_n, \phi_0\right) = \sqrt{\mathbb{E}_{z\sim \mathbb{P}_Z}\left[H^{2}(\mathbb{P}_{\widehat{\phi}_n, Z=z}\parallel\mathbb{P}_{\phi_0, Z=z})\right]},
    \end{align}
    where $H^2(\mathbb{P}\parallel \mathbb{Q}) = \int \left(\sqrt{d\mathbb{P}} - \sqrt{d\mathbb{Q}}\right)^2$ is the squared Hellinger distance between probability distributions $\mathbb{P}$ and $\mathbb{Q}$. The case for the FN scheme is similar: Let $\psi_0 = (\nu_0, \theta_0)$ be the parameters and $\widehat{\nu}_n = (\widehat{\nu}_n, \widehat{\theta}_n)$ be the estimates. Analogous to the definitions above, we define $\mathbb{P}_{\psi_0, Z=z}$ as the true conditional distribution given $Z=z$, and $ \mathbb{P}_{\widehat{\psi}_n, Z=z} $ be the estimated conditional distribution, we will use the following metric in the FN scheme:
    \begin{align}
        d_{\tsf{FN}}\left(\widehat{\psi}_n, \psi_0\right) = \sqrt{\mathbb{E}_{z\sim \mathbb{P}_Z}\left[H^{2}(\mathbb{P}_{\widehat{\psi}_n, Z=z}\parallel\mathbb{P}_{\psi_0, Z=z})\right]}.
    \end{align}
    Now we state our main theorems. We denote $\mathbb{P}$ as the data generating distribution and use $\widetilde{O}$ to hide poly-logarithmic factors in the big-O notation.
    \begin{theorem}[Rate of convergence, PF scheme]\label{thm: rate_pf}
        In the PF scheme, under condition \ref{cond: param_PF}, \ref{cond: sieve_PF}, \ref{cond: G}, we have that $d_{\tsf{PF}}\left(\widehat{\phi}_n, \phi_0\right) = \widetilde{O}_\mathbb{P}\left(n^{-\frac{\beta}{2\beta+2d}}\right)$. 
    \end{theorem}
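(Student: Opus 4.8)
The plan is to view $\widehat{\phi}_n=(\widehat{h}_n,\widehat{m}_n,\widehat{\theta}_n)$ as a sieve maximum‑likelihood estimator and invoke the classical rate‑of‑convergence machinery for sieve M‑estimation (cf. \cite{shen1994convergence,shen1997methods,chen2007large}), under which the rate is the larger of the sieve approximation (bias) term and the stochastic term governed by the metric entropy of the sieve. Write $\ell(\phi;T,\delta,Z)$ for the per‑observation log‑likelihood contribution in \eqref{eqn: pf_obj}, so that $\widehat{\phi}_n$ maximizes $\mathbb{P}_n\ell(\phi)$ (the empirical average) over $\mathcal{H}_n\times\mathcal{M}_n\times\Theta$ while $\phi_0$ maximizes $\mathbb{P}\ell(\phi)$. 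The target is $d_{\tsf{PF}}(\widehat{\phi}_n,\phi_0)=\widetilde{O}_{\mathbb{P}}(\delta_n)$ with $\delta_n=n^{-\beta/(2\beta+2d)}$, and the main work is to (i) identify $d_{\tsf{PF}}^2$ with the population excess risk $\mathbb{P}(\ell(\phi_0)-\ell(\phi))$ up to constants, (ii) control the bias, and (iii) control the entropy.

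\textbf{Step 1: regularity and the Hellinger/excess‑risk equivalence.} First I would use Conditions \ref{cond: param_PF} and \ref{cond: sieve_PF}, together with the imposed constraints $\widehat{h}_n\in\mathcal{H}_n$, $\widehat{m}_n\in\mathcal{M}_n$, to conclude that every candidate $(h,m)$ is uniformly bounded on $[0,\tau]$ and $[-1,1]^d$ (by $M_h$, $M_m$) and hence that the argument $e^{m(Z)}\int_0^T e^{h(s)}ds$ stays in a fixed compact interval $\mathcal{B}\subset\mathbb{R}$ (the set in Condition \ref{cond: G}), uniformly over the sieve and over the support of $(T,Z)$ since $\tau<\infty$. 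Combined with compactness of $\Theta$ and Condition \ref{cond: G}, this yields: $\ell(\phi;\cdot)$ is uniformly bounded; $\phi\mapsto\ell(\phi;\cdot)$ is Lipschitz with respect to $\|h-h'\|_\infty\vee\|m-m'\|_\infty\vee|\theta-\theta'|$ (using the bounded derivatives of $G_\theta,\log g_\theta$); and consequently the class of log‑densities $\{\ell(\phi):\phi\in\mathcal{H}_n\times\mathcal{M}_n\times\Theta\}$ is uniformly bounded. For such a bounded log‑likelihood‑ratio class the standard inequalities give $\mathbb{P}(\ell(\phi_0)-\ell(\phi))\asymp d_{\tsf{PF}}^2(\phi,\phi_0)$, so $d_{\tsf{PF}}$ is the natural loss and it suffices to bound the excess risk of $\widehat{\phi}_n$.

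\textbf{Step 2: bias, entropy, and the rate solve.} Next I would invoke \cite[Theorem 1]{yarotsky2017error} with the architectures of Condition \ref{cond: sieve_PF}: the $m$‑network ($\asymp n^{d/(\beta+d)}\log n$ parameters) approximates $m_0\in\holderspace{\beta}{M}{[-1,1]^d}$ in sup‑norm to within $\widetilde{O}(n^{-\beta/(\beta+d)})$, and the univariate $h$‑network ($\asymp n^{1/(\beta+d)}\log n$ parameters) does the same for $h_0\in\holderspace{\beta}{M}{[0,\tau]}$; by the Lipschitz property of Step 1, the plug‑in $\phi_n^\ast=(h_n^\ast,m_n^\ast,\theta_0)$ has $\mathbb{P}(\ell(\phi_0)-\ell(\phi_n^\ast))=\widetilde{O}(n^{-2\beta/(\beta+d)})=\widetilde{O}(\delta_n^2)$, so the bias is negligible relative to $\delta_n$. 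Then I would bound the bracketing entropy of $\mathcal{F}_n=\{\ell(\phi):\phi\in\mathcal{H}_n\times\mathcal{M}_n\times\Theta\}$ using standard covering‑number estimates for bounded depth‑$O(\log n)$ MLPs (see e.g. \cite{schmidt2020nonparametric,farrell2021deep}) and the Lipschitz transfer of Step 1, giving $\log N_{[]}(\epsilon,\mathcal{F}_n,\|\cdot\|_{L_2(\mathbb{P})})\lesssim W_n\log(W_n/\epsilon)$ with $W_n\asymp n^{d/(\beta+d)}\log n$ (the $m$‑network dominates for $d\ge1$). Feeding this into the sieve‑MLE rate theorem, the rate $\delta_n$ solves
\[
\frac{1}{\sqrt{n}}\int_{\delta_n^2}^{\delta_n}\sqrt{\log N_{[]}(\epsilon,\mathcal{F}_n,\|\cdot\|)}\,d\epsilon\ \lesssim\ \delta_n^2 ,
\]
whose left side is $\lesssim \sqrt{W_n}\,\delta_n\,\mathrm{polylog}(n)/\sqrt{n}$; solving $\sqrt{n}\,\delta_n^2\gtrsim\sqrt{W_n}\,\delta_n\,\mathrm{polylog}(n)$ gives $\delta_n\asymp\sqrt{W_n/n}\cdot\mathrm{polylog}(n)=\widetilde{O}(n^{-\beta/(2\beta+2d)})$. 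Combined with Step 1, $d_{\tsf{PF}}(\widehat{\phi}_n,\phi_0)=\widetilde{O}_{\mathbb{P}}(n^{-\beta/(2\beta+2d)})$.

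\textbf{Main obstacle.} I expect the crux to be Step 1: making Condition \ref{cond: G} bite with the \emph{correct} compact set $\mathcal{B}$, i.e. verifying that $e^{m(Z)}\int_0^T e^{h(s)}ds$ remains in a fixed compact set uniformly over the sieve and the data support (which needs the H\"older‑ball constraints with constants $M_h,M_m$, finiteness of $\tau$, and some nondegeneracy of the law of $(T,Z)$ so that the induced densities are genuinely bounded), and then threading these uniform bounds through the compositions with $G_\theta$ and $\log g_\theta$ to obtain the clean Lipschitz‑in‑sup‑norm estimate that simultaneously drives the bias bound and the entropy transfer, as well as the two‑sided Hellinger/excess‑risk equivalence. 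Once that regularity is in place, the neural approximation rate is a direct citation, the entropy bound is bookkeeping with known MLP covering numbers, and the final rate is a routine calculation.
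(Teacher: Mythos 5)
Your proposal follows essentially the same route as the paper: a sieve-MLE argument that (a) relates the population log-likelihood-ratio gap to the Hellinger distance $d_{\tsf{PF}}$ using the boundedness guaranteed by Conditions~\ref{cond: sieve_PF} and~\ref{cond: G}, (b) invokes \cite[Theorem 1]{yarotsky2017error} for the sup-norm bias of the two MLP approximators, (c) controls the bracketing entropy of the induced log-likelihood class via covering numbers and VC-dimension bounds for the MLP sieves, and (d) balances these through the Shen--Wong rate theorem; the paper's proof carries out exactly these four ingredients (as its Steps 1--4 plus Lemmas~\ref{lem: pf_l_bound}--\ref{lem: pf_capacity}), with the only cosmetic difference that where you assert a two-sided equivalence $\mathbb{P}(\ell(\phi_0)-\ell(\phi))\asymp d_{\tsf{PF}}^2$, the paper separately proves the lower bound via $\text{KL}\ge 2H^2$ and an upper bound on the \emph{variance} of the log-ratio via a Taylor expansion, which is what \cite[Theorem 1]{shen1994convergence} actually requires.
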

    \begin{theorem}[Rate of convergence, FN scheme]\label{thm: rate_fn}
        In the FN scheme, under condition \ref{cond: param_FN}, \ref{cond: sieve_FN}, \ref{cond: G}, we have that $d_{\tsf{FN}}\left(\widehat{\psi}_n, \psi_0\right) = \widetilde{O}_\mathbb{P}\left(n^{-\frac{\beta}{2\beta+2d+2}}\right)$.
    \end{theorem}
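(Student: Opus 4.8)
The plan is to view $\widehat{\psi}_n=(\widehat{\nu}_n,\widehat{\theta}_n)$ as a sieve maximum-likelihood estimator over $\Theta\times\mathcal V_n$ for the observed log-likelihood \eqref{eqn: fn_obj}, and to invoke a general rate-of-convergence theorem for sieve M-estimation (in the spirit of \cite{shen1994convergence, chen2007large}, using the empirical-process machinery underlying \cite{van2000asymptotic}). Such a theorem yields $d_{\tsf{FN}}(\widehat\psi_n,\psi_0)=O_{\mathbb P}(\delta_n)$ once three ingredients are supplied: (i) a two-sided quadratic comparison between $d_{\tsf{FN}}^2$ and the population Kullback--Leibler risk $R(\psi):=\mathbb E_{z\sim\mathbb P_Z}\,K(p_{\psi_0,Z=z}\parallel p_{\psi,Z=z})$; (ii) a bracketing-entropy bound for the class of conditional log-densities indexed by $\Theta\times\mathcal V_n$; and (iii) a sieve approximation bound for $\nu_0$. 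The rate $\delta_n$ is then the largest quantity for which both ``approximation error $\lesssim\delta_n^2$'' and the critical inequality ``$n^{-1/2}\int_0^{\delta_n}\sqrt{H_{[]}(\epsilon)}\,d\epsilon\lesssim\delta_n^2$'' hold.

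For ingredient (i) I would write the conditional density of $(T,\delta)$ given $Z=z$ under $\psi=(\nu,\theta)$, with respect to the natural dominating measure, as $p_\psi(t,\delta\mid z)=\big[g_\theta(A)\,e^{\nu(t,z)}\big]^{\delta}\,e^{-G_\theta(A)}$ where $A=A(t,z)=\int_0^t e^{\nu(s,z)}ds$ (the censoring density cancels under $\tilde T\ind C\mid Z$). Since every $\nu\in\mathcal V_n$ lies in a H\"older ball of fixed radius $M_\nu$ over the compact set $[0,\tau]\times[-1,1]^d$, the quantities $\nu$, $e^{\nu}$ and $A$ are uniformly bounded away from $0$ and $\infty$; combined with Condition~\ref{cond: G} this makes $\log(p_{\psi_0}/p_\psi)$ uniformly bounded and Lipschitz in $(\nu,\theta)$ for the sup-norm. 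Standard inequalities for densities with uniformly bounded likelihood ratios then give $H^2(p_\psi\parallel p_{\psi_0})\asymp K(p_{\psi_0}\parallel p_\psi)\asymp\mathbb E_{\psi_0}\big[(\log p_{\psi_0}-\log p_\psi)^2\big]$, and averaging over $Z$ shows $d_{\tsf{FN}}^2(\widehat\psi_n,\psi_0)\asymp R(\widehat\psi_n)$; the lower direction of the comparison also relies on identifiability of $\theta_0\in\Theta$.

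For ingredient (ii), covering-number bounds for ReLU MLP classes in terms of depth and number of weights (as exploited in \cite{schmidt2020nonparametric, farrell2021deep, zhong2021partially}) give $\log N(\epsilon,\mathcal V_n,\|\cdot\|_\infty)\lesssim W_n\log n\cdot\log(1/\epsilon)$ with $W_n=O(n^{(d+1)/(\beta+d+1)}\log n)$, and the Lipschitz property from (i) together with compactness of $\Theta$ transfers this to $H_{[]}(\epsilon,\{\log p_\psi:\psi\in\Theta\times\mathcal V_n\},L_2(\mathbb P))\lesssim W_n\log n\cdot\log(1/\epsilon)$. For ingredient (iii), \cite[Theorem 1]{yarotsky2017error} applied to $\nu_0\in\holderspace{\beta}{M}{[0,\tau]\times[-1,1]^d}$ --- a domain of dimension $d+1$ --- produces $\nu_n^\star\in\mathcal V_n$ with $\|\nu_n^\star-\nu_0\|_\infty=\widetilde O(W_n^{-\beta/(d+1)})=\widetilde O(n^{-\beta/(\beta+d+1)})$, so that by (i) $K(p_{\psi_0}\parallel p_{(\nu_n^\star,\theta_0)})=\widetilde O(n^{-2\beta/(\beta+d+1)})$, which is of smaller order than the variance scale $W_n/n\asymp n^{-\beta/(\beta+d+1)}$. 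The entropy integral obeys $\int_0^{\delta_n}\sqrt{H_{[]}(\epsilon)}\,d\epsilon\lesssim\delta_n\sqrt{W_n}$ up to poly-logarithmic factors, so the critical inequality forces $\delta_n\asymp\sqrt{W_n/n}=\widetilde O(n^{-\beta/(2\beta+2d+2)})$, which is the assertion. The argument is structurally identical to Theorem~\ref{thm: rate_pf}; only the effective input dimension of the network changes, $d+1$ here against $d$ and $1$ for $\widehat m$ and $\widehat h$ there.

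The main obstacle, I expect, is ingredient (i): establishing the \emph{two-sided} quadratic comparison between the integrated squared Hellinger distance of the conditional laws of $(T,\delta)$ and the log-likelihood contrast. The likelihood mixes the discrete indicator $\delta$ with an integrated, frailty-transformed continuous part, so one must track carefully how the boundedness of $G_\theta$, $g_\theta$ and their $\theta$- and $x$-derivatives (Condition~\ref{cond: G}), the uniform bounds on $\nu$ over the compact domain, and the identifiability of $\theta_0$ propagate through the chain $\nu\mapsto e^{\nu}\mapsto A=\int_0^{\cdot}e^{\nu}\mapsto G_\theta(A)$ to produce uniform upper \emph{and} lower curvature of $R$. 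A secondary technical point is reconciling the assumption $\widehat\nu_n\in\mathcal V_n$ with the H\"older radius $M_\nu$, i.e.\ verifying that the Yarotsky approximant used in (iii) indeed lies in the prescribed ball; this can be arranged as in \cite{farrell2021deep}.
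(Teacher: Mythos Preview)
Your proposal is essentially the same route as the paper: verify the conditions of Shen's sieve rate theorem \cite{shen1994convergence} via (a) risk lower bound, (b) variance control, (c) bracketing entropy, (d) sieve approximation, then balance to obtain the rate. The paper's Steps~1--4 correspond exactly to your ingredients (i)--(iii) plus the balancing.

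One simplification you are missing: you flag the ``two-sided'' curvature comparison and identifiability of $\theta_0$ as the main obstacle, but neither is needed. Because the target metric $d_{\tsf{FN}}$ \emph{is} the integrated Hellinger distance of the conditional laws, the risk lower bound reduces to the one-line inequality $\mathrm{KL}\ge 2H^2$; no identifiability of $(\nu,\theta)$ separately enters (cf.\ the paper's Remark after the theorem). The only nontrivial direction is the variance bound $\mathrm{Var}\big[l(\cdot;\nu_0,\theta_0)-l(\cdot;\widehat\nu,\widehat\theta)\big]\lesssim d_{\tsf{FN}}^2$, and this follows immediately from the uniform boundedness of $l$ over the sieve (your Lipschitz/boundedness observation), which makes $p_{\psi_0}/p_{\widehat\psi}$ uniformly bounded above and below and hence $(\log p_{\psi_0}-\log p_{\widehat\psi})^2\lesssim(\sqrt{p_{\psi_0}}-\sqrt{p_{\widehat\psi}})^2$ pointwise. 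A minor technical difference: for the entropy the paper goes $L_2$-covering $\to$ $L_\infty$-covering via an interpolation inequality $\|f\|_\infty\lesssim\|f\|_2^{s_\nu}$ on the H\"older ball (Chen--Shen), then $L_\infty$-covering $\to$ $L_\infty$-bracketing, and bounds the $L_2$-covering by VC-dimension \cite{bartlett2019nearly}; your direct sup-norm covering route would also work and gives the same order.
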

    \begin{remark}
        The idea of using Hellinger distance to measure the convergence rate of sieve MLEs was proposed in \cite{wong1995probability}. Obtaining rates under a stronger topology such as $L_2$ is possible if the likelihood function satisfies certain conditions such as the curvature condition \cite{farrell2021deep}. However, such kind of condition are in general too stringent for likelihood-based objectives, instead, we use Hellinger convergence that has minimal requirements. Consequently, our proof strategy is applicable to many other survival models that rely on neural function approximation such as \cite{rindt2022a}, with some modification to the regularity conditions. For proper choices of metrics in sieve theory, see also the discussion in \cite[Chapter 2]{chen2007large}.
    \end{remark}
    \section{Experiments}\label{sec: experiments}
    % In this section, we assess the empirical performance of NFM. We first conduct synthetic experiments for verifying the theoretical convergence guarantees developed in section \ref{sec: theory}. To further illustrate the empirical efficacy of NFM, we evaluate the predictive performance of NFM over $6$ benchmark datasets ranging from small scale to large scale, against state-of-the-art baselines.
    In this section, we report the empirical performance of NFM, we will focus on the following two research questions:
    \begin{description}
        \item[RQ$1$(Verfication of statistical correctness):]\label{rq: 1} The results listed in section characterized the convergence results \emph{in theory}, providing a crude guide on the number of samples required for an accurate estimate. Nonetheless, theoretical rates are often pessimistic, thus we want to investigate \textbf{whether a moderate number of sample size suffices for good approximation}. 
        \item[RQ$2$(Assessment of empirical efficacy):]\label{rq: 2} While NFM is theoretically sound in terms of \emph{estimation accuracy}, the theory we have developed does not necessarily guarantee its \emph{empirical efficacy} as a method of doing prognosis. It is therefore valuable to inspect \textbf{how useful NFM is regarding real-world predictive tasks in survival analysis}.
    \end{description}
    \subsection{Synthetic experiments}
    To answer RQ$1$, we conduct synthetic experiments to check the empirical convergence. Specifically, we investigate the empirical recovery of underlying ground truth parameters under various level of sample size. \\
    \textbf{Ground truth} We set the true underlying model to be a nonlinear gamma-frailty model with a $5$-dimensional feature. We generate three training datasets of different scales, with $n \in \{1000, 5000, 10000\}$. The assessment will be made on a fixed test sample of $100$ hold-out points that are independently drawn from the generating scheme of the event time. A censoring mechanism is applied such that the censoring ratio is around $40\%$ for each dataset. The precise form of the frailty model as well as the generating distribution of the feature vectors are detailed in appendix \ref{sec: synthetic_details}. \\
    \textbf{Empirical recovery results} We report the empirical recovery of the nonlinear component $\nu(t, Z)$ regarding the hold-out test set in in figure \ref{fig: synthesis_log_hazard}. We observe from graphical illustrations that under a moderate sample size $n = 1000$, NFM already exhibits satisfactory recovery for a (relatively low-dimensional) feature space, which is the prevailing case in most public benchmark datasets. We also present additional assessments about: (i) The recovery of $m(Z)$ when using PF scheme in appendix \ref{sec: m_z}, (ii) The recovery of survival functions under both PF and FN scheme in appendix \ref{sec: surv}, and (iii) The numerical recovery results of survival function in appendix \ref{sec: synthetic_numericals}.
    % The underlying data generating scheme is as follows: First, we generate a $5$-dimensional feature $Z$ that is independently sampled from the uniform distribution over the interval $[0, 1]$. The (true) conditional hazard function of the event time takes the form of the proportional frailty model \eqref{eqn: proportional_frailty}, with $h(t) = t$ and $m(Z) = \sin(\langle Z, \beta \rangle) + \langle\sin(Z), \beta \rangle$, where $\beta = (0.1, 0.2, 0.3, 0.4, 0.5)$. The frailty $\omega$ is generated according to a gamma distribution with mean and variance equal to 1. 
    % We use this generating model to assess the recovery guarantee of both NFM modeling schemes via inspecting the empirical recovery of $\nu(t, Z)$. For the PF scheme, we have more underlying information about the generating model, and we present an additional assessment regarding the recovery of $m(Z)$ in appendix \ref{sec: m_z}.  . We report a more detailed description of the implementation of the data generating scheme and model architectures in appendix \ref{sec: synthetic_details}. We present the results of our synthetic data experiments in figure \ref{fig: synthesis_log_hazard}. The evaluation results suggest that both NFM schemes are capable of approximating complicated nonlinear functions using a moderate amount of data, i.e., $n \ge 1000$. 
    
    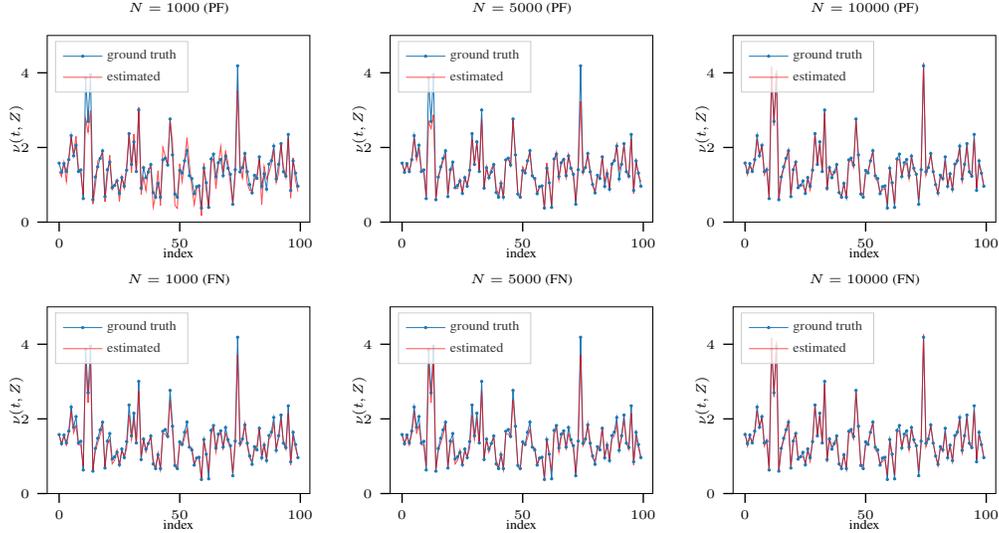
\begin{figure}
        \centering
        % \begin{tabular}{ccc}
        % \input{m_1000_pf} & \input{m_1000_pf} & \input{m_1000_pf} \\
        % \input{hazard_10000_pf}  & \input{hazard_10000_pf} & \input{hazard_10000_pf}
        % \end{tabular}
        % }
        \begin{subfigure}[b]{0.32\textwidth}
        \resizebox{\linewidth}{0.8\linewidth}{
         % This file was created with tikzplotlib v0.10.1.
\begin{tikzpicture}

\definecolor{darkgray176}{RGB}{176,176,176}
\definecolor{lightgray204}{RGB}{204,204,204}
\definecolor{steelblue31119180}{RGB}{31,119,180}

\begin{axis}[
legend cell align={left},
legend style={
  font=\tiny,
  fill opacity=0.8,
  draw opacity=1,
  text opacity=1,
  at={(0.03,0.97)},
  anchor=north west,
  draw=lightgray204
},
% set width of axis box
scale only axis,
width=4cm,
height=3cm,
tick align=outside,
tick pos=left,
title={\fontsize{6}{6}\selectfont $N=1000\ \text{(PF)}$},
tick label style={font=\tiny},
% major tick length=\MajorTickLength,
every tick/.style={
black,
semithick,
},
x label style={at={(axis description cs:0.5,-0.1)},anchor=north,font=\tiny},
y label style={at={(axis description cs:-0.06,.5)},rotate=90,anchor=south,font=\tiny},
x grid style={darkgray176},
xlabel={index},
xmin=-4.95, xmax=103.95,
xtick style={color=black},
y grid style={darkgray176},
ylabel={$\nu(t, Z)$},
ymin=0, ymax=5,
ytick style={color=black}
]
\addplot [width=1pt, steelblue31119180, mark=*, mark size=0.5, mark options={solid}]
table {%
0 1.5803998708725
1 1.33550202846527
2 1.56831169128418
3 1.35660552978516
4 1.67359554767609
5 2.31658291816711
6 1.77639508247375
7 2.05915355682373
8 1.3549679517746
9 1.39965355396271
10 0.630053579807281
11 3.85166573524475
12 2.69694805145264
13 3.94658541679382
14 0.599072337150574
15 1.20898294448853
16 1.48052942752838
17 1.70718002319336
18 1.91135597229004
19 0.683144271373749
20 1.40443646907806
21 1.60692179203033
22 0.922011435031891
23 0.982581973075867
24 1.10471200942993
25 0.770239174365997
26 1.1936936378479
27 0.961292088031769
28 1.3913631439209
29 2.3709568977356
30 1.54486167430878
31 2.15229058265686
32 1.35555577278137
33 3.00449728965759
34 0.908643305301666
35 1.45915484428406
36 1.18871068954468
37 1.34153246879578
38 1.54231023788452
39 0.792442858219147
40 0.668309092521667
41 1.03730869293213
42 0.668369293212891
43 1.66741919517517
44 1.71565139293671
45 1.5300098657608
46 2.7632257938385
47 1.8025906085968
48 0.752088010311127
49 0.668880820274353
50 1.38437247276306
51 1.31735861301422
52 1.64243221282959
53 1.91407084465027
54 1.24209487438202
55 1.16776537895203
56 0.761925280094147
57 0.949998795986176
58 0.97191709280014
59 0.376778155565262
60 1.44549453258514
61 1.05457043647766
62 0.394052714109421
63 1.69112682342529
64 1.82295894622803
65 1.22094392776489
66 1.59055876731873
67 1.67814803123474
68 1.24474513530731
69 1.76935398578644
70 1.44282734394073
71 1.2813868522644
72 0.476826071739197
73 1.40479564666748
74 4.18656396865845
75 1.3455274105072
76 1.46364843845367
77 1.83773052692413
78 1.35060667991638
79 1.00683796405792
80 0.783329665660858
81 1.25577104091644
82 1.16983044147491
83 1.74855446815491
84 0.957872331142426
85 1.29508185386658
86 0.882132947444916
87 1.55416762828827
88 1.64668881893158
89 2.03826498985291
90 1.15617573261261
91 1.54844224452972
92 2.10270237922668
93 1.34900939464569
94 1.23071610927582
95 2.34977126121521
96 0.848920166492462
97 1.64094316959381
98 1.31309974193573
99 0.961935579776764
};
\addlegendentry{ground truth}
\addplot [width=1pt, red, opacity=0.7]
table {%
0 1.35992729663849
1 1.21821808815002
2 1.60440564155579
3 1.08843588829041
4 1.70284140110016
5 2.24704647064209
6 1.82116889953613
7 2.29662990570068
8 1.41985857486725
9 1.30038142204285
10 0.740548729896545
11 2.76282906532288
12 2.4078369140625
13 2.99106884002686
14 0.484234869480133
15 0.950772285461426
16 1.63555586338043
17 1.72286903858185
18 1.84918904304504
19 0.545770108699799
20 1.22970151901245
21 1.79207098484039
22 0.869961261749268
23 1.00067114830017
24 1.14098036289215
25 0.55354779958725
26 1.2200585603714
27 0.908354878425598
28 1.52795779705048
29 2.33165979385376
30 1.32208514213562
31 2.35832738876343
32 1.34974277019501
33 3.0761559009552
34 0.743091940879822
35 1.20128202438354
36 0.840166211128235
37 1.43583059310913
38 1.4594863653183
39 0.396053075790405
40 0.725504040718079
41 1.37858200073242
42 0.448578327894211
43 2.00592756271362
44 1.76618945598602
45 1.58055675029755
46 2.72026085853577
47 2.12745380401611
48 0.446991801261902
49 0.369144171476364
50 1.5640789270401
51 1.08488035202026
52 1.61692225933075
53 2.26110672950745
54 1.54559993743896
55 0.964318215847015
56 0.464856445789337
57 0.881598055362701
58 0.985526382923126
59 0.172752022743225
60 1.57314908504486
61 0.849768042564392
62 0.369219750165939
63 1.58153867721558
64 1.69545841217041
65 0.903922200202942
66 1.55092787742615
67 2.02639579772949
68 1.36438381671906
69 1.89355063438416
70 1.59132361412048
71 1.02900183200836
72 0.489230692386627
73 1.26681733131409
74 3.51573896408081
75 1.50808596611023
76 1.17921841144562
77 1.82505548000336
78 1.04785525798798
79 0.846673011779785
80 0.841594398021698
81 1.14877259731293
82 1.23843884468079
83 1.67801535129547
84 0.470785886049271
85 1.48722553253174
86 1.16082632541656
87 1.31765699386597
88 1.55303680896759
89 1.93100392818451
90 1.05674159526825
91 1.30409908294678
92 2.06405878067017
93 1.46753752231598
94 1.19331693649292
95 2.2450647354126
96 0.643195629119873
97 1.720827460289
98 1.11767327785492
99 0.816306352615356
};
\addlegendentry{estimated}
\end{axis}

\end{tikzpicture}
         }
        \end{subfigure}
        \begin{subfigure}[b]{0.32\textwidth}
        \resizebox{\linewidth}{0.8\linewidth}{
         % This file was created with tikzplotlib v0.10.1.
\begin{tikzpicture}

\definecolor{darkgray176}{RGB}{176,176,176}
\definecolor{lightgray204}{RGB}{204,204,204}
\definecolor{steelblue31119180}{RGB}{31,119,180}

\begin{axis}[
legend cell align={left},
legend style={
  font=\tiny,
  fill opacity=0.8,
  draw opacity=1,
  text opacity=1,
  at={(0.03,0.97)},
  anchor=north west,
  draw=lightgray204
},
% set width of axis box
scale only axis,
width=4cm,
height=3cm,
tick align=outside,
tick pos=left,
title={\fontsize{6}{6}\selectfont $N=5000\ \text{(PF)}$},
tick label style={font=\tiny},
% major tick length=\MajorTickLength,
every tick/.style={
black,
semithick,
},
x label style={at={(axis description cs:0.5,-0.1)},anchor=north,font=\tiny},
y label style={at={(axis description cs:-0.06,.5)},rotate=90,anchor=south,font=\tiny},
x grid style={darkgray176},
xlabel={index},
xmin=-4.95, xmax=103.95,
xtick style={color=black},
y grid style={darkgray176},
ylabel={$\nu(t, Z)$},
ymin=0, ymax=5,
ytick style={color=black}
]
\addplot [width=1pt, steelblue31119180, mark=*, mark size=0.5, mark options={solid}]
table {%
0 1.5803998708725
1 1.33550202846527
2 1.56831169128418
3 1.35660552978516
4 1.67359554767609
5 2.31658291816711
6 1.77639508247375
7 2.05915355682373
8 1.3549679517746
9 1.39965355396271
10 0.630053579807281
11 3.85166573524475
12 2.69694805145264
13 3.94658541679382
14 0.599072337150574
15 1.20898294448853
16 1.48052942752838
17 1.70718002319336
18 1.91135597229004
19 0.683144271373749
20 1.40443646907806
21 1.60692179203033
22 0.922011435031891
23 0.982581973075867
24 1.10471200942993
25 0.770239174365997
26 1.1936936378479
27 0.961292088031769
28 1.3913631439209
29 2.3709568977356
30 1.54486167430878
31 2.15229058265686
32 1.35555577278137
33 3.00449728965759
34 0.908643305301666
35 1.45915484428406
36 1.18871068954468
37 1.34153246879578
38 1.54231023788452
39 0.792442858219147
40 0.668309092521667
41 1.03730869293213
42 0.668369293212891
43 1.66741919517517
44 1.71565139293671
45 1.5300098657608
46 2.7632257938385
47 1.8025906085968
48 0.752088010311127
49 0.668880820274353
50 1.38437247276306
51 1.31735861301422
52 1.64243221282959
53 1.91407084465027
54 1.24209487438202
55 1.16776537895203
56 0.761925280094147
57 0.949998795986176
58 0.97191709280014
59 0.376778155565262
60 1.44549453258514
61 1.05457043647766
62 0.394052714109421
63 1.69112682342529
64 1.82295894622803
65 1.22094392776489
66 1.59055876731873
67 1.67814803123474
68 1.24474513530731
69 1.76935398578644
70 1.44282734394073
71 1.2813868522644
72 0.476826071739197
73 1.40479564666748
74 4.18656396865845
75 1.3455274105072
76 1.46364843845367
77 1.83773052692413
78 1.35060667991638
79 1.00683796405792
80 0.783329665660858
81 1.25577104091644
82 1.16983044147491
83 1.74855446815491
84 0.957872331142426
85 1.29508185386658
86 0.882132947444916
87 1.55416762828827
88 1.64668881893158
89 2.03826498985291
90 1.15617573261261
91 1.54844224452972
92 2.10270237922668
93 1.34900939464569
94 1.23071610927582
95 2.34977126121521
96 0.848920166492462
97 1.64094316959381
98 1.31309974193573
99 0.961935579776764
};
\addlegendentry{ground truth}
\addplot [width=1pt, red, opacity=0.7]
table {%
0 1.61169183254242
1 1.35346913337708
2 1.54914796352386
3 1.34038150310516
4 1.66806411743164
5 2.2279200553894
6 1.65838241577148
7 1.94123935699463
8 1.38878130912781
9 1.35380816459656
10 0.71142053604126
11 2.69603419303894
12 2.50882077217102
13 2.87483906745911
14 0.598593413829803
15 1.18356347084045
16 1.41311740875244
17 1.63503861427307
18 1.7953040599823
19 0.7613285779953
20 1.3535373210907
21 1.49374222755432
22 0.913654625415802
23 0.895472168922424
24 1.14922082424164
25 0.803572952747345
26 1.20976781845093
27 1.00814843177795
28 1.27285587787628
29 2.21610617637634
30 1.54459381103516
31 2.12943553924561
32 1.34946715831757
33 2.74569964408875
34 0.955883264541626
35 1.48588335514069
36 1.16549408435822
37 1.43464291095734
38 1.545201420784
39 0.932527363300323
40 0.728605031967163
41 1.08973908424377
42 0.65239542722702
43 1.6808670759201
44 1.71336078643799
45 1.50909209251404
46 2.7446711063385
47 1.73943829536438
48 0.741521000862122
49 0.648966312408447
50 1.48351633548737
51 1.3465850353241
52 1.60723805427551
53 1.84433579444885
54 1.21493029594421
55 1.16193449497223
56 0.770434498786926
57 0.933859527111053
58 0.962571322917938
59 0.342689037322998
60 1.58445060253143
61 1.11809909343719
62 0.46181508898735
63 1.64964354038239
64 1.87478852272034
65 1.13443422317505
66 1.62193441390991
67 1.70829379558563
68 1.20149397850037
69 1.81070482730865
70 1.44679987430573
71 1.29021203517914
72 0.57251238822937
73 1.42143797874451
74 3.23232507705688
75 1.29971385002136
76 1.4787825345993
77 1.78422367572784
78 1.34773564338684
79 0.989336371421814
80 0.810009896755219
81 1.22975671291351
82 1.18059277534485
83 1.76464641094208
84 1.09783184528351
85 1.39039480686188
86 0.940960645675659
87 1.59180998802185
88 1.61952221393585
89 2.05971932411194
90 1.40246105194092
91 1.59184682369232
92 2.04787969589233
93 1.39708077907562
94 1.205242395401
95 2.2276656627655
96 0.762388229370117
97 1.62482035160065
98 1.26078069210052
99 1.08035314083099
};
\addlegendentry{estimated}
\end{axis}

\end{tikzpicture}
        }
        \end{subfigure}
        \begin{subfigure}[b]{0.32\textwidth}
        \resizebox{\linewidth}{0.8\linewidth}{
         % This file was created with tikzplotlib v0.10.1.
\begin{tikzpicture}

\definecolor{darkgray176}{RGB}{176,176,176}
\definecolor{lightgray204}{RGB}{204,204,204}
\definecolor{steelblue31119180}{RGB}{31,119,180}

\begin{axis}[
legend cell align={left},
legend style={
  font=\tiny,
  fill opacity=0.8,
  draw opacity=1,
  text opacity=1,
  at={(0.03,0.97)},
  anchor=north west,
  draw=lightgray204
},
% set width of axis box
scale only axis,
width=4cm,
height=3cm,
tick align=outside,
tick pos=left,
title={\fontsize{6}{6}\selectfont $N=10000\ \text{(PF)}$},
tick label style={font=\tiny},
% major tick length=\MajorTickLength,
every tick/.style={
black,
semithick,
},
x label style={at={(axis description cs:0.5,-0.1)},anchor=north,font=\tiny},
y label style={at={(axis description cs:-0.06,.5)},rotate=90,anchor=south,font=\tiny},
x grid style={darkgray176},
xlabel={index},
xmin=-4.95, xmax=103.95,
xtick style={color=black},
y grid style={darkgray176},
ylabel={$\nu(t, Z)$},
ymin=0, ymax=5,
ytick style={color=black}
]
\addplot [width=1pt, steelblue31119180, mark=*, mark size=0.5, mark options={solid}]
table {%
0 1.5803998708725
1 1.33550202846527
2 1.56831169128418
3 1.35660552978516
4 1.67359554767609
5 2.31658291816711
6 1.77639508247375
7 2.05915355682373
8 1.3549679517746
9 1.39965355396271
10 0.630053579807281
11 3.85166573524475
12 2.69694805145264
13 3.94658541679382
14 0.599072337150574
15 1.20898294448853
16 1.48052942752838
17 1.70718002319336
18 1.91135597229004
19 0.683144271373749
20 1.40443646907806
21 1.60692179203033
22 0.922011435031891
23 0.982581973075867
24 1.10471200942993
25 0.770239174365997
26 1.1936936378479
27 0.961292088031769
28 1.3913631439209
29 2.3709568977356
30 1.54486167430878
31 2.15229058265686
32 1.35555577278137
33 3.00449728965759
34 0.908643305301666
35 1.45915484428406
36 1.18871068954468
37 1.34153246879578
38 1.54231023788452
39 0.792442858219147
40 0.668309092521667
41 1.03730869293213
42 0.668369293212891
43 1.66741919517517
44 1.71565139293671
45 1.5300098657608
46 2.7632257938385
47 1.8025906085968
48 0.752088010311127
49 0.668880820274353
50 1.38437247276306
51 1.31735861301422
52 1.64243221282959
53 1.91407084465027
54 1.24209487438202
55 1.16776537895203
56 0.761925280094147
57 0.949998795986176
58 0.97191709280014
59 0.376778155565262
60 1.44549453258514
61 1.05457043647766
62 0.394052714109421
63 1.69112682342529
64 1.82295894622803
65 1.22094392776489
66 1.59055876731873
67 1.67814803123474
68 1.24474513530731
69 1.76935398578644
70 1.44282734394073
71 1.2813868522644
72 0.476826071739197
73 1.40479564666748
74 4.18656396865845
75 1.3455274105072
76 1.46364843845367
77 1.83773052692413
78 1.35060667991638
79 1.00683796405792
80 0.783329665660858
81 1.25577104091644
82 1.16983044147491
83 1.74855446815491
84 0.957872331142426
85 1.29508185386658
86 0.882132947444916
87 1.55416762828827
88 1.64668881893158
89 2.03826498985291
90 1.15617573261261
91 1.54844224452972
92 2.10270237922668
93 1.34900939464569
94 1.23071610927582
95 2.34977126121521
96 0.848920166492462
97 1.64094316959381
98 1.31309974193573
99 0.961935579776764
};
\addlegendentry{ground truth}
\addplot [width=1pt, red, opacity=0.7]
table {%
0 1.63320815563202
1 1.26339435577393
2 1.49360692501068
3 1.37349796295166
4 1.5994473695755
5 2.19188642501831
6 1.77786326408386
7 2.03979229927063
8 1.26683855056763
9 1.44541573524475
10 0.618413627147675
11 4.16975736618042
12 2.58802628517151
13 4.06672239303589
14 0.616865992546082
15 1.19383478164673
16 1.41739630699158
17 1.67845940589905
18 1.93933284282684
19 0.748689949512482
20 1.41527378559113
21 1.61980819702148
22 0.957438468933105
23 1.01047730445862
24 0.990949928760529
25 0.825788140296936
26 1.13232183456421
27 0.856281042098999
28 1.43209671974182
29 2.36079573631287
30 1.57284951210022
31 2.08605074882507
32 1.32223844528198
33 2.92045259475708
34 0.844692766666412
35 1.47533512115479
36 1.26069629192352
37 1.30722749233246
38 1.44950520992279
39 0.795714974403381
40 0.713843584060669
41 1.03243792057037
42 0.671241581439972
43 1.60867714881897
44 1.66476702690125
45 1.46602940559387
46 2.70830225944519
47 1.69029664993286
48 0.802866458892822
49 0.74275940656662
50 1.28350901603699
51 1.37070202827454
52 1.64175605773926
53 1.87594199180603
54 1.17252945899963
55 1.08627676963806
56 0.791209399700165
57 0.863278567790985
58 0.91286563873291
59 0.532885193824768
60 1.40233778953552
61 0.963800728321075
62 0.493080377578735
63 1.65378534793854
64 1.6774275302887
65 1.24718856811523
66 1.51236867904663
67 1.57029032707214
68 1.15611672401428
69 1.73740696907043
70 1.3687196969986
71 1.30913519859314
72 0.617701232433319
73 1.32711005210876
74 4.26269912719727
75 1.25169432163239
76 1.48426330089569
77 1.79199182987213
78 1.36223995685577
79 1.01605319976807
80 0.737365245819092
81 1.25444006919861
82 1.12206625938416
83 1.60676050186157
84 1.0251852273941
85 1.27342820167542
86 0.87004429101944
87 1.51110410690308
88 1.58871257305145
89 1.94062924385071
90 1.09692752361298
91 1.48531925678253
92 1.98443162441254
93 1.3261901140213
94 1.22760415077209
95 2.21852374076843
96 0.925139725208282
97 1.54944908618927
98 1.33462047576904
99 0.964331924915314
};
\addlegendentry{estimated}
\end{axis}

\end{tikzpicture}
        }
        \end{subfigure}
        \begin{subfigure}[b]{0.32\textwidth}
        \resizebox{\linewidth}{0.8\linewidth}{
         % This file was created with tikzplotlib v0.10.1.
\begin{tikzpicture}

\definecolor{darkgray176}{RGB}{176,176,176}
\definecolor{lightgray204}{RGB}{204,204,204}
\definecolor{steelblue31119180}{RGB}{31,119,180}

\begin{axis}[
legend cell align={left},
legend style={
  font=\tiny,
  fill opacity=0.8,
  draw opacity=1,
  text opacity=1,
  at={(0.03,0.97)},
  anchor=north west,
  draw=lightgray204
},
% set width of axis box
scale only axis,
width=4cm,
height=3cm,
tick align=outside,
tick pos=left,
title={\fontsize{6}{6}\selectfont $N=1000\ \text{(FN)}$},
tick label style={font=\tiny},
% major tick length=\MajorTickLength,
every tick/.style={
black,
semithick,
},
x label style={at={(axis description cs:0.5,-0.1)},anchor=north,font=\tiny},
y label style={at={(axis description cs:-0.06,.5)},rotate=90,anchor=south,font=\tiny},
x grid style={darkgray176},
xlabel={index},
xmin=-4.95, xmax=103.95,
xtick style={color=black},
y grid style={darkgray176},
ylabel={$\nu(t, Z)$},
ymin=0, ymax=5,
ytick style={color=black}
]
\addplot [width=1pt, steelblue31119180, mark=*, mark size=0.5, mark options={solid}]
table {%
0 1.5803998708725
1 1.33550202846527
2 1.56831169128418
3 1.35660552978516
4 1.67359554767609
5 2.31658291816711
6 1.77639508247375
7 2.05915355682373
8 1.3549679517746
9 1.39965355396271
10 0.630053579807281
11 3.85166573524475
12 2.69694805145264
13 3.94658541679382
14 0.599072337150574
15 1.20898294448853
16 1.48052942752838
17 1.70718002319336
18 1.91135597229004
19 0.683144271373749
20 1.40443646907806
21 1.60692179203033
22 0.922011435031891
23 0.982581973075867
24 1.10471200942993
25 0.770239174365997
26 1.1936936378479
27 0.961292088031769
28 1.3913631439209
29 2.3709568977356
30 1.54486167430878
31 2.15229058265686
32 1.35555577278137
33 3.00449728965759
34 0.908643305301666
35 1.45915484428406
36 1.18871068954468
37 1.34153246879578
38 1.54231023788452
39 0.792442858219147
40 0.668309092521667
41 1.03730869293213
42 0.668369293212891
43 1.66741919517517
44 1.71565139293671
45 1.5300098657608
46 2.7632257938385
47 1.8025906085968
48 0.752088010311127
49 0.668880820274353
50 1.38437247276306
51 1.31735861301422
52 1.64243221282959
53 1.91407084465027
54 1.24209487438202
55 1.16776537895203
56 0.761925280094147
57 0.949998795986176
58 0.97191709280014
59 0.376778155565262
60 1.44549453258514
61 1.05457043647766
62 0.394052714109421
63 1.69112682342529
64 1.82295894622803
65 1.22094392776489
66 1.59055876731873
67 1.67814803123474
68 1.24474513530731
69 1.76935398578644
70 1.44282734394073
71 1.2813868522644
72 0.476826071739197
73 1.40479564666748
74 4.18656396865845
75 1.3455274105072
76 1.46364843845367
77 1.83773052692413
78 1.35060667991638
79 1.00683796405792
80 0.783329665660858
81 1.25577104091644
82 1.16983044147491
83 1.74855446815491
84 0.957872331142426
85 1.29508185386658
86 0.882132947444916
87 1.55416762828827
88 1.64668881893158
89 2.03826498985291
90 1.15617573261261
91 1.54844224452972
92 2.10270237922668
93 1.34900939464569
94 1.23071610927582
95 2.34977126121521
96 0.848920166492462
97 1.64094316959381
98 1.31309974193573
99 0.961935579776764
};
\addlegendentry{ground truth}
\addplot [width=1pt, red, opacity=0.7]
table {%
0 1.57982742786407
1 1.30859661102295
2 1.53048992156982
3 1.29412043094635
4 1.61064636707306
5 2.16443228721619
6 1.62435603141785
7 1.82374107837677
8 1.30134546756744
9 1.29593288898468
10 0.745382845401764
11 3.54573583602905
12 2.43141961097717
13 3.47103357315063
14 0.625240385532379
15 1.14267575740814
16 1.34909403324127
17 1.5699315071106
18 1.82498240470886
19 0.715458810329437
20 1.3080552816391
21 1.43951952457428
22 0.79462331533432
23 0.890769183635712
24 1.16482794284821
25 0.764600336551666
26 1.14263164997101
27 0.975181877613068
28 1.21395099163055
29 2.09771132469177
30 1.4419709444046
31 1.95716786384583
32 1.32032930850983
33 2.74131655693054
34 0.985166013240814
35 1.42472016811371
36 1.10726773738861
37 1.3594583272934
38 1.49681234359741
39 0.88981682062149
40 0.684394180774689
41 1.08833372592926
42 0.658737003803253
43 1.60764002799988
44 1.59029018878937
45 1.51202619075775
46 2.49796986579895
47 1.713094830513
48 0.727181792259216
49 0.646669149398804
50 1.37640297412872
51 1.27958738803864
52 1.48917543888092
53 1.7244234085083
54 1.23255217075348
55 1.2249094247818
56 0.77898758649826
57 0.960130095481873
58 0.982890069484711
59 0.404571026563644
60 1.41737449169159
61 1.05930805206299
62 0.61262708902359
63 1.57019889354706
64 1.76102721691132
65 1.10762917995453
66 1.53148448467255
67 1.62229585647583
68 1.2098308801651
69 1.60655915737152
70 1.37344086170197
71 1.2446802854538
72 0.565797746181488
73 1.37388598918915
74 3.70939302444458
75 1.2756404876709
76 1.40222501754761
77 1.87154448032379
78 1.27328062057495
79 0.945150971412659
80 0.885943174362183
81 1.19008195400238
82 1.26469016075134
83 1.78513264656067
84 0.984847724437714
85 1.3425635099411
86 0.950227022171021
87 1.44857180118561
88 1.55309212207794
89 1.86455678939819
90 1.22537040710449
91 1.49142944812775
92 1.95692610740662
93 1.36952233314514
94 1.18069732189178
95 2.15171909332275
96 0.756003797054291
97 1.5149085521698
98 1.2169269323349
99 0.965559542179108
};
\addlegendentry{estimated}
\end{axis}

\end{tikzpicture}
        }
        \end{subfigure}
        \begin{subfigure}[b]{0.32\textwidth}
        \resizebox{\linewidth}{0.8\linewidth}{
         % This file was created with tikzplotlib v0.10.1.
\begin{tikzpicture}

\definecolor{darkgray176}{RGB}{176,176,176}
\definecolor{lightgray204}{RGB}{204,204,204}
\definecolor{steelblue31119180}{RGB}{31,119,180}

\begin{axis}[
legend cell align={left},
legend style={
  font=\tiny,
  fill opacity=0.8,
  draw opacity=1,
  text opacity=1,
  at={(0.03,0.97)},
  anchor=north west,
  draw=lightgray204
},
% set width of axis box
scale only axis,
width=4cm,
height=3cm,
tick align=outside,
tick pos=left,
title={\fontsize{6}{6}\selectfont $N=5000\ \text{(FN)}$},
tick label style={font=\tiny},
% major tick length=\MajorTickLength,
every tick/.style={
black,
semithick,
},
x label style={at={(axis description cs:0.5,-0.1)},anchor=north,font=\tiny},
y label style={at={(axis description cs:-0.06,.5)},rotate=90,anchor=south,font=\tiny},
x grid style={darkgray176},
xlabel={index},
xmin=-4.95, xmax=103.95,
xtick style={color=black},
y grid style={darkgray176},
ylabel={$\nu(t, Z)$},
ymin=0, ymax=5,
ytick style={color=black}
]
\addplot [width=1pt, steelblue31119180, mark=*, mark size=0.5, mark options={solid}]
table {%
0 1.5803998708725
1 1.33550202846527
2 1.56831169128418
3 1.35660552978516
4 1.67359554767609
5 2.31658291816711
6 1.77639508247375
7 2.05915355682373
8 1.3549679517746
9 1.39965355396271
10 0.630053579807281
11 3.85166573524475
12 2.69694805145264
13 3.94658541679382
14 0.599072337150574
15 1.20898294448853
16 1.48052942752838
17 1.70718002319336
18 1.91135597229004
19 0.683144271373749
20 1.40443646907806
21 1.60692179203033
22 0.922011435031891
23 0.982581973075867
24 1.10471200942993
25 0.770239174365997
26 1.1936936378479
27 0.961292088031769
28 1.3913631439209
29 2.3709568977356
30 1.54486167430878
31 2.15229058265686
32 1.35555577278137
33 3.00449728965759
34 0.908643305301666
35 1.45915484428406
36 1.18871068954468
37 1.34153246879578
38 1.54231023788452
39 0.792442858219147
40 0.668309092521667
41 1.03730869293213
42 0.668369293212891
43 1.66741919517517
44 1.71565139293671
45 1.5300098657608
46 2.7632257938385
47 1.8025906085968
48 0.752088010311127
49 0.668880820274353
50 1.38437247276306
51 1.31735861301422
52 1.64243221282959
53 1.91407084465027
54 1.24209487438202
55 1.16776537895203
56 0.761925280094147
57 0.949998795986176
58 0.97191709280014
59 0.376778155565262
60 1.44549453258514
61 1.05457043647766
62 0.394052714109421
63 1.69112682342529
64 1.82295894622803
65 1.22094392776489
66 1.59055876731873
67 1.67814803123474
68 1.24474513530731
69 1.76935398578644
70 1.44282734394073
71 1.2813868522644
72 0.476826071739197
73 1.40479564666748
74 4.18656396865845
75 1.3455274105072
76 1.46364843845367
77 1.83773052692413
78 1.35060667991638
79 1.00683796405792
80 0.783329665660858
81 1.25577104091644
82 1.16983044147491
83 1.74855446815491
84 0.957872331142426
85 1.29508185386658
86 0.882132947444916
87 1.55416762828827
88 1.64668881893158
89 2.03826498985291
90 1.15617573261261
91 1.54844224452972
92 2.10270237922668
93 1.34900939464569
94 1.23071610927582
95 2.34977126121521
96 0.848920166492462
97 1.64094316959381
98 1.31309974193573
99 0.961935579776764
};
\addlegendentry{ground truth}
\addplot [width=1pt, red, opacity=0.7]
table {%
0 1.57982742786407
1 1.30859661102295
2 1.53048992156982
3 1.29412043094635
4 1.61064636707306
5 2.16443228721619
6 1.62435603141785
7 1.82374107837677
8 1.30134546756744
9 1.29593288898468
10 0.745382845401764
11 3.54573583602905
12 2.43141961097717
13 3.47103357315063
14 0.625240385532379
15 1.14267575740814
16 1.34909403324127
17 1.5699315071106
18 1.82498240470886
19 0.715458810329437
20 1.3080552816391
21 1.43951952457428
22 0.79462331533432
23 0.890769183635712
24 1.16482794284821
25 0.764600336551666
26 1.14263164997101
27 0.975181877613068
28 1.21395099163055
29 2.09771132469177
30 1.4419709444046
31 1.95716786384583
32 1.32032930850983
33 2.74131655693054
34 0.985166013240814
35 1.42472016811371
36 1.10726773738861
37 1.3594583272934
38 1.49681234359741
39 0.88981682062149
40 0.684394180774689
41 1.08833372592926
42 0.658737003803253
43 1.60764002799988
44 1.59029018878937
45 1.51202619075775
46 2.49796986579895
47 1.713094830513
48 0.727181792259216
49 0.646669149398804
50 1.37640297412872
51 1.27958738803864
52 1.48917543888092
53 1.7244234085083
54 1.23255217075348
55 1.2249094247818
56 0.77898758649826
57 0.960130095481873
58 0.982890069484711
59 0.404571026563644
60 1.41737449169159
61 1.05930805206299
62 0.61262708902359
63 1.57019889354706
64 1.76102721691132
65 1.10762917995453
66 1.53148448467255
67 1.62229585647583
68 1.2098308801651
69 1.60655915737152
70 1.37344086170197
71 1.2446802854538
72 0.565797746181488
73 1.37388598918915
74 3.70939302444458
75 1.2756404876709
76 1.40222501754761
77 1.87154448032379
78 1.27328062057495
79 0.945150971412659
80 0.885943174362183
81 1.19008195400238
82 1.26469016075134
83 1.78513264656067
84 0.984847724437714
85 1.3425635099411
86 0.950227022171021
87 1.44857180118561
88 1.55309212207794
89 1.86455678939819
90 1.22537040710449
91 1.49142944812775
92 1.95692610740662
93 1.36952233314514
94 1.18069732189178
95 2.15171909332275
96 0.756003797054291
97 1.5149085521698
98 1.2169269323349
99 0.965559542179108
};
\addlegendentry{estimated}
\end{axis}

\end{tikzpicture}
        }
        \end{subfigure}
        \begin{subfigure}[b]{0.32\textwidth}
        \resizebox{\linewidth}{0.8\linewidth}{
         % This file was created with tikzplotlib v0.10.1.
\begin{tikzpicture}

\definecolor{darkgray176}{RGB}{176,176,176}
\definecolor{lightgray204}{RGB}{204,204,204}
\definecolor{steelblue31119180}{RGB}{31,119,180}

\begin{axis}[
legend cell align={left},
legend style={
  font=\tiny,
  fill opacity=0.8,
  draw opacity=1,
  text opacity=1,
  at={(0.03,0.97)},
  anchor=north west,
  draw=lightgray204
},
% set width of axis box
scale only axis,
width=4cm,
height=3cm,
tick align=outside,
tick pos=left,
title={\fontsize{6}{6}\selectfont $N=10000\ \text{(FN)}$},
tick label style={font=\tiny},
% major tick length=\MajorTickLength,
every tick/.style={
black,
semithick,
},
x label style={at={(axis description cs:0.5,-0.1)},anchor=north,font=\tiny},
y label style={at={(axis description cs:-0.06,.5)},rotate=90,anchor=south,font=\tiny},
x grid style={darkgray176},
xlabel={index},
xmin=-4.95, xmax=103.95,
xtick style={color=black},
y grid style={darkgray176},
ylabel={$\nu(t, Z)$},
ymin=0, ymax=5,
ytick style={color=black}
]
\addplot [width=1pt, steelblue31119180, mark=*, mark size=0.5, mark options={solid}]
table {%
0 1.5803998708725
1 1.33550202846527
2 1.56831169128418
3 1.35660552978516
4 1.67359554767609
5 2.31658291816711
6 1.77639508247375
7 2.05915355682373
8 1.3549679517746
9 1.39965355396271
10 0.630053579807281
11 3.85166573524475
12 2.69694805145264
13 3.94658541679382
14 0.599072337150574
15 1.20898294448853
16 1.48052942752838
17 1.70718002319336
18 1.91135597229004
19 0.683144271373749
20 1.40443646907806
21 1.60692179203033
22 0.922011435031891
23 0.982581973075867
24 1.10471200942993
25 0.770239174365997
26 1.1936936378479
27 0.961292088031769
28 1.3913631439209
29 2.3709568977356
30 1.54486167430878
31 2.15229058265686
32 1.35555577278137
33 3.00449728965759
34 0.908643305301666
35 1.45915484428406
36 1.18871068954468
37 1.34153246879578
38 1.54231023788452
39 0.792442858219147
40 0.668309092521667
41 1.03730869293213
42 0.668369293212891
43 1.66741919517517
44 1.71565139293671
45 1.5300098657608
46 2.7632257938385
47 1.8025906085968
48 0.752088010311127
49 0.668880820274353
50 1.38437247276306
51 1.31735861301422
52 1.64243221282959
53 1.91407084465027
54 1.24209487438202
55 1.16776537895203
56 0.761925280094147
57 0.949998795986176
58 0.97191709280014
59 0.376778155565262
60 1.44549453258514
61 1.05457043647766
62 0.394052714109421
63 1.69112682342529
64 1.82295894622803
65 1.22094392776489
66 1.59055876731873
67 1.67814803123474
68 1.24474513530731
69 1.76935398578644
70 1.44282734394073
71 1.2813868522644
72 0.476826071739197
73 1.40479564666748
74 4.18656396865845
75 1.3455274105072
76 1.46364843845367
77 1.83773052692413
78 1.35060667991638
79 1.00683796405792
80 0.783329665660858
81 1.25577104091644
82 1.16983044147491
83 1.74855446815491
84 0.957872331142426
85 1.29508185386658
86 0.882132947444916
87 1.55416762828827
88 1.64668881893158
89 2.03826498985291
90 1.15617573261261
91 1.54844224452972
92 2.10270237922668
93 1.34900939464569
94 1.23071610927582
95 2.34977126121521
96 0.848920166492462
97 1.64094316959381
98 1.31309974193573
99 0.961935579776764
};
\addlegendentry{ground truth}
\addplot [width=1pt, red, opacity=0.7]
table {%
0 1.63320815563202
1 1.26339435577393
2 1.49360692501068
3 1.37349796295166
4 1.5994473695755
5 2.19188642501831
6 1.77786326408386
7 2.03979229927063
8 1.26683855056763
9 1.44541573524475
10 0.618413627147675
11 4.16975736618042
12 2.58802628517151
13 4.06672239303589
14 0.616865992546082
15 1.19383478164673
16 1.41739630699158
17 1.67845940589905
18 1.93933284282684
19 0.748689949512482
20 1.41527378559113
21 1.61980819702148
22 0.957438468933105
23 1.01047730445862
24 0.990949928760529
25 0.825788140296936
26 1.13232183456421
27 0.856281042098999
28 1.43209671974182
29 2.36079573631287
30 1.57284951210022
31 2.08605074882507
32 1.32223844528198
33 2.92045259475708
34 0.844692766666412
35 1.47533512115479
36 1.26069629192352
37 1.30722749233246
38 1.44950520992279
39 0.795714974403381
40 0.713843584060669
41 1.03243792057037
42 0.671241581439972
43 1.60867714881897
44 1.66476702690125
45 1.46602940559387
46 2.70830225944519
47 1.69029664993286
48 0.802866458892822
49 0.74275940656662
50 1.28350901603699
51 1.37070202827454
52 1.64175605773926
53 1.87594199180603
54 1.17252945899963
55 1.08627676963806
56 0.791209399700165
57 0.863278567790985
58 0.91286563873291
59 0.532885193824768
60 1.40233778953552
61 0.963800728321075
62 0.493080377578735
63 1.65378534793854
64 1.6774275302887
65 1.24718856811523
66 1.51236867904663
67 1.57029032707214
68 1.15611672401428
69 1.73740696907043
70 1.3687196969986
71 1.30913519859314
72 0.617701232433319
73 1.32711005210876
74 4.26269912719727
75 1.25169432163239
76 1.48426330089569
77 1.79199182987213
78 1.36223995685577
79 1.01605319976807
80 0.737365245819092
81 1.25444006919861
82 1.12206625938416
83 1.60676050186157
84 1.0251852273941
85 1.27342820167542
86 0.87004429101944
87 1.51110410690308
88 1.58871257305145
89 1.94062924385071
90 1.09692752361298
91 1.48531925678253
92 1.98443162441254
93 1.3261901140213
94 1.22760415077209
95 2.21852374076843
96 0.925139725208282
97 1.54944908618927
98 1.33462047576904
99 0.964331924915314
};
\addlegendentry{estimated}
\end{axis}

\end{tikzpicture}
        }
        \end{subfigure}
        % \caption{Visualizations of synthetic data results under the PF scheme of NFM framework. The plots in the first row compare the empirical estimates of the nonparametric component $m(Z)$ in \eqref{eqn: proportional_frailty} to its oracle value over $100$ test points. The second row uses the same set of test points and compares the estimates and true values of $\nu(t, Z)$ in \eqref{eqn: frailty_general}. The horizontal axis of each plot stands for the indices of the test points.}
        \caption{Visualizations of synthetic data results under the NFM framework. The plots in the first row compare the empirical estimates of the nonparametric component $\nu(t, Z)$ against its true value evaluated on $100$ hold-out points, under the PF scheme. The plots in the second row are obtained using the FN scheme, with analogous semantics to the first row.}
        \label{fig: synthesis_log_hazard}
    \end{figure}
    
    \subsection{Real-world data experiments}\label{sec: benchmark_data}
    To answer RQ$2$, we conduct extensive empirical assessments over $6$ benchmark datasets, comprising five survival datasets and one non-survival dataset. The survival datasets include the Molecular Taxonomy of Breast Cancer International Consortium (METABRIC) \cite{curtis2012metabric}, the Rotterdam tumor bank and German Breast Cancer Study Group (RotGBSG)\cite{knaus1995support}, the Assay Of Serum Free Light Chain (FLCHAIN) \cite{dispenzieri2012flchain}, the Study to Understand Prognoses Preferences Outcomes and Risks of Treatment (SUPPORT) \cite{knaus1995support}, and the Medical Information Mart for Intensive Care (MIMIC-III) \cite{johnson2016mimic}. For all the survival datasets, the event of interest is defined as the mortality after admission. In our experiments, we view METABRIC, RotGBSG, FLCHAIN, and SUPPORT as small-scale datasets and MIMIC-III as a moderate-scale dataset. We additionally use the KKBOX dataset \cite{kvamme2019time} as a large-scale evaluation. In this dataset, an event time is observed if a customer churns from the KKBOX platform. We summarize the basic statistics of all the datasets in table \ref{tab: datasets}.\par
    \textbf{Baselines} We compare NFM with $12$ baselines. The first one is linear \ph model \cite{cox1972regression}. Gradient Boosting Machine (GBM) \cite{friedman2001greedy, chen2016xgboost} and Random Survival Forests (RSF) \cite{ishwaran2008random} are two tree-based nonparametric survival regression methods. DeepSurv \cite{katzman2018deepsurv} and CoxTime \cite{kvamme2019time} are two models that adopt neural variants of partial likelihood as objectives. SuMo-net \cite{rindt2022a} is a neural variant of NHR. We additionally chose six latest state-of-the-art neural survival models: DeepHit \cite{lee2018deephit}, SurvNode \cite{groha2020general}, DeepEH \cite{zhong2021deep}, DCM \cite{nagpal2021deep}, DeSurv \cite{danks2022derivative} and SODEN \cite{tang2022soden}. Among the chosen baselines, DeepSurv and SuMo-net are viewed as implementations of neural \ph and neural NHR and are therefore of particular interest for the empirical verification of the efficacy of frailty. \par
    \begin{table}[]
        \centering
        \caption{Survival prediction results measured in IBS and INBLL metric (\%) on four small-scale survival datasets. In each column, the \textbf{boldfaced} score denotes the best result and the \underline{underlined} score represents the second-best result (both in mean).}
        \resizebox{\textwidth}{!}{%
            \begin{tabular}{l c c c c c c c c}
     \toprule
     Model & \multicolumn{2}{c}{METABRIC} & \multicolumn{2}{c}{RotGBSG}  &
     \multicolumn{2}{c}{FLCHAIN} & \multicolumn{2}{c}{SUPPORT} \\
     \cmidrule{2-9}
           & IBS & INBLL & IBS & INBLL & IBS & INBLL & IBS & INBLL \\
     \midrule
     \ph   & \result{16.46}{0.90} & \result{49.57}{2.66} & \result{18.25}{0.44} & \result{53.76}{1.11} & \result{10.05}{0.38} & \result{33.18}{1.16} & \result{20.54}{0.38} & \result{59.58}{0.86} \\
     GBM & \result{16.61}{0.82} & \result{49.87}{2.44} & \result{17.83}{0.44} & \result{52.78}{1.11} & \results{9.98}{0.37} & \results{32.88}{1.05} & \result{19.18}{0.39} & \result{56.46}{0.10} \\
     RSF & \result{16.62}{0.64} & \result{49.61}{1.54} & \result{17.89}{0.42} & \result{52.77}{1.01} & \resultf{9.96}{0.37} & \result{32.92}{1.05} & \results{19.11}{0.40} & \results{56.28}{1.00} \\
     DeepSurv & \result{16.55}{0.93} & \result{49.85}{3.02} & \result{17.80}{0.49} & \result{52.62}{1.25} & \result{10.09}{0.38} & \result{33.28}{1.15} & \result{19.20}{0.41} & \result{56.48}{1.08} \\
     CoxTime  & \result{16.54}{0.83} & \result{49.67}{2.67} & \result{17.80}{0.58} & \result{52.56}{1.47} & \result{10.28}{0.45} & \result{34.18}{1.53} & \result{19.17}{0.40} & \result{56.45}{1.10} \\
     DeepHit  & \result{17.50}{0.83} & \result{52.10}{2.16} & \result{19.61}{0.38} & \result{56.67}{1.10} & \result{11.83}{0.39} & \result{37.72}{1.02} & \result{20.66}{0.32} & \result{60.06}{0.72} \\
     DeepEH & \result{16.56}{0.65} & \result{49.42}{1.53} & \result{17.62}{0.52} & \results{52.08}{1.27} & \result{10.11}{0.37} & \result{33.30}{1.10} & \result{19.30}{0.39} & \result{56.67}{0.94} \\
     SuMo-net & \result{16.49}{0.83} & \result{49.74}{2.21} & \result{17.77}{0.47} & \result{52.62}{1.11} & \result{10.07}{0.40} & \result{33.20}{1.10} & \result{19.40}{0.38} & \result{56.87}{0.96} \\
     SODEN & \result{16.52}{0.63} & \result{49.39}{1.97} & \resultf{17.05}{0.63} & \resultf{50.45}{1.97} & \result{10.13}{0.24} & \result{33.37}{0.57} & \result{19.07}{0.50} & \result{56.15}{1.35} \\
     SurvNode & \result{16.67}{1.32} & \result{49.73}{3.89} & \result{17.42}{0.53} & \result{51.70}{1.16} & \result{10.40}{0.29} & \result{34.37}{1.03} & \result{19.58}{0.34} & \result{57.49}{0.84} \\
     DCM & \result{16.58}{0.87} & \result{49.48}{2.23} & \result{17.66}{0.54} & \result{52.26}{1.23} & \result{10.13}{0.50} & \result{33.40}{1.38} & \result{19.29}{0.42} & \result{56.68}{1.09} \\
     DeSurv & \result{16.71}{0.75} & \result{49.61}{2.15} & \result{17.98}{0.46} & \result{53.23}{1.15} & \result{10.06}{0.62} & \result{33.18}{1.93} & \result{19.50}{0.40} & \result{57.28}{0.89} \\
     \midrule
     \textbf{NFM-PF} & \results{16.33}{0.75} & \results{49.07}{1.96} & \results{17.60}{0.55} & \result{52.12}{1.34} & \resultf{9.96}{0.39} & \resultf{32.84}{1.15} &  \result{19.14}{0.39} & \result{56.35}{1.00} \\
     \textbf{NFM-FN} & \resultf{16.11}{0.81} & \resultf{48.21}{2.04} & \result{17.66}{0.52} & \result{52.41}{1.22} & \result{10.05}{0.39} & \result{33.11}{1.10} & \resultf{18.97}{0.60} & \resultf{55.87}{1.50} \\
     \bottomrule
\end{tabular}
        }
        \label{tab: survival_results}
    \end{table}
    \begin{table}[]
        \centering
        \small
        \caption{Survival prediction results measured in IBS and INBLL metric (\%) on two larger datasets. In each column, the \textbf{boldfaced} score denotes the best result and the \underline{underlined} score represents the second-best result (both in mean). Two models are not reported, namely SODEN and DeepEH, as we found empirically that their computational/memory cost is significantly worse than the rest, and we fail to obtain reasonable performances over the two datasets for these two models.}
        % \resizebox{0.6\textwidth}{!}{%
            \begin{tabular}{l c c c c}
    \toprule
    Model           & \multicolumn{2}{c}{MIMIC-III} & \multicolumn{2}{c}{KKBOX}\\
    \cmidrule{2-5}
                    & IBS               & INBLL & IBS               & INBLL \\
    \midrule
    \ph             & \result{20.40}{0.00} & \result{60.02}{0.00} & \result{12.60}{0.00}    & \result{39.40}{0.00} \\
    GBM             & \result{17.70}{0.00} & \result{52.30}{0.00} & \result{11.81}{0.00}    & \result{38.15}{0.00} \\
    RSF             & \result{17.79}{0.19} & \result{53.34}{0.41} & \result{14.46}{0.00} & \result{44.39}{0.00} \\
    DeepSurv        & \result{18.58}{0.92} & \result{55.98}{2.43} & \result{11.31}{0.05} & \result{35.28}{0.15}\\
    CoxTime         & \result{17.68}{1.36} & \result{52.08}{3.06} & \results{10.70}{0.06} & \results{33.10}{0.21}\\
    DeepHit         & \result{19.80}{1.31} & \result{59.03}{4.20} & \result{16.00}{0.34} & \result{48.64}{1.04}\\
    SuMo-net        & \result{18.62}{1.23} & \result{54.51}{2.97} & \result{11.58}{0.11} & \result{36.61}{0.28} \\
    DCM             & \result{18.02}{0.49} & \result{52.83}{0.94} & \result{10.71}{0.11} & \result{33.24}{0.06} \\
    DeSurv          & \result{18.19}{0.65} & \result{54.69}{2.83} & \result{10.77}{0.21} & \result{33.22}{0.10} \\
    \midrule
    \textbf{NFM-PF} & \resultf{16.28}{0.36} & \resultf{49.18}{0.92} & \result{11.02}{0.11}    & \result{35.10}{0.22} \\
    \textbf{NFM-FN} & \results{17.47}{0.45} & \results{51.48}{1.23} & \resultf{10.63}{0.08}    & \resultf{32.81}{0.14} \\
    \bottomrule
\end{tabular}
        % }
        \label{tab: kkbox}
    \end{table}
    \textbf{Evaluation strategy} We use two standard metrics in survival predictions for evaluating model performance: integrated Brier score (IBS) and integrated negative binomial log-likelihood (INBLL). Both metrics are derived from the following:
    \begin{align}
        \begin{aligned}
            \mathcal{S}(\ell, t_1, t_2) =\int_{t_2}^{t_1}\dfrac{1}{n}\sum_{i=1}^n \left[\dfrac{\ell(0, \widehat{S}(t|Z_i)) I(T_i \le t, \delta_i = 1)}{\widehat{S}_C(T_i)} + \dfrac{\ell(1, \widehat{S}(t|Z_i))I(T_i > t)}{\widehat{S}_C(t)}\right] dt.
        \end{aligned}
    \end{align}
    Where $\widehat{S}_C(t)$ is an estimate of the survival function $S_C(t)$ of the censoring variable, obtained by the Kaplan-Meier estimate \cite{kaplan1958nonparametric} of the censored observations on the test data. $\ell: \{0, 1\} \times [0, 1] \mapsto \mathbb{R}^+$ is some proper loss function for binary classification \cite{gneiting2007strictly}. The IBS metric corresponds to $\ell$ being the square loss, and the INBLL metric corresponds to $\ell$ being the negative binomial (Bernoulli) log-likelihood \cite{graf1999assessment}. Both IBS and INBLL are proper scoring rules if the censoring times and survival times are independent. 
    \footnote{Otherwise, one may pose a covariate-dependent model on the censoring time and use $\widehat{S}_C(t|Z)$ instead of $\widehat{S}_C(t)$. We adopt the Kaplan-Meier approach since it's still the prevailing practice in evaluations of survival predictions.} We additionally report the result of another widely used metric, the concordance index (C-index), in appendix \ref{sec: additional_experiments}.
    Since all the survival datasets do not have standard train/test splits, we follow previous practice \cite{zhong2021deep} that uses $5$-fold cross-validation (CV): $1$ fold is for testing, and $20\%$ of the rest is held out for validation. In our experiments, we observed that a single random split into $5$ folds does not produce stable results for most survival datasets. Therefore we perform $10$ different CV runs for each survival dataset and report average metrics as well as their standard deviations. For the KKBOX dataset, we use the standard train/valid/test splits that are available via the \texttt{pycox} package \cite{kvamme2019time} and report results based on $10$ trial runs. \par
    \textbf{Experimental setup} We follow standard preprocessing strategies \cite{katzman2018deepsurv, kvamme2019time, zhong2021deep} that standardize continuous features into zero mean and unit variance, and do one-hot encodings for all categorical features. 
    We adopt MLP with ReLU activation for all function approximators, including $\widehat{h}$, $\widehat{m}$ in PF scheme, and $\widehat{\nu}$ in FN scheme, across all datasets, with the number of layers (depth) and the number of hidden units (width) within each layer being tunable. We tune the frailty transform over several standard choices: gamma frailty, Box-Cox transformation frailty and $\text{IGG}(\alpha)$ frailty, with their precise forms detailed in appendix \ref{sec: public_data_details}. 
    % Across all the datasets, we train $100$ epochs with tunable batch size using the Adam \cite{kingma2014adam} optimizer with tunable learning rates and weight decay coefficient. An early stopping strategy is applied according to the loss over validation data. 
    A more detailed description of the tuning procedure, as well as training configurations for baseline models, are reported in appendix \ref{sec: public_data_details}.\par
    \textbf{Results} We report experimental results of small-scale datasets in table \ref{tab: survival_results}, and results of two larger datasets in table \ref{tab: kkbox}. The proposed NFM framework achieves competitive performance which is comparable to the other state-of-the-art models. In particular, NFM attains best performance in mean on $5$ of the $6$ datasets, and is statistically significantly better over all the baselines at $0.05$ empirical level on the MIMIC-III dataset. \par
    % Moreover, in comparison to their non-frailty counterparts (DeepSurv for PF scheme and SuMo for FN scheme), the corresponding NFM models exhibit significantly increases performance, thereby verifying the efficacy of the frailty formulation.\
    \textbf{Ablation on the benefits of frailty} to better understand the additional benefits of introducing the frailty formulation, we compute the (relative) performance gain of NFM-PF and NFM-FN, against their non-frailty counterparts, namely DeepSurv \cite{katzman2018deepsurv} and SuMo-net \cite{rindt2022a}. The evaluation is conducted for all three metrics mentioned in this paper. The results are shown in table \ref{tab: frailty_benefits}. The results suggest a solid improvement in incorporating frailty, as the relative increase in performance could be over $10\%$ for both NFM models. A more detailed discussion is presented in section \ref{sec: benefits_frailty}.
    % \vspace{-1mm}
    \section{Discussion and conclusion}
    % \vspace{-1mm}
    % In this paper, we make principled explorations on applying the idea of frailty models in modern survival analysis to neural survival regressions. A flexible and scalable framework called NFM is proposed that includes many useful survival models as special cases. Under the framework, we study two derived model architectures both theoretically and empirically. Theoretically, we obtain the rates of convergences of the nonparametric function estimators based on neural function approximation. Empirically, we demonstrate the superior predictive performance of the proposed models by evaluating several benchmark datasets. 
    We have introduced NFM as a flexible and powerful neural modeling framework for survival analysis, which is shown to be both statistically correct in theory, and empirically effective in predictive tasks. While our proposed framework provides a theoretically-principled tool of neural survival modeling, a few limitations and challenges need to be addressed in future works including predictive guarantees and better evaluation protocols, which we elaborate in appendix \ref{sec: limitations}. 
    \section{Acknowledgements}
    We would like to thank professor Zhiliang Ying and professor Guanhua Fang for helpful discussions. Wen Yu's research is supported by the National Natural Science Foundation of China Grants ($12071088$). Ming Zheng's research is supported by the National Natural Science Foundation of China Grants ($12271106$).
    \bibliographystyle{abbrv}
    \bibliography{transformation}

\begin{thebibliography}{10}

\bibitem{antolini2005time}
L.~Antolini, P.~Boracchi, and E.~Biganzoli.
\newblock A time-dependent discrimination index for survival data.
\newblock {\em Statistics in medicine}, 24(24):3927--3944, 2005.

\bibitem{bartlett2019nearly}
P.~L. Bartlett, N.~Harvey, C.~Liaw, and A.~Mehrabian.
\newblock Nearly-tight vc-dimension and pseudodimension bounds for piecewise
  linear neural networks.
\newblock {\em The Journal of Machine Learning Research}, 20(1):2285--2301,
  2019.

\bibitem{bennett1983analysis}
S.~Bennett.
\newblock Analysis of survival data by the proportional odds model.
\newblock {\em Statistics in medicine}, 2(2):273--277, 1983.

\bibitem{bickel1993efficient}
P.~J. Bickel, C.~A. Klaassen, Y.~Ritov, J.~Klaassen, J.~A. Wellner, and
  Y.~Ritov.
\newblock {\em Efficient and adaptive estimation for semiparametric models},
  volume~4.
\newblock Springer, 1993.

\bibitem{boyd2001chebyshev}
J.~P. Boyd.
\newblock {\em Chebyshev and Fourier spectral methods}.
\newblock Courier Corporation, 2001.

\bibitem{brennan2002gene}
P.~Brennan.
\newblock Gene--environment interaction and aetiology of cancer: what does it
  mean and how can we measure it?
\newblock {\em Carcinogenesis}, 23(3):381--387, 2002.

\bibitem{buckley1979linear}
J.~Buckley and I.~James.
\newblock Linear regression with censored data.
\newblock {\em Biometrika}, 66(3):429--436, 1979.

\bibitem{cai2007partially}
J.~Cai, J.~Fan, J.~Jiang, and H.~Zhou.
\newblock Partially linear hazard regression for multivariate survival data.
\newblock {\em Journal of the American Statistical Association},
  102(478):538--551, 2007.

\bibitem{cai2008partially}
J.~Cai, J.~Fan, J.~Jiang, and H.~Zhou.
\newblock Partially linear hazard regression with varying coefficients for
  multivariate survival data.
\newblock {\em Journal of the Royal Statistical Society: Series B (Statistical
  Methodology)}, 70(1):141--158, 2008.

\bibitem{chen2016xgboost}
T.~Chen and C.~Guestrin.
\newblock Xgboost: A scalable tree boosting system.
\newblock In {\em Proceedings of the 22nd acm sigkdd international conference
  on knowledge discovery and data mining}, pages 785--794, 2016.

\bibitem{chen2007large}
X.~Chen.
\newblock Large sample sieve estimation of semi-nonparametric models.
\newblock {\em Handbook of econometrics}, 6:5549--5632, 2007.

\bibitem{chen1998sieve}
X.~Chen and X.~Shen.
\newblock Sieve extremum estimates for weakly dependent data.
\newblock {\em Econometrica}, pages 289--314, 1998.

\bibitem{cox1990asymptotic}
D.~D. Cox and F.~O'Sullivan.
\newblock Asymptotic analysis of penalized likelihood and related estimators.
\newblock {\em The Annals of Statistics}, pages 1676--1695, 1990.

\bibitem{cox1972regression}
D.~R. Cox.
\newblock Regression models and life-tables.
\newblock {\em Journal of the Royal Statistical Society: Series B
  (Methodological)}, 34(2):187--202, 1972.

\bibitem{cox1975partial}
D.~R. Cox.
\newblock Partial likelihood.
\newblock {\em Biometrika}, 62(2):269--276, 1975.

\bibitem{curtis2012metabric}
C.~Curtis, S.~P. Shah, S.-F. Chin, G.~Turashvili, O.~M. Rueda, M.~J. Dunning,
  D.~Speed, A.~G. Lynch, S.~Samarajiwa, Y.~Yuan, et~al.
\newblock The genomic and transcriptomic architecture of 2,000 breast tumours
  reveals novel subgroups.
\newblock {\em Nature}, 486(7403):346--352, 2012.

\bibitem{cuzick1988rank}
J.~Cuzick.
\newblock Rank regression.
\newblock {\em The Annals of Statistics}, pages 1369--1389, 1988.

\bibitem{dabrowska1988partial}
D.~M. Dabrowska and K.~A. Doksum.
\newblock Partial likelihood in transformation models with censored data.
\newblock {\em Scandinavian journal of statistics}, pages 1--23, 1988.

\bibitem{danks2022derivative}
D.~Danks and C.~Yau.
\newblock Derivative-based neural modelling of cumulative distribution
  functions for survival analysis.
\newblock In {\em International Conference on Artificial Intelligence and
  Statistics}, pages 7240--7256. PMLR, 2022.

\bibitem{dispenzieri2012flchain}
A.~Dispenzieri, J.~A. Katzmann, R.~A. Kyle, D.~R. Larson, T.~M. Therneau, C.~L.
  Colby, R.~J. Clark, G.~P. Mead, S.~Kumar, L.~J. Melton~III, et~al.
\newblock Use of nonclonal serum immunoglobulin free light chains to predict
  overall survival in the general population.
\newblock In {\em Mayo Clinic Proceedings}, volume~87, pages 517--523.
  Elsevier, 2012.

\bibitem{duchateau2007frailty}
L.~Duchateau and P.~Janssen.
\newblock {\em The frailty model}.
\newblock Springer Science \& Business Media, 2007.

\bibitem{duffie2009frailty}
D.~Duffie, A.~Eckner, G.~Horel, and L.~Saita.
\newblock Frailty correlated default.
\newblock {\em The Journal of Finance}, 64(5):2089--2123, 2009.

\bibitem{etezadi1987extended}
J.~Etezadi-Amoli and A.~Ciampi.
\newblock Extended hazard regression for censored survival data with
  covariates: a spline approximation for the baseline hazard function.
\newblock {\em Biometrics}, pages 181--192, 1987.

\bibitem{fan1996local}
J.~Fan and I.~Gijbels.
\newblock {\em Local Polynomial Modelling and Its Applications: Monographs on
  Statistics and Applied Probability 66}, volume~66.
\newblock CRC Press, 1996.

\bibitem{faraggi1995neural}
D.~Faraggi and R.~Simon.
\newblock A neural network model for survival data.
\newblock {\em Statistics in medicine}, 14(1):73--82, 1995.

\bibitem{farrell2021deep}
M.~H. Farrell, T.~Liang, and S.~Misra.
\newblock Deep neural networks for estimation and inference.
\newblock {\em Econometrica}, 89(1):181--213, 2021.

\bibitem{friedman2001greedy}
J.~H. Friedman.
\newblock Greedy function approximation: a gradient boosting machine.
\newblock {\em Annals of statistics}, pages 1189--1232, 2001.

\bibitem{gneiting2007strictly}
T.~Gneiting and A.~E. Raftery.
\newblock Strictly proper scoring rules, prediction, and estimation.
\newblock {\em Journal of the American statistical Association},
  102(477):359--378, 2007.

\bibitem{goodfellow2016deep}
I.~Goodfellow, Y.~Bengio, and A.~Courville.
\newblock {\em Deep learning}.
\newblock 2016.

\bibitem{graf1999assessment}
E.~Graf, C.~Schmoor, W.~Sauerbrei, and M.~Schumacher.
\newblock Assessment and comparison of prognostic classification schemes for
  survival data.
\newblock {\em Statistics in medicine}, 18(17-18):2529--2545, 1999.

\bibitem{gray2000estimation}
R.~J. Gray.
\newblock Estimation of regression parameters and the hazard function in
  transformed linear survival models.
\newblock {\em Biometrics}, 56(2):571--576, 2000.

\bibitem{groha2020general}
S.~Groha, S.~M. Schmon, and A.~Gusev.
\newblock A general framework for survival analysis and multi-state modelling.
\newblock {\em arXiv preprint arXiv:2006.04893}, 2020.

\bibitem{han2021inverse}
X.~Han, M.~Goldstein, A.~Puli, T.~Wies, A.~Perotte, and R.~Ranganath.
\newblock Inverse-weighted survival games.
\newblock {\em Advances in neural information processing systems},
  34:2160--2172, 2021.

\bibitem{hardt2016train}
M.~Hardt, B.~Recht, and Y.~Singer.
\newblock Train faster, generalize better: Stability of stochastic gradient
  descent.
\newblock In {\em International conference on machine learning}, pages
  1225--1234. PMLR, 2016.

\bibitem{hougaard1984life}
P.~Hougaard.
\newblock Life table methods for heterogeneous populations: distributions
  describing the heterogeneity.
\newblock {\em Biometrika}, 71(1):75--83, 1984.

\bibitem{huang1999efficient}
J.~Huang.
\newblock Efficient estimation of the partly linear additive cox model.
\newblock {\em The annals of Statistics}, 27(5):1536--1563, 1999.

\bibitem{ishwaran2008random}
H.~Ishwaran, U.~B. Kogalur, E.~H. Blackstone, and M.~S. Lauer.
\newblock Random survival forests.
\newblock {\em The annals of applied statistics}, 2(3):841--860, 2008.

\bibitem{jing2017neural}
H.~Jing and A.~J. Smola.
\newblock Neural survival recommender.
\newblock In {\em Proceedings of the Tenth ACM International Conference on Web
  Search and Data Mining}, pages 515--524, 2017.

\bibitem{johnson2016mimic}
A.~E. Johnson, T.~J. Pollard, L.~Shen, L.-w.~H. Lehman, M.~Feng, M.~Ghassemi,
  B.~Moody, P.~Szolovits, L.~Anthony~Celi, and R.~G. Mark.
\newblock Mimic-iii, a freely accessible critical care database.
\newblock {\em Scientific data}, 3(1):1--9, 2016.

\bibitem{kalbfleisch2002statistical}
J.~D. Kalbfleisch and R.~L. Prentice.
\newblock {\em The Statistical Analysis of Failure Time Data}, volume 360.
\newblock John Wiley \& Sons, 2002.

\bibitem{kaplan1958nonparametric}
E.~L. Kaplan and P.~Meier.
\newblock Nonparametric estimation from incomplete observations.
\newblock {\em Journal of the American statistical association},
  53(282):457--481, 1958.

\bibitem{katzman2018deepsurv}
J.~L. Katzman, U.~Shaham, A.~Cloninger, J.~Bates, T.~Jiang, and Y.~Kluger.
\newblock Deepsurv: personalized treatment recommender system using a cox
  proportional hazards deep neural network.
\newblock {\em BMC medical research methodology}, 18(1):1--12, 2018.

\bibitem{knaus1995support}
W.~A. Knaus, F.~E. Harrell, J.~Lynn, L.~Goldman, R.~S. Phillips, A.~F. Connors,
  N.~V. Dawson, W.~J. Fulkerson, R.~M. Califf, N.~Desbiens, et~al.
\newblock The support prognostic model: Objective estimates of survival for
  seriously ill hospitalized adults.
\newblock {\em Annals of internal medicine}, 122(3):191--203, 1995.

\bibitem{kooperberg1995hazard}
C.~Kooperberg, C.~J. Stone, and Y.~K. Truong.
\newblock Hazard regression.
\newblock {\em Journal of the American Statistical Association},
  90(429):78--94, 1995.

\bibitem{kosorok2004robust}
M.~R. Kosorok, B.~L. Lee, and J.~P. Fine.
\newblock Robust inference for univariate proportional hazards frailty
  regression models.
\newblock {\em The Annals of Statistics}, 32(4):1448--1491, 2004.

\bibitem{kvamme2019time}
H.~Kvamme, {\O}.~Borgan, and I.~Scheel.
\newblock Time-to-event prediction with neural networks and cox regression.
\newblock {\em arXiv preprint arXiv:1907.00825}, 2019.

\bibitem{lee2018deephit}
C.~Lee, W.~Zame, J.~Yoon, and M.~Van Der~Schaar.
\newblock Deephit: A deep learning approach to survival analysis with competing
  risks.
\newblock In {\em Proceedings of the AAAI conference on artificial
  intelligence}, volume~32, 2018.

\bibitem{murphy1994consistency}
S.~A. Murphy.
\newblock Consistency in a proportional hazards model incorporating a random
  effect.
\newblock {\em The Annals of Statistics}, 22(2):712--731, 1994.

\bibitem{murphy1995asymptotic}
S.~A. Murphy.
\newblock Asymptotic theory for the frailty model.
\newblock {\em The annals of statistics}, pages 182--198, 1995.

\bibitem{murphy2000profile}
S.~A. Murphy and A.~W. Van~der Vaart.
\newblock On profile likelihood.
\newblock {\em Journal of the American Statistical Association},
  95(450):449--465, 2000.

\bibitem{nagpal2021deep}
C.~Nagpal, S.~Yadlowsky, N.~Rostamzadeh, and K.~Heller.
\newblock Deep cox mixtures for survival regression.
\newblock In {\em Machine Learning for Healthcare Conference}, pages 674--708.
  PMLR, 2021.

\bibitem{omi2019fully}
T.~Omi, n.~ueda, and K.~Aihara.
\newblock Fully neural network based model for general temporal point
  processes.
\newblock In H.~Wallach, H.~Larochelle, A.~Beygelzimer, F.~d\textquotesingle
  Alch\'{e}-Buc, E.~Fox, and R.~Garnett, editors, {\em Advances in Neural
  Information Processing Systems}, volume~32. Curran Associates, Inc., 2019.

\bibitem{parner1998asymptotic}
E.~Parner.
\newblock Asymptotic theory for the correlated gamma-frailty model.
\newblock {\em The Annals of Statistics}, 26(1):183--214, 1998.

\bibitem{sksurv}
S.~P{\"o}lsterl.
\newblock scikit-survival: A library for time-to-event analysis built on top of
  scikit-learn.
\newblock {\em Journal of Machine Learning Research}, 21(212):1--6, 2020.

\bibitem{purushotham2018benchmarking}
S.~Purushotham, C.~Meng, Z.~Che, and Y.~Liu.
\newblock Benchmarking deep learning models on large healthcare datasets.
\newblock {\em Journal of biomedical informatics}, 83:112--134, 2018.

\bibitem{rindt2022a}
D.~Rindt, R.~Hu, D.~Steinsaltz, and D.~Sejdinovic.
\newblock Survival regression with proper scoring rules and monotonic neural
  networks.
\newblock Proceedings of Machine Learning Research. Journal of Machine Learning
  Research, 2022.

\bibitem{schmidt2020nonparametric}
J.~Schmidt-Hieber.
\newblock Nonparametric regression using deep neural networks with relu
  activation function.
\newblock {\em The Annals of Statistics}, 48(4):1875--1897, 2020.

\bibitem{shalev2014understanding}
S.~Shalev-Shwartz and S.~Ben-David.
\newblock {\em Understanding machine learning: From theory to algorithms}.
\newblock Cambridge university press, 2014.

\bibitem{shen1997methods}
X.~Shen.
\newblock On methods of sieves and penalization.
\newblock {\em The Annals of Statistics}, 25(6):2555--2591, 1997.

\bibitem{shen1994convergence}
X.~Shen and W.~H. Wong.
\newblock Convergence rate of sieve estimates.
\newblock {\em The Annals of Statistics}, pages 580--615, 1994.

\bibitem{stablein1985two}
D.~M. Stablein and I.~Koutrouvelis.
\newblock A two-sample test sensitive to crossing hazards in uncensored and
  singly censored data.
\newblock {\em Biometrics}, pages 643--652, 1985.

\bibitem{NIPS2007_33e8075e}
H.~Steck, B.~Krishnapuram, C.~Dehing-oberije, P.~Lambin, and V.~C. Raykar.
\newblock On ranking in survival analysis: Bounds on the concordance index.
\newblock In J.~Platt, D.~Koller, Y.~Singer, and S.~Roweis, editors, {\em
  Advances in Neural Information Processing Systems}, volume~20. Curran
  Associates, Inc., 2007.

\bibitem{strawderman1996asymptotic}
R.~L. Strawderman and A.~A. Tsiatis.
\newblock On the asymptotic properties of a flexible hazard estimator.
\newblock {\em The Annals of Statistics}, 24(1):41--63, 1996.

\bibitem{tang2022soden}
W.~Tang, J.~Ma, Q.~Mei, and J.~Zhu.
\newblock Soden: A scalable continuous-time survival model through ordinary
  differential equation networks.
\newblock {\em Journal of Machine Learning Research}, 23(34):1--29, 2022.

\bibitem{tsybakov2008introduction}
A.~Tsybakov.
\newblock {\em Introduction to Nonparametric Estimation}.
\newblock Springer Series in Statistics. Springer New York, 2008.

\bibitem{van1996weak}
A.~van~der Vaart, A.~van~der Vaart, A.~van~der Vaart, and J.~Wellner.
\newblock {\em Weak Convergence and Empirical Processes: With Applications to
  Statistics}.
\newblock Springer Series in Statistics. Springer, 1996.

\bibitem{van2000asymptotic}
A.~W. Van~der Vaart.
\newblock {\em Asymptotic statistics}, volume~3.
\newblock Cambridge university press, 2000.

\bibitem{wehenkel2019unconstrained}
A.~Wehenkel and G.~Louppe.
\newblock Unconstrained monotonic neural networks.
\newblock {\em Advances in neural information processing systems}, 32, 2019.

\bibitem{wienke2010frailty}
A.~Wienke.
\newblock {\em Frailty models in survival analysis}.
\newblock Chapman and Hall/CRC, 2010.

\bibitem{wong1995probability}
W.~H. Wong and X.~Shen.
\newblock Probability inequalities for likelihood ratios and convergence rates
  of sieve mles.
\newblock {\em The Annals of Statistics}, pages 339--362, 1995.

\bibitem{wu2015predicting}
W.~C.-H. Wu, M.-Y. Yeh, and M.-S. Chen.
\newblock Predicting winning price in real time bidding with censored data.
\newblock In {\em Proceedings of the 21th ACM SIGKDD International Conference
  on Knowledge Discovery and Data Mining}, pages 1305--1314, 2015.

\bibitem{yarotsky2017error}
D.~Yarotsky.
\newblock Error bounds for approximations with deep relu networks.
\newblock {\em Neural Networks}, 94:103--114, 2017.

\bibitem{ying1993large}
Z.~Ying.
\newblock A large sample study of rank estimation for censored regression data.
\newblock {\em The Annals of Statistics}, pages 76--99, 1993.

\bibitem{zeng2006efficient}
D.~Zeng and D.~Lin.
\newblock Efficient estimation of semiparametric transformation models for
  counting processes.
\newblock {\em Biometrika}, 93(3):627--640, 2006.

\bibitem{zhong2021deep}
Q.~Zhong, J.~W. Mueller, and J.-L. Wang.
\newblock Deep extended hazard models for survival analysis.
\newblock In M.~Ranzato, A.~Beygelzimer, Y.~Dauphin, P.~Liang, and J.~W.
  Vaughan, editors, {\em Advances in Neural Information Processing Systems},
  volume~34, pages 15111--15124. Curran Associates, Inc., 2021.

\bibitem{zhong2021partially}
Q.~Zhong, J.~W. Mueller, and J.-L. Wang.
\newblock Deep learning for the partially linear cox model.
\newblock {\em The Annals of Statistics}, 2021.

\end{thebibliography}
    % \bibliographystyle{unsrtnat}
    % \bibliographystyle{acm}

%%%%%%%%%%%%%%%%%%%%%%%%%%%%%%%%%%%%%%%%%%%%%%%%%%%%%%%%%%%%%%%%%%%%%%%%%%%%%%%
%%%%%%%%%%%%%%%%%%%%%%%%%%%%%%%%%%%%%%%%%%%%%%%%%%%%%%%%%%%%%%%%%%%%%%%%%%%%%%%
% APPENDIX
%%%%%%%%%%%%%%%%%%%%%%%%%%%%%%%%%%%%%%%%%%%%%%%%%%%%%%%%%%%%%%%%%%%%%%%%%%%%%%%
%%%%%%%%%%%%%%%%%%%%%%%%%%%%%%%%%%%%%%%%%%%%%%%%%%%%%%%%%%%%%%%%%%%%%%%%%%%%%%%
\newpage
\appendix
\onecolumn
% \section{You \emph{can} have an appendix here.}

% You can have as much text here as you want. The main body must be at most $8$ pages long.
% For the final version, one more page can be added.
% If you want, you can use an appendix like this one, even using the one-column format.
    \section{Examples of frailty specifications}\label{sec: frailty_spec}
    We list several commonly used frailty models, and specify their corresponding characteristics via their frailty transform $G_\theta$:
    \begin{description}
        \item[Gamma frailty:] Arguably the gamma frailty is the most widely used frailty model \cite{murphy1994consistency, murphy1995asymptotic, parner1998asymptotic, wienke2010frailty, duchateau2007frailty}, with 
        \begin{align}
            G_\theta(x) = \frac{1}{\theta} \log (1 + \theta x), \theta \ge 0.
        \end{align}
        When $\theta = 0$, $G_0(x) = \lim_{\theta \rightarrow 0} G_\theta (x)$ is defined as the (pointwise) limit. A notable fact of the gamma frailty specification is that when the proportional frailty (PF) assumption \eqref{eqn: proportional_frailty} is met, if $\theta = 0$, the model degenerates to \ph. Otherwise if $\theta = 1$, the model corresponds to the proportional odds (PO) model \cite{bennett1983analysis}.
        \item[Box-Cox transformation frailty:] Under this specification, we have
        \begin{align}
            G_\theta(x) = \dfrac{(1+x)^\theta-1}{\theta}, \theta \ge 0.
        \end{align}
        The case of $\theta = 0$ is defined analogously to that of gamma frailty, which corresponds to the PO model under the PF assumption. When $\theta = 1$, the model reduces to \ph under the PF assumption. 
        \item[$\text{IGG}(\alpha)$ frailty:] This is an extension of gamma frailty \cite{kosorok2004robust} and includes other types of frailty specifications like the inverse gaussian frailty \cite{hougaard1984life}, with
        \begin{align}
            G_\theta(x) = \frac{1 - \alpha}{\alpha \theta} \left[ \left(1 + \frac{\theta x}{1 - \alpha}\right)^\alpha - 1\right], \theta \ge 0, \alpha \in [0, 1).
        \end{align}
        In the one-dimensional parameter paradigm, the parameter $\alpha$ is assumed known instead of being learnable. When $\alpha = 1/2$, we obtain the gamma frailty model. When $\alpha \rightarrow 0$, the limit corresponds to the inverse Gaussian frailty. 
    \end{description}
    \textbf{Satistiability of regularity condition \ref{cond: G}} In \cite[Proposition 1]{kosorok2004robust}, the authors verified the regularity condition of gamma and $\text{IGG}(\alpha)$ frailties. Using a similar argument, it is straightforward to verify the regularity of Box-Cox transformation frailty. 
    \section{Proofs of theorems}
    % \documentclass{article}
% \usepackage[utf8]{inputenc}
% \usepackage{amssymb}
% \usepackage{srcltx}
% \usepackage{amsmath}
% \usepackage{CJK}
% \usepackage{graphicx}
% \usepackage{epstopdf}
% \usepackage{dcolumn}
% \usepackage{bm}
% \usepackage{multicol}
% \usepackage{float}
% \usepackage{algorithm}
% \usepackage{algorithmic}
% \usepackage{MnSymbol}
% \newtheorem{theorem}{Theorem}
% \newtheorem{lemma}{Lemma}
% \newtheorem{corollary}{Corollary}
% \newtheorem{proof}{Proof}[section]
% \newtheorem{proposition}{Proposition}
% \newcommand{\holderspace}[3]{\mathcal{W}^{#1}_{#2}(#3)}
% \begin{document}
\subsection{Preliminary}
\paragraph{Additional definitions} The theory of empirical processes \cite{van1996weak} will be involved heavily in the proof. Therefore we briefly introduce some common notations: For a function class $\mathcal{F}$, define $\cn{\epsilon}{\mathcal{F}}{\|\cdot\|}$ to be the covering number of $\mathcal{F}$ with respect to norm $\|\cdot\|$ under radius $\epsilon$, and define $\bn{\epsilon}{\mathcal{F}}{\|\cdot\|}$ to be the bracketing number of $\mathcal{F}$ with respect to norm $\|\cdot\|$ under radius $\epsilon$. We use $\vcdim{\mathcal{F}}$ to denote the VC-dimension of $\mathcal{F}$. Moreover, we use the notation $a \lesssim b$ to denote $a \le C b$ for some positive constant $C$.\par 
Before proving theorem \ref{thm: rate_pf} and \ref{thm: rate_fn},
% we first illustrate the relationship between our learning objective and the defined metric.
we introduce some additional notations that will be useful throughout the proof process. \par

In the PF scheme, define
\begin{align*}
    l(T,\delta,Z; h, m,\theta) =& \delta \log g_{\theta}\left(e^{m(Z)}\int_{0}^{T}e^{h(s)}ds\right)+\delta h(T)+\delta m(Z)\\
    &-G_{\theta}\left(e^{m(Z)}\int_{0}^{T}e^{h(s)}ds\right),
\end{align*}
where we denote $g_\theta = G^\prime(\theta)$. Under the definition of the sieve space stated in condition \ref{cond: sieve_PF}, we restate the parameter estimates as
\begin{align*}
    \left(\widehat{h}_{n},\widehat{m}_{n},\widehat{\theta}_{n}\right) = \mathop{\mathrm{argmax}}\limits_{
    \widehat{h} \in \mathcal{H}_n,\widehat{m} \in \mathcal{M}_n,\theta \in \Theta
    }\frac{1}{n}\sum_{i\in [n]}l(T_{i},\delta_{i},Z_{i};\widehat{h},\widehat{m},\theta).
\end{align*}
% The estimated function $\widehat{h_{n}}(t),\widehat{m}_{n}(z)$ and the estimated frailty parameter $\widehat{\theta}_{n}$ are obtained through
% \begin{equation*}
% \left(\widehat{h}_{n}(t),\widehat{m}_{n}(z),\widehat{\theta}_{n}\right) = \mathop{\mathrm{argmax}}\limits_{\widehat{h}(t;\mathbf{W^{h}},\mathbf{b^{h}}),\widehat{m}(z;\mathbf{W^{m}},\mathbf{b^{m}}),\theta}\frac{1}{n}\sum_{i\in [n]}l(T_{i},\delta_{i},Z_{i};\widehat{h}(t;\mathbf{W^{h}},\mathbf{b^{h}}),\widehat{m}(z;\mathbf{W^{m}},\mathbf{b^{m}}),\theta).
% \end{equation*}

Similarly, in the FN scheme, we define 
\begin{align*}
    l(T,\delta,Z; \nu,\theta) = \delta \log g_{\theta}\left(\int_{0}^{T} e^{\nu(s,Z)}ds\right)+\delta \nu(T,Z) - G_{\theta}\left(\int_{0}^{T} e^{\nu(s,Z)}ds\right)
\end{align*}
Under the definition of the sieve space stated in condition \ref{cond: sieve_FN}, we restate the parameter estimates as
\begin{align*}
    \left(\widehat{\nu}_{n}(t,z), \widehat{\theta}_{n}\right) = \mathop{\mathrm{argmax}}\limits_{
    \widehat{\nu} \in \mathcal{V}_n,\theta \in \Theta
    }\frac{1}{n}\sum_{i \in [n]}l(T_{i},\delta_{i},Z_{i};\widehat{\nu},\theta).
\end{align*}
% The estimated function $\widehat{\nu}_{n}(t,z)$ and the estimated frailty parameter $\widehat{\theta}_{n}$ are obtained through
% \begin{equation*}
% \left(\widehat{\nu}_{n}(t,z), \widehat{\theta}_{n}\right) = \mathop{\mathrm{argmax}}\limits_{\widehat{\nu}(t,z;\mathbf{W},\mathbf{b}),\theta}\frac{1}{n}\sum_{i \in [n]}l(T_{i},\delta_{i},Z_{i};\widehat{\nu}(t,z;\mathbf{W},\mathbf{b}),\theta).
% \end{equation*}

We denote the conditional density function and survival function of the event time $\tilde{T}$ given $Z$ by $f_{\tilde{T}\mid Z}(t)$ and $S_{\tilde{T}\mid Z}(t)$, respectively. Similarly, we denote the conditional density function and survival function of the censoring time $C$ given $Z$ by $f_{C\mid Z}(t)$ and $S_{C\mid Z}(t)$. Under the assumption that $\tilde{T} \ind C \mid Z$, the joint conditional density of the observed time $T$ and the censoring indicator $\delta$ given $Z$ can be expressed as the following:
\begin{eqnarray*}
    p(T,\delta \mid Z) &=& f_{\tilde{T}\mid Z}(T)^{\delta}S_{\tilde{T}\mid Z}(T)^{1-\delta}f_{C\mid Z}(T)^{1-\delta}S_{C\mid Z}(T)^{\delta}\\
    &=& \lambda_{\tilde{T}\mid Z}(T)^{\delta}S_{\tilde{T}\mid Z}(T)f_{C\mid Z}(T)^{1-\delta}S_{C\mid Z}(T)^{\delta},
\end{eqnarray*}
where $\lambda_{\tilde{T}\mid Z}(T)$ is the conditional hazard function of the survival time $\tilde{T}$ given $Z$.

Under the model assumption of PF scheme, $p(T,\delta \mid Z)$ can be expressed by
\begin{eqnarray*}
p(T,\delta \mid Z;h,m,\theta) = \exp\left(l(T,\delta,Z;h,m,\theta)\right)f_{C\mid Z}(T)^{1-\delta}S_{C\mid Z}(T)^{\delta}.
\end{eqnarray*}

For $\phi_{0}=(h_{0},m_{0},\theta_{0})$ and an estimator  $\widehat{\phi}=(\widehat{h},\widehat{m},\widehat{\theta})$, the defined distance $d_{\textsf{PF}}\left(\widehat{\phi},\phi_{0}\right)$ can be explicitly expresses by
\begin{eqnarray*}
    d_{\textsf{FN}}\left(\widehat{\psi},\psi_{0}\right) = \sqrt{\mathbb{E}_Z\left[\int\left|\sqrt{p(T,\delta\mid Z;\widehat{h},\widehat{m},\widehat{\theta})}-\sqrt{p(T,\delta\mid Z;h_{0},m_{0},\theta_{0})}\right|^{2}\mu(dT\times d\delta)\right]}.
\end{eqnarray*}
Here the dominating measure $\mu$ is defined such that for any (measurable) function $r(T,\delta)$
\begin{align*}
    \int r(T,\delta)\mu(dT\times d\delta)=\int_{0}^{\tau}r(T,\delta = 1)dT+\int_{0}^{\tau}r(T,\delta=0)dT
\end{align*}
% \wrf{Preciser statements about $\mu$ is desired, it is the product measure of counting measure and Lebesgue measure? My measure theory knowledge is fainting now}

Under the model assumption of FN scheme, $p(T,\delta \mid Z)$ can be expressed by
\begin{eqnarray*}
p(T,\delta \mid Z;\nu,\theta) = \exp\left(l(T,\delta,Z;\nu, \theta)\right)f_{C\mid Z}(T)^{1-\delta}S_{C\mid Z}(T)^{\delta}.
\end{eqnarray*}

For $\psi_{0}=(\nu_{0},\theta_{0})$ and an estimator $\widehat{\psi}=(\widehat{\nu},\widehat{\theta})$, the defined distance $d_{\textsf{FN}}\left(\widehat{\psi},\psi_{0}\right)$ can be explicitly expresses by
\begin{eqnarray*}
    d_{\textsf{FN}}\left(\widehat{\psi},\psi_{0}\right) = \sqrt{\mathbb{E}_Z\left[\int\left|\sqrt{p(T,\delta\mid Z;\widehat{\nu},\widehat{\theta})}-\sqrt{p(T,\delta\mid Z;\nu_{0},\theta_{0})}\right|^{2}\mu(dT\times d\delta)\right]}.
\end{eqnarray*}
% Here for any function $r(T,\delta)$, $\int r(T,\delta)\mu(dT\times d\delta)=\int_{0}^{\tau}r(T,\delta = 1)dT+\int_{0}^{\tau}r(T,\delta=0)dT$.

\subsection{Technical lemmas}
The following lemmas are needed for the proof of Theorem \ref{thm: rate_pf} and \ref{thm: rate_fn}. Hereafter for notational convenience, we will use $\widehat{h}, \widehat{m}$ for arbitrary elements in the corresponding sieve space listed in condition \ref{cond: sieve_PF}, $\widehat{\nu}$ for an arbitrary element in the sieve space listed in condition \ref{cond: sieve_FN}, and $\widehat{\theta}$ for an arbitrary element in $\Theta$.

\begin{lemma}\label{lem: pf_l_bound}
Under condition \ref{cond: param_PF}, \ref{cond: sieve_PF}, \ref{cond: G}, for $(T,\delta,Z)\in [0,\tau]\times\{0,1\}\times[-1,1]^{d}$, the following terms are bounded:
\begin{enumerate}[leftmargin=*]
    \item $l(T,\delta,Z;h_{0},m_{0},\theta_{0})$ with true parameter $(h_{0},m_{0},\theta_{0})$
    \item $l(T,\delta,Z;\widehat{h},\widehat{m},\widehat{\theta})$ with parameter estimates $(\widehat{h},\widehat{m},\widehat{\theta})$ in any sieve space listed in condition \ref{cond: sieve_PF}.
\end{enumerate}
% $l(T,\delta,Z;h_{0},m_{0},\theta_{0})$ and \ $l(T,\delta,Z;\widehat{h},\widehat{m},\widehat{\theta})$ are bounded for $(T,\delta,Z)\in [0,\tau]\times\{0,1\}\times[-1,1]^{d}$. Here $h_{0},m_{0},\theta_{0}$ are the true functions and parameters satisfying condition 1 and $\widehat{h},\widehat{m},\widehat{\theta}$ are arbitrary approximating functions and parameters satisfying condition 3.
\end{lemma}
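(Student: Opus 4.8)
The plan is to reduce the boundedness of $l$ to the boundedness of its four summands, each of which is controlled either by the sup-norm bound built into the H\"older-ball assumptions or by the regularity Condition~\ref{cond: G}. No delicate argument is needed; the whole lemma is essentially a bookkeeping exercise that sets up the compact interval on which Condition~\ref{cond: G} may be applied.

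First I would record that membership in a H\"older ball controls the sup norm: taking the multi-index $\mathbf{\alpha}=0$ in the definition \eqref{eqn: holder_ball} gives $\esssup_{[0,\tau]}|h_{0}|\le M$ and $\esssup_{[-1,1]^{d}}|m_{0}|\le M$ under Condition~\ref{cond: param_PF}, and likewise $\esssup_{[0,\tau]}|\widehat{h}|\le M_{h}$, $\esssup_{[-1,1]^{d}}|\widehat{m}|\le M_{m}$ for any $\widehat{h}\in\mathcal{H}_{n}$, $\widehat{m}\in\mathcal{M}_{n}$ under Condition~\ref{cond: sieve_PF}. Put $B:=\max\{M,M_{h},M_{m}\}$, a constant that does \emph{not} depend on $n$ since $M_{h},M_{m}$ are fixed in Condition~\ref{cond: sieve_PF}. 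Then for every $T\in[0,\tau]$ and $Z\in[-1,1]^{d}$ one has $e^{-B}\le e^{m(Z)}\le e^{B}$ and $0\le\int_{0}^{T}e^{h(s)}\,ds\le \tau e^{B}$, so the argument
\[
    x(T,Z;h,m):=e^{m(Z)}\int_{0}^{T}e^{h(s)}\,ds
\]
lies in the fixed compact interval $\mathcal{B}:=[0,\,\tau e^{2B}]\subset\mathbb{R}_{\ge 0}$, simultaneously for the true parameter $\phi_{0}$ and for every triple $(\widehat{h},\widehat{m},\widehat{\theta})$ drawn from the sieve.

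With this choice of $\mathcal{B}$, Condition~\ref{cond: G} supplies a finite constant $K$ with $|G_{\theta}(x)|\le K$ and $|\log g_{\theta}(x)|\le K$ for all $(\theta,x)\in\Theta\times\mathcal{B}$. Using $\delta\in\{0,1\}$, $|h(T)|\le B$, $|m(Z)|\le B$, and $x(T,Z;h,m)\in\mathcal{B}$, each term of
\[
    l(T,\delta,Z;h,m,\theta)=\delta\log g_{\theta}(x)+\delta h(T)+\delta m(Z)-G_{\theta}(x)
\]
is bounded in absolute value by $\max\{K,B\}$, so $|l|\le 3K+2B$ by the triangle inequality, uniformly over $(T,\delta,Z)\in[0,\tau]\times\{0,1\}\times[-1,1]^{d}$, over $\theta\in\Theta$, and over the sieve spaces. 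Specializing to $(h_{0},m_{0},\theta_{0})$ gives part~(i) and to $(\widehat{h},\widehat{m},\widehat{\theta})$ gives part~(ii).

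I do not anticipate a genuine obstacle; the only two points that deserve explicit attention are: (a) checking that the interval $\mathcal{B}$ on which Condition~\ref{cond: G} is invoked is independent of $n$, which holds precisely because the sieve radii $M_{h},M_{m}$ are fixed constants rather than growing with the sample size; and (b) the degenerate case $\delta=0$, in which $l$ collapses to $-G_{\theta}(x)$ and the bound is immediate. The analogue of this lemma for the FN scheme will follow by the identical argument with $\nu$ in place of $h(s)+m(Z)$, replacing $B$ by the sup-norm bound on $\nu_{0}$ and on elements of $\mathcal{V}_{n}$.
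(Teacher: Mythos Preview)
Your proposal is correct and follows essentially the same approach as the paper: both extract the sup-norm bound from the H\"older-ball assumption, confine the argument $e^{m(Z)}\int_{0}^{T}e^{h(s)}\,ds$ to a fixed compact interval, and then invoke Condition~\ref{cond: G} to bound the $G_{\theta}$ and $\log g_{\theta}$ terms. The only cosmetic difference is that you handle cases (i) and (ii) simultaneously via the single constant $B=\max\{M,M_{h},M_{m}\}$, whereas the paper proves (i) with $\mathcal{B}=[0,\tau e^{2M}]$ and then remarks that (ii) is similar; also, your final constant $3K+2B$ overshoots the actual term count (four summands give $2K+2B$), but this is harmless.
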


\begin{lemma}\label{lem: fn_l_bound}
Under condition \ref{cond: param_FN}, \ref{cond: sieve_FN}, \ref{cond: G}, for $(T,\delta,Z)\in [0,\tau]\times\{0,1\}\times[-1,1]^{d}$, the following terms are bounded:
\begin{enumerate}[leftmargin=*]
    \item $l(T,\delta,Z;\nu_{0},\theta_{0})$ with true parameter $(\nu_{0},\theta_{0})$
    \item $l(T,\delta,Z;\widehat{\nu},\widehat{\theta})$ with parameter estimates $(\widehat{\nu},\widehat{\theta})$ in any sieve space listed in condition \ref{cond: sieve_FN}.
\end{enumerate}
% $l(T,\delta,Z;\nu_{0},\theta_{0})$ and \ $l(T,\delta,Z;\widehat{\nu},\widehat{\theta})$ are bounded for $(T,\delta,Z)\in [0,\tau]\times\{0,1\}\times[-1,1]^{d}$. Here $\nu_{0},\theta_{0}$ are the true functions and parameters satisfying condition 2 and $\widehat{\nu},\widehat{\theta}$ are arbitrary approximating functions and parameters satisfying condition 4.
\end{lemma}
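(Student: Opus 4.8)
The plan is to bound $l(T,\delta,Z;\nu,\theta)$ uniformly by controlling three separate ingredients of the expression: the inner integral $x(T,Z):=\int_0^T e^{\nu(s,Z)}\,ds$, the pointwise value $\nu(T,Z)$, and the outer maps $G_\theta(\cdot)$ and $\log g_\theta(\cdot)$ evaluated at $x(T,Z)$. The key observation is that membership in a H\"older ball automatically supplies an $L^\infty$ bound: since the multi-index $\mathbf{\alpha}=0$ is included in the maximum defining \eqref{eqn: holder_ball}, Condition \ref{cond: param_FN} gives $\|\nu_0\|_\infty\le M$ on $[0,\tau]\times[-1,1]^d$, and the constraint that the estimator lies in the sieve $\mathcal V_n\subseteq \holderspace{\beta}{M_\nu}{[0,\tau]\times[-1,1]^d}$ (Condition \ref{cond: sieve_FN}) gives $\|\widehat\nu\|_\infty\le M_\nu$. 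Writing $M_\star:=\max(M,M_\nu)$, the term $\delta\,\nu(T,Z)$ is then bounded by $M_\star$ in absolute value, and since $e^{-M_\star}\le e^{\nu(s,Z)}\le e^{M_\star}$ a.e.\ in $s$, we get
\[
0\;\le\; x(T,Z)\;=\;\int_0^T e^{\nu(s,Z)}\,ds\;\le\;\tau e^{M_\star}
\]
uniformly in $(T,Z)\in[0,\tau]\times[-1,1]^d$; in particular $x(T,Z)$ lies in the fixed compact interval $\mathcal B:=[0,\tau e^{M_\star}]$.

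With this choice of $\mathcal B$, Condition \ref{cond: G} guarantees that $G_\theta(x)$ and $\log g_\theta(x)$ are bounded on $\Theta\times\mathcal B$, say both by a constant $K<\infty$ (for the concrete families this is immediate, e.g.\ for gamma frailty $G_\theta(x)=\theta^{-1}\log(1+\theta x)$ and $\log g_\theta(x)=-\log(1+\theta x)$ are bounded on $\Theta\times[0,B]$ for every finite $B$, including the degenerate limit $\theta\to 0$; see Appendix \ref{sec: frailty_spec}). Combining this with the integral bound above, for either $(\nu,\theta)=(\nu_0,\theta_0)$ or $(\nu,\theta)=(\widehat\nu,\widehat\theta)$ in the sieve,
\[
\bigl|l(T,\delta,Z;\nu,\theta)\bigr|\;\le\;\bigl|\log g_\theta(x(T,Z))\bigr|+\bigl|\nu(T,Z)\bigr|+\bigl|G_\theta(x(T,Z))\bigr|\;\le\;2K+M_\star,
\]
a finite constant independent of $(T,\delta,Z)$ and of $n$. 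This proves both items simultaneously; the argument is entirely parallel to Lemma \ref{lem: pf_l_bound}, with the quantity $e^{m(Z)}\int_0^T e^{h(s)}ds$ there replaced by $\int_0^T e^{\nu(s,Z)}ds$ here, and with one H\"older-ball radius $M_\nu$ replacing the pair $M_h,M_m$.

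There is no genuine difficulty beyond bookkeeping; the only point that needs care is ensuring the argument fed to $G_\theta$ and $\log g_\theta$ never leaves the compact set $\mathcal B$ on which Condition \ref{cond: G} grants boundedness, which is exactly what the $L^\infty$ bound from H\"older-ball membership delivers, and is the reason the sieve is taken inside $\holderspace{\beta}{M_\nu}{\cdot}$ rather than being an unconstrained MLP class. One should also note that $\mathcal B$ must be permitted to contain a neighbourhood of $0$ (since $x(T,Z)\to 0$ as $T\to 0$), which is harmless for the standard frailty transforms because $\log g_\theta(x)\to 0$ and $G_\theta(x)\to 0$ as $x\to 0$.
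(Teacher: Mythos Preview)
Your argument is correct and follows essentially the same route as the paper's proof: both use the $L^\infty$ bound from H\"older-ball membership to confine the integral $\int_0^T e^{\nu(s,Z)}\,ds$ to a fixed compact interval $\mathcal B$, then invoke Condition~\ref{cond: G} to bound $G_\theta$ and $\log g_\theta$ on $\Theta\times\mathcal B$. The only cosmetic difference is that you handle items 1 and 2 simultaneously via $M_\star=\max(M,M_\nu)$, whereas the paper treats item 1 with $\mathcal B=[0,\tau e^{M}]$ and dispatches item 2 with ``the proof is similar''; your extra remarks on the behaviour near $x=0$ and the gamma example are harmless but not needed.
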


\begin{lemma}\label{lem: pf_norm_decomp}
Under condition \ref{cond: param_PF}, \ref{cond: sieve_PF}, \ref{cond: G}, let
% for an arbitrary triple of arbitrary approximating functions and parameters
$(\widehat{h},\widehat{m},\widehat{\theta})$, $(\widehat{h}_{1},\widehat{m}_{1},\widehat{\theta}_{1})$, and $(\widehat{h}_{2},\widehat{m}_{2},\widehat{\theta}_{2})$ be arbitrary three parameter triples inside the sieve space defined in condition \ref{cond: sieve_PF}, 
the following two inequalities hold.
\begin{align*}
    &\|l(T,\delta,Z;h_{0},m_{0},\theta_{0})-l(T,\delta,Z;\widehat{h},\widehat{m},\widehat{\theta})\|_{\infty}\lesssim |\theta_{0}-\widehat{\theta}|+\|h_{0}-\widehat{h}\|_{\infty}+\|m_{0}-\widehat{m}\|_{\infty} \\
    &\|l(T,\delta,Z;\widehat{h}_{1},\widehat{m}_{1},\widehat{\theta}_{1})-l(T,\delta,Z;\widehat{h}_{2},\widehat{m}_{2},\widehat{\theta}_{2})\|_{\infty}\lesssim
    |\widehat{\theta}_{1}-\widehat{\theta}_{2}|+\|\widehat{h}_{1}-\widehat{h}_{2}\|_{\infty}+\|\widehat{m}_{1}-\widehat{m}_{2}\|_{\infty}.
\end{align*}
% \begin{equation*}
% \|l(T,\delta,Z;h_{0},m_{0},\theta_{0})-l(T,\delta,Z;\widehat{h},\widehat{m},\widehat{\theta})\|_{\infty}\lesssim |\theta_{0}-\widehat{\theta}|+\|h_{0}-\widehat{h}\|_{\infty}+\|m_{0}-\widehat{m}\|_{\infty},
% \end{equation*}
% and
% \begin{equation*}
% \|l(T,\delta,Z;\widehat{h}_{1},\widehat{m}_{1},\widehat{\theta}_{1})-l(T,\delta,Z;\widehat{h}_{2},\widehat{m}_{2},\widehat{\theta}_{2})\|_{\infty}\lesssim
% |\widehat{\theta}_{1}-\widehat{\theta}_{2}|+\|\widehat{h}_{1}-\widehat{h}_{2}\|_{\infty}+\|\widehat{m}_{1}-\widehat{m}_{2}\|_{\infty}.
% \end{equation*}
\end{lemma}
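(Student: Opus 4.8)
The plan is to prove both displayed inequalities by a single mean-value-type argument, viewing the first inequality as the special case of the second in which $(\widehat{h}_2,\widehat{m}_2,\widehat{\theta}_2)$ is the true triple $(h_0,m_0,\theta_0)$; this is legitimate because, by Condition~\ref{cond: param_PF}, $h_0$ and $m_0$ lie in H\"older balls and hence are uniformly bounded on their domains exactly as the sieve elements are. The key auxiliary quantity is the inner argument
\[
A(h,m;T,Z) \;=\; e^{m(Z)}\int_0^T e^{h(s)}\,ds .
\]
First I would verify that, over $(T,\delta,Z)\in[0,\tau]\times\{0,1\}\times[-1,1]^d$ and over all admissible $(h,m)$ (true parameters or elements of $\mathcal{H}_n\times\mathcal{M}_n$), one has $0<c\le A\le C<\infty$ for constants depending only on $\tau$ and the uniform bounds $M,M_h,M_m$; thus $A$ ranges in a fixed compact interval $\mathcal{B}$, precisely the set on which Condition~\ref{cond: G} guarantees that $G_\theta(x)$, $\partial_\theta G_\theta(x)$, $\partial_x G_\theta(x)$, $\log g_\theta(x)$, $\partial_\theta\log g_\theta(x)$, $\partial_x\log g_\theta(x)$ are all bounded uniformly over $\Theta\times\mathcal{B}$.

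Second, I would establish the Lipschitz estimate for the inner argument itself,
\[
\bigl|A(h_1,m_1;T,Z)-A(h_2,m_2;T,Z)\bigr| \;\lesssim\; \|h_1-h_2\|_\infty + \|m_1-m_2\|_\infty ,
\]
by writing the difference as $e^{m_1(Z)}\int_0^T\!\bigl(e^{h_1(s)}-e^{h_2(s)}\bigr)ds + \bigl(e^{m_1(Z)}-e^{m_2(Z)}\bigr)\int_0^T e^{h_2(s)}ds$, using that $x\mapsto e^x$ is Lipschitz on the bounded range of $h$ and of $m$ (so $|e^{h_1(s)}-e^{h_2(s)}|\lesssim\|h_1-h_2\|_\infty$, and similarly for $m$), and bounding $\int_0^T(\cdot)\le\tau\,\esssup|\cdot|$ together with the uniform bound on $\int_0^T e^{h_2}$ established in the first step.

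Third, I would decompose $l$ into the four additive pieces $\delta\log g_\theta(A)$, $\delta h(T)$, $\delta m(Z)$, $-G_\theta(A)$, and control the change in each. The two linear-in-function pieces are immediate since $|\delta|\le1$ and $|h_1(T)-h_2(T)|\le\|h_1-h_2\|_\infty$, $|m_1(Z)-m_2(Z)|\le\|m_1-m_2\|_\infty$. For the composite pieces I would split by the triangle inequality into a change in $\theta$ at fixed $A$ and a change in $A$ at fixed $\theta$, each bounded by the mean value theorem using the corresponding bounded partial derivative from Condition~\ref{cond: G}; this gives $|\log g_{\theta_1}(A_1)-\log g_{\theta_2}(A_2)|\lesssim|\theta_1-\theta_2|+|A_1-A_2|$ and likewise for $G$. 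Combining these with the Lipschitz bound on $A$ and taking the supremum over $(T,\delta,Z)$ yields both inequalities.

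The main obstacle is purely organizational: one must ensure that the intermediate points produced by the mean value theorem stay inside $\Theta\times\mathcal{B}$, which holds because $\Theta$ is an interval and $\mathcal{B}$ may be taken to be the interval $[c,C]$, so the segment joining $A_1$ and $A_2$ remains in $\mathcal{B}$; and one must keep track that every implied constant in $\lesssim$ depends only on the fixed quantities $\tau,M,M_h,M_m,\Theta$ and the bounds of Condition~\ref{cond: G}, not on the particular functions involved. No step is deep: Lemma~\ref{lem: pf_l_bound} already supplies the finiteness of the composite pieces, and the remainder is the chain rule together with the elementary Lipschitz estimates above.
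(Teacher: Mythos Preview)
Your proposal is correct and follows essentially the same route as the paper's proof: decompose $l$ into its four additive pieces, bound the composite pieces $\log g_\theta(A)$ and $G_\theta(A)$ via the mean value theorem in $(\theta,x)$ using the uniform derivative bounds of Condition~\ref{cond: G}, and control the difference in the inner argument $A$ by the same add-and-subtract trick together with the Lipschitzness of $e^x$ on bounded sets. One small overstatement: you assert a strictly positive lower bound $A\ge c>0$, but $A(h,m;T,Z)=0$ when $T=0$; this is harmless since only the upper bound is needed and the compact set $\mathcal{B}$ in Condition~\ref{cond: G} can (and in the paper does) include $0$.
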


\begin{lemma}\label{lem: fn_norm_decomp}
    Under condition \ref{cond: param_FN}, \ref{cond: sieve_FN}, \ref{cond: G}, let $(\widehat{\nu},\widehat{\theta})$, $(\widehat{\nu}_{1},\widehat{\theta}_{1})$, and $(\widehat{\nu}_{2},\widehat{\theta}_{2})$ be arbitrary three parameter tuples inside the sieve space defined in condition \ref{cond: sieve_FN},, the following inequalities hold.
    \begin{align*}
    &\|l(T,\delta,Z;\nu_{0},\theta_{0})-l(T,\delta,Z;\widehat{\nu},\widehat{\theta})\|_{\infty}\lesssim |\theta_{0}-\widehat{\theta}|+\|\nu_{0}-\widehat{\nu}\|_{\infty}\\
    &\|l(T,\delta,Z;\widehat{\nu}_{1},\widehat{\theta}_{1})-l(T,\delta,Z;\widehat{\nu}_{2},\widehat{\theta}_{2})\|_{\infty}\lesssim
    |\widehat{\theta}_{1}-\widehat{\theta}_{2}|+\|\widehat{\nu}_{1}-\widehat{\nu}_{2}\|_{\infty}.
    \end{align*}
\end{lemma}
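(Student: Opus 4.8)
The plan is to prove both displayed inequalities at once by showing that, on the relevant domain, the map $(\nu,\theta)\mapsto l(\cdot\,;\nu,\theta)$ is Lipschitz in the supremum norm $\|\cdot\|_\infty$ over $(T,\delta,Z)\in[0,\tau]\times\{0,1\}\times[-1,1]^d$; the argument is the one-function-parameter analogue of Lemma~\ref{lem: pf_norm_decomp}. First I would record that every $\nu$ that can occur here---$\nu_0$ by Condition~\ref{cond: param_FN}, and each of $\widehat\nu,\widehat\nu_1,\widehat\nu_2$ by Condition~\ref{cond: sieve_FN}---is bounded in absolute value by a fixed constant $B$ on $[0,\tau]\times[-1,1]^d$ (take $\mathbf{\alpha}=0$ in \eqref{eqn: holder_ball}). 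Hence $e^{\nu(s,Z)}\in[e^{-B},e^{B}]$, so the inner quantity $x(\nu):=\int_0^T e^{\nu(s,Z)}\,ds$ lies in $[0,\tau e^{B}]$, which we take to be contained in the compact set $\mathcal{B}$ of Condition~\ref{cond: G}. This is the only place where the sieve constant $M_\nu$ and the choice of $\mathcal{B}$ must be coordinated; once it is in hand, $G_\theta$, $g_\theta$, $\log g_\theta$ and all their first partial derivatives in $\theta$ and in $x$ are uniformly bounded at every point at which we evaluate them.

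Fixing two parameter tuples $(\nu_1,\theta_1)$ and $(\nu_2,\theta_2)$ (each either a true parameter or a sieve element), I would split $l(T,\delta,Z;\nu_1,\theta_1)-l(T,\delta,Z;\nu_2,\theta_2)$ into the three contributions coming from (i) the additive term $\delta\,\nu(T,Z)$, (ii) the term $-G_\theta(x(\nu))$, and (iii) the term $\delta\,\log g_\theta(x(\nu))$; the factor $\delta\in\{0,1\}$ is harmless. Piece (i) is at most $\|\nu_1-\nu_2\|_\infty$. For (ii) and (iii) I would insert an intermediate point, e.g. $G_{\theta_1}(x(\nu_1))-G_{\theta_2}(x(\nu_2))=\bigl[G_{\theta_1}(x(\nu_1))-G_{\theta_2}(x(\nu_1))\bigr]+\bigl[G_{\theta_2}(x(\nu_1))-G_{\theta_2}(x(\nu_2))\bigr]$, and apply the mean value theorem with the uniform bounds of Condition~\ref{cond: G} to get $|G_{\theta_1}(x(\nu_1))-G_{\theta_2}(x(\nu_2))|\lesssim|\theta_1-\theta_2|+|x(\nu_1)-x(\nu_2)|$, and likewise for $\log g_\theta$. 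It then remains to bound $|x(\nu_1)-x(\nu_2)|=\bigl|\int_0^T\!\bigl(e^{\nu_1(s,Z)}-e^{\nu_2(s,Z)}\bigr)ds\bigr|$; since $t\mapsto e^t$ is $e^B$-Lipschitz on $[-B,B]$ and the integral runs over $[0,T]\subseteq[0,\tau]$, this is at most $\tau e^{B}\,\|\nu_1-\nu_2\|_\infty$.

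Collecting the three bounds yields $\|l(\cdot;\nu_1,\theta_1)-l(\cdot;\nu_2,\theta_2)\|_\infty\lesssim|\theta_1-\theta_2|+\|\nu_1-\nu_2\|_\infty$, with an implicit constant depending only on $B$, $\tau$, and the sup-norm bounds in Condition~\ref{cond: G}. Taking $(\nu_2,\theta_2)=(\nu_0,\theta_0)$ (legitimate since $\nu_0$ also obeys the bound $B$) gives the first inequality, and taking $(\nu_i,\theta_i)=(\widehat\nu_i,\widehat\theta_i)$ gives the second. I do not expect a genuine obstacle here: the proof reduces to a chain-rule/mean-value-theorem computation plus the Lipschitz continuity of the exponential on a compact interval, and the single delicate point is the uniform boundedness step that confines $x(\nu)$ to the set $\mathcal{B}$ on which Condition~\ref{cond: G} is posited---everything downstream is routine.
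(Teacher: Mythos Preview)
Your proposal is correct and follows essentially the same route as the paper: both split $l$ into the $\nu$, $G_\theta$, and $\log g_\theta$ pieces, apply the mean value theorem (the paper calls it ``Taylor's expansion'') with the uniform derivative bounds from Condition~\ref{cond: G} on the compact set $\mathcal{B}=[0,\tau e^{\max(M,M_\nu)}]$, and control $|x(\nu_1)-x(\nu_2)|$ via the Lipschitz property of $e^t$ on the bounded range of $\nu$. Your explicit ``insert an intermediate point'' step is exactly how the paper obtains its separate $\partial_\theta$ and $\partial_x$ terms, so there is no substantive difference.
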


\begin{lemma}[Approximating error of PF scheme]\label{lem: pf_approx}
    In the PF scheme, for any $n$, there exists an element in the corresponding sieve space $\pi_{n}\phi_{0}=(\pi_{n}h_{0},\pi_{n}m_{0},\pi_{n}\theta_{0})$, satisfying $d_{\textsf{PF}}\left(\pi_{n}\phi_{0},\phi_{0}\right) = O\left(n^{-\frac{\beta}{\beta+d}}\right)$.
\end{lemma}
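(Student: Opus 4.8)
The plan is to build the element $\pi_n\phi_0 = (\pi_n h_0, \pi_n m_0, \pi_n\theta_0)$ componentwise and then propagate the sup-norm approximation errors through the Hellinger metric $d_{\textsf{PF}}$ using the decomposition lemma already in hand. For the Euclidean parameter we simply set $\pi_n\theta_0 = \theta_0$, which is legitimate since $\theta_0 \in \Theta$ and the sieve imposes no restriction on this coordinate. For the two function parameters, I would invoke the neural approximation result of Yarotsky \cite[Theorem 1]{yarotsky2017error} — exactly as anticipated in Condition~\ref{cond: sieve_PF} — to produce an MLP $\pi_n h_0 \in \mathcal{H}_n$ with depth $O(\log n)$ and $O(n^{1/(\beta+d)}\log n)$ parameters such that $\|\pi_n h_0 - h_0\|_\infty \lesssim n^{-\beta/(\beta+d)}$ (noting that $h_0 \in \holderspace{\beta}{M}{[0,\tau]}$ is a one-dimensional Hölder function, so in fact the univariate rate is faster, but matching the multivariate rate suffices here), and similarly an MLP $\pi_n m_0 \in \mathcal{M}_n$ with $O(n^{d/(\beta+d)}\log n)$ parameters such that $\|\pi_n m_0 - m_0\|_\infty \lesssim n^{-\beta/(\beta+d)}$, since $m_0 \in \holderspace{\beta}{M}{[-1,1]^d}$ and the corresponding $L_\infty$ approximation rate for a $d$-dimensional $\beta$-Hölder function by a network of that size is $n^{-\beta/(\beta+d)}$. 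One must also check that the constants $M_h, M_m$ are large enough that these approximants remain in the stated Hölder balls; this is precisely the role of the "sufficiently large constants" clause in Condition~\ref{cond: sieve_PF}, so it is assumed.

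Next I would pass from sup-norm errors on the parameters to the Hellinger distance. By definition,
\begin{align*}
    d_{\textsf{PF}}^2(\pi_n\phi_0,\phi_0) = \mathbb{E}_Z\left[\int\left|\sqrt{p(T,\delta\mid Z;\pi_n h_0,\pi_n m_0,\theta_0)}-\sqrt{p(T,\delta\mid Z;h_0,m_0,\theta_0)}\right|^2\mu(dT\times d\delta)\right].
\end{align*}
Using $p(T,\delta\mid Z;h,m,\theta) = \exp(l(T,\delta,Z;h,m,\theta))\,f_{C\mid Z}(T)^{1-\delta}S_{C\mid Z}(T)^{\delta}$, the censoring factors cancel inside the square root difference up to a common bounded multiplier, so the integrand is controlled by $\big|e^{l(\cdot;\pi_n\phi_0)/2} - e^{l(\cdot;\phi_0)/2}\big|^2$. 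By Lemma~\ref{lem: pf_l_bound} both log-likelihoods $l(\cdot;\phi_0)$ and $l(\cdot;\pi_n\phi_0)$ are uniformly bounded on $[0,\tau]\times\{0,1\}\times[-1,1]^d$, hence $x\mapsto e^{x/2}$ is Lipschitz on the relevant range and
\begin{align*}
    \big|e^{l(\cdot;\pi_n\phi_0)/2} - e^{l(\cdot;\phi_0)/2}\big| \lesssim \big|l(\cdot;\pi_n\phi_0) - l(\cdot;\phi_0)\big| \le \|l(\cdot;h_0,m_0,\theta_0)-l(\cdot;\pi_n h_0,\pi_n m_0,\theta_0)\|_\infty.
\end{align*}
By Lemma~\ref{lem: pf_norm_decomp} the right-hand side is $\lesssim |\theta_0-\theta_0| + \|h_0-\pi_n h_0\|_\infty + \|m_0-\pi_n m_0\|_\infty \lesssim n^{-\beta/(\beta+d)}$. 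Since the dominating measure $\mu$ has finite total mass on $[0,\tau]\times\{0,1\}$ and $\mathbb{E}_Z$ is a probability average, integrating the squared bound gives $d_{\textsf{PF}}^2(\pi_n\phi_0,\phi_0) \lesssim n^{-2\beta/(\beta+d)}$, i.e.\ $d_{\textsf{PF}}(\pi_n\phi_0,\phi_0) = O(n^{-\beta/(\beta+d)})$, as claimed.

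The main obstacle is not any single step but ensuring the bookkeeping is airtight at two junctures: first, verifying that the Yarotsky approximants genuinely land in the \emph{constrained} sieve spaces $\mathcal{H}_n, \mathcal{M}_n$ (which requires the Hölder-ball constraint with the enlarged radii $M_h, M_m$ to be compatible with the network size budget — this is exactly what Condition~\ref{cond: sieve_PF} asserts, so we lean on it); and second, confirming that the censoring-distribution factors $f_{C\mid Z}, S_{C\mid Z}$, which are common to both densities and do not depend on $\phi$, really do drop out cleanly from the Hellinger difference after factoring — the bounded-likelihood conclusion of Lemma~\ref{lem: pf_l_bound} is what makes the exponential-to-linear Lipschitz reduction valid and is the key technical ingredient. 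Everything else is a routine chain of triangle inequalities and the elementary bound $\int(\sqrt a - \sqrt b)^2 \le \int |a-b|$ type manipulation specialized to the exponential-family form of $p$.
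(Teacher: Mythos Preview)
Your proposal is correct and follows essentially the same route as the paper: set $\pi_n\theta_0=\theta_0$, invoke Yarotsky for $\pi_n h_0$ and $\pi_n m_0$, then push the sup-norm parameter error through $d_{\textsf{PF}}$ via Lemmas~\ref{lem: pf_l_bound} and~\ref{lem: pf_norm_decomp}. One small refinement: your phrase ``up to a common bounded multiplier'' and the appeal to $\mu$ having finite mass is slightly loose, since $f_{C\mid Z}(T)^{1-\delta}S_{C\mid Z}(T)^{\delta}$ need not be pointwise bounded; the paper instead pulls the sup-norm of $\bigl|e^{l/2}-e^{l/2}\bigr|$ outside the integral and then bounds $\mathbb{E}_Z\int f_{C\mid Z}(T)^{1-\delta}S_{C\mid Z}(T)^{\delta}\,\mu(dT\times d\delta)\le 2+2\tau$ via $f_{C\mid Z}^{1-\delta}\le 1+f_{C\mid Z}$, which is the clean way to make your factoring step rigorous.
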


\begin{lemma}[Approximating error of FN scheme]\label{lem: fn_approx}
    In the FN scheme, for any $n$, there exists an element in the corresponding sieve space  $\pi_{n}\psi = (\pi_{n}\nu_{0},\pi_{n}\theta_{0})$ satisfying 
    $d_{\textsf{FN}}\left(\pi_{n}\psi_{0},\psi_{0}\right)= O\left(n^{-\frac{\beta}{\beta+d+1}}\right)$.
\end{lemma}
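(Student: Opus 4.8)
The plan is to exhibit the sieve element $\pi_n\psi_0=(\pi_n\nu_0,\pi_n\theta_0)$ explicitly: leave the Euclidean coordinate untouched, $\pi_n\theta_0=\theta_0\in\Theta$, and take $\pi_n\nu_0$ to be a ReLU network that approximates $\nu_0$ in sup norm inside $\mathcal{V}_n$. By Condition \ref{cond: param_FN}, $\nu_0\in\holderspace{\beta}{M}{[0,\tau]\times[-1,1]^d}$, a H\"older ball on a domain of dimension $d+1$. I would invoke \cite[Theorem 1]{yarotsky2017error} with smoothness $\beta$, ambient dimension $d+1$, and target accuracy $\epsilon_n$: this produces a ReLU network of depth $O(\log(1/\epsilon_n))$ and $O\big(\epsilon_n^{-(d+1)/\beta}\log(1/\epsilon_n)\big)$ parameters with $\|\pi_n\nu_0-\nu_0\|_\infty\le\epsilon_n$. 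Choosing $\epsilon_n\asymp n^{-\frac{\beta}{\beta+d+1}}$ yields depth $O(\log n)$ and parameter count $O\big(n^{\frac{d+1}{\beta+d+1}}\log n\big)$, so that $\pi_n\nu_0\in\mathcal{V}_n$; the sufficiently large choice of the radius $M_\nu$ in Condition \ref{cond: sieve_FN} simultaneously ensures $\pi_n\nu_0\in\holderspace{\beta}{M_\nu}{[0,\tau]\times[-1,1]^d}$, which is what Lemma \ref{lem: fn_norm_decomp} needs.

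It then remains to pass from this sup-norm bound to a bound on $d_{\textsf{FN}}$. Because the censoring factors $f_{C\mid Z}(T)^{1-\delta}S_{C\mid Z}(T)^{\delta}$ are common to $p(T,\delta\mid Z;\pi_n\nu_0,\theta_0)$ and $p(T,\delta\mid Z;\nu_0,\theta_0)$, their log-ratio equals $l(T,\delta,Z;\pi_n\nu_0,\theta_0)-l(T,\delta,Z;\nu_0,\theta_0)$, which Lemma \ref{lem: fn_norm_decomp} bounds in sup norm by $C\big(|\theta_0-\theta_0|+\|\nu_0-\pi_n\nu_0\|_\infty\big)=O\big(n^{-\frac{\beta}{\beta+d+1}}\big)$. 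Combining this with the elementary inequality $\int(\sqrt p-\sqrt q)^2\,d\mu\le\tfrac14\|\log p-\log q\|_\infty^2\exp(\|\log p-\log q\|_\infty)$, valid for densities $p,q$ with respect to a common measure (it follows from $|1-e^u|\le|u|e^{|u|}$ applied to $u=(\log p-\log q)/2$ together with $\int q\,d\mu=1$), gives $H^2(\mathbb{P}_{\pi_n\psi_0,Z=z}\parallel\mathbb{P}_{\psi_0,Z=z})\lesssim\|\nu_0-\pi_n\nu_0\|_\infty^2$ uniformly in $z$, once $n$ is large enough that the log-ratio bound is at most $1$. Taking expectation over $Z$ and a square root yields $d_{\textsf{FN}}(\pi_n\psi_0,\psi_0)=O\big(n^{-\frac{\beta}{\beta+d+1}}\big)$.

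I expect the only slightly delicate points to be bookkeeping rather than analysis: one must line up Yarotsky's depth/width budget with the $O(\log n)$ and $O\big(n^{\frac{d+1}{\beta+d+1}}\log n\big)$ budgets built into $\mathcal{V}_n$ — here it matters that the spurious logarithmic factor in Yarotsky's weight count is absorbed by the $\log n$ factor in the definition of $\mathcal{V}_n$ — and one must know the approximant can be taken inside the enlarged H\"older ball, which is precisely the role assigned to the constant $M_\nu$ in Condition \ref{cond: sieve_FN}. The one genuine estimate, the sup-norm-to-Hellinger step, is already powered by the boundedness of $G_\theta$ and $g_\theta$ in Condition \ref{cond: G} that feeds Lemma \ref{lem: fn_norm_decomp}, so no new work is required there. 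The argument for Lemma \ref{lem: pf_approx} is the same, with $h_0$ and $m_0$ approximated separately on the factor domains $[0,\tau]$ and $[-1,1]^d$ using the matching parameter budgets of $\mathcal{H}_n$ and $\mathcal{M}_n$, and with Lemma \ref{lem: pf_norm_decomp} in place of Lemma \ref{lem: fn_norm_decomp}.
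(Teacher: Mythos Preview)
Your proposal is correct and follows essentially the same route as the paper: set $\pi_n\theta_0=\theta_0$, take $\pi_n\nu_0$ from Yarotsky's theorem with accuracy $\epsilon_n\asymp n^{-\beta/(\beta+d+1)}$ so that it lands in $\mathcal{V}_n$, and then convert the sup-norm bound on the log-likelihood difference (via Lemma~\ref{lem: fn_norm_decomp}) into a Hellinger bound. The only cosmetic difference is in this last conversion: the paper factors out the common censoring density $f_{C\mid Z}^{1-\delta}S_{C\mid Z}^\delta$ and bounds its $\mu$-integral explicitly by $2+2\tau$, whereas your probabilistic inequality $H^2(p\parallel q)\le\tfrac14\|\log(p/q)\|_\infty^2\,e^{\|\log(p/q)\|_\infty}$ absorbs that step by using $\int q\,d\mu=1$ directly, which is arguably tidier.
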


\begin{lemma}\label{lem: covering_number}
    Suppose $\mathcal{F}$ is a class of functions satisfying that $N(\varepsilon,\mathcal{F},\|\cdot\|)<\infty$ for $\forall \varepsilon>0$. We define $\widetilde{N}(\varepsilon,\mathcal{F},\|\cdot\|)$ to be the minimal number of $\varepsilon$-balls $B(f,\varepsilon)=\{g: \|g-f\|<\varepsilon\}$ needed to cover $\mathcal{F}$ with radius $\varepsilon$ and further constrain that $f\in \mathcal{F}$. Then we have 
    \begin{align*}
        N(\varepsilon,\mathcal{F},\|\cdot\|)\leq\widetilde{N}(\varepsilon,\mathcal{F},\|\cdot\|)\leq N(\frac{\varepsilon}{2},\mathcal{F},\|\cdot\|).
    \end{align*}
\end{lemma}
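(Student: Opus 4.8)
The statement is a routine comparison between the (interior‑center) covering number $\widetilde N$ and the ordinary covering number $N$, and the plan is simply to establish the two inequalities separately by elementary set‑inclusion arguments.

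For the left inequality $N(\varepsilon,\mathcal{F},\|\cdot\|)\le\widetilde N(\varepsilon,\mathcal{F},\|\cdot\|)$, the observation is that every covering of $\mathcal{F}$ by $\varepsilon$‑balls whose centers are constrained to lie in $\mathcal{F}$ is, in particular, a covering of $\mathcal{F}$ by $\varepsilon$‑balls with unconstrained centers. Hence the minimal cardinality over this restricted family of admissible covers, namely $\widetilde N$, is at least the minimal cardinality over the larger family of all covers, namely $N$. No computation is needed, and the finiteness hypothesis $N(\varepsilon,\mathcal{F},\|\cdot\|)<\infty$ guarantees the relevant minima are attained.

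For the right inequality $\widetilde N(\varepsilon,\mathcal{F},\|\cdot\|)\le N(\tfrac{\varepsilon}{2},\mathcal{F},\|\cdot\|)$, I would start from an optimal $(\varepsilon/2)$‑cover $\{B(f_i,\varepsilon/2)\}_{i=1}^{K}$ of $\mathcal{F}$ with $K=N(\tfrac{\varepsilon}{2},\mathcal{F},\|\cdot\|)$, where the centers $f_i$ need not belong to $\mathcal{F}$. First discard every ball that does not intersect $\mathcal{F}$; this only decreases the count, and the remaining balls still cover $\mathcal{F}$, since any point of $\mathcal{F}$ lies in some ball of the original cover, and such a ball necessarily meets $\mathcal{F}$. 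For each surviving index $i$ pick a point $g_i\in B(f_i,\varepsilon/2)\cap\mathcal{F}$. The key (and only non‑bookkeeping) step is the triangle inequality: for every $g\in B(f_i,\varepsilon/2)$ we have $\|g-g_i\|\le\|g-f_i\|+\|f_i-g_i\|<\tfrac{\varepsilon}{2}+\tfrac{\varepsilon}{2}=\varepsilon$, so $B(f_i,\varepsilon/2)\subseteq B(g_i,\varepsilon)$. Consequently $\{B(g_i,\varepsilon)\}$ is a cover of $\mathcal{F}$ by $\varepsilon$‑balls with centers $g_i\in\mathcal{F}$ and of cardinality at most $K$, which gives $\widetilde N(\varepsilon,\mathcal{F},\|\cdot\|)\le K$. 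There is no genuine obstacle in this argument; the only point requiring a little care is the removal of balls that miss $\mathcal{F}$ so that interior centers can actually be chosen, and the strict inequalities in the definition of the balls make the constant $2$ exactly sufficient.
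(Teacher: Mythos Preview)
Your proposal is correct and follows essentially the same route as the paper: the left inequality is immediate from the definition, and the right inequality is obtained by starting from an optimal $(\varepsilon/2)$-cover, choosing a point of $\mathcal{F}$ inside each ball, and using the triangle inequality to show the recentered $\varepsilon$-balls still cover $\mathcal{F}$. Your explicit step of discarding balls that miss $\mathcal{F}$ is a harmless refinement; the paper relies implicitly on minimality of the cover to guarantee each ball already meets $\mathcal{F}$.
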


\begin{lemma}\label{lem: bracketing_number}
    Suppose $\mathcal{F}$ is a class of functions satisfying that $N_{[]}(\varepsilon,\mathcal{F},\|\cdot\|_{\infty})<\infty$ for $\forall \varepsilon>0$. We define $\widetilde{N}_{[]}(\varepsilon,\mathcal{F},\|\cdot\|_{\infty})$ to be the minimal number of brackets $[l,u]$ needed to cover $\mathcal{F}$ with $\|l-u\|_{\infty}< \varepsilon$ and further constrain that $f\in \mathcal{F}$, $l=f-\frac{\varepsilon}{2}$ and $u=f+\frac{\varepsilon}{2}$. Then we have
    \begin{align*}
        N_{[]}(\varepsilon,\mathcal{F},\|\cdot\|_{\infty})\leq\widetilde{N}_{[]}(\varepsilon,\mathcal{F},\|\cdot\|_{\infty})\leq N_{[]}(\frac{\varepsilon}{2},\mathcal{F},\|\cdot\|_{\infty})
    \end{align*}
    Furthermore, we have $\widetilde{N}_{[]}(\varepsilon,\mathcal{F},\|\cdot\|_{\infty})=\widetilde{N}(\frac{\varepsilon}{2},\mathcal{F},\|\cdot\|_{\infty})$.
\end{lemma}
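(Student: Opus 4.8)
The plan is to prove all three relations by directly unwinding the definitions of the centered and uncentered bracketing numbers and comparing them with the centered covering number $\tcn{\cdot}{\mathcal{F}}{\|\cdot\|_{\infty}}$ of Lemma~\ref{lem: covering_number}; no empirical-process machinery is needed here, only elementary set manipulations.

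First, the inequality $\bn{\varepsilon}{\mathcal{F}}{\|\cdot\|_{\infty}}\le\tbn{\varepsilon}{\mathcal{F}}{\|\cdot\|_{\infty}}$ is immediate: any family of centered brackets $[f_i-\varepsilon/2,f_i+\varepsilon/2]$ with $f_i\in\mathcal{F}$ covering $\mathcal{F}$ is, a fortiori, a family of brackets of $\|\cdot\|_{\infty}$-width $\varepsilon$ covering $\mathcal{F}$, hence admissible in the minimization defining $\bn{\varepsilon}{\mathcal{F}}{\|\cdot\|_{\infty}}$, so the minimum on the left cannot exceed the one on the right.

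Second, for $\tbn{\varepsilon}{\mathcal{F}}{\|\cdot\|_{\infty}}\le\bn{\varepsilon/2}{\mathcal{F}}{\|\cdot\|_{\infty}}$ I would start from a minimal $(\varepsilon/2)$-bracketing cover $[l_1,u_1],\dots,[l_N,u_N]$ with $N=\bn{\varepsilon/2}{\mathcal{F}}{\|\cdot\|_{\infty}}$, discard every bracket containing no element of $\mathcal{F}$ (this preserves coverage of $\mathcal{F}$, since each $g\in\mathcal{F}$ lies in some bracket, which is then nonempty), and pick a representative $f_i\in\mathcal{F}\cap[l_i,u_i]$ in each surviving bracket. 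Then for arbitrary $g\in\mathcal{F}$, choosing $i$ with $l_i\le g\le u_i$ pointwise and combining with $l_i\le f_i\le u_i$ gives $|g(x)-f_i(x)|\le u_i(x)-l_i(x)\le\|u_i-l_i\|_{\infty}\le\varepsilon/2$ for all $x$, i.e.\ $g\in[f_i-\varepsilon/2,f_i+\varepsilon/2]$; hence the at most $N$ centered brackets about the $f_i$ cover $\mathcal{F}$.

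Finally, the identity $\tbn{\varepsilon}{\mathcal{F}}{\|\cdot\|_{\infty}}=\tcn{\varepsilon/2}{\mathcal{F}}{\|\cdot\|_{\infty}}$ follows by observing that, for $f\in\mathcal{F}$, the condition $g\in[f-\varepsilon/2,f+\varepsilon/2]$ is precisely $\|g-f\|_{\infty}\le\varepsilon/2$, which --- under the same ball convention used in Lemma~\ref{lem: covering_number} --- is membership $g\in B(f,\varepsilon/2)$. Consequently a set of centers $\{f_i\}\subset\mathcal{F}$ gives a centered bracketing cover of $\mathcal{F}$ at scale $\varepsilon$ if and only if it gives a cover of $\mathcal{F}$ by radius-$(\varepsilon/2)$ balls centered in $\mathcal{F}$, and minimizing the cardinality over such families yields the equality. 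There is no genuine difficulty in this lemma; the only points requiring a word of care --- and thus the closest thing to an obstacle --- will be justifying the discarding of empty brackets in the middle step, and reconciling the strict/non-strict width and open/closed ball conventions, both of which I would handle simply by adopting the paper's own conventions from Lemma~\ref{lem: covering_number} as written.
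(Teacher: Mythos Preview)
Your proposal is correct and follows essentially the same approach as the paper: the paper proves the two inequalities by appealing to the identical argument used for Lemma~\ref{lem: covering_number} (trivial left inequality; for the right, pick a representative $f_i\in\mathcal{F}$ in each $\varepsilon/2$-bracket and recenter), and proves the final equality by the same observation you make, that in the sup norm the centered bracket $[f-\varepsilon/2,f+\varepsilon/2]$ coincides with the ball $B(f,\varepsilon/2)$. Your caveat about strict/non-strict conventions is apt---the paper's own statements are slightly loose on this point---but as you note this does not affect the argument.
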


\begin{lemma}[Model capacity of PF scheme]\label{lem: pf_capacity}
    Let $\mathcal{F}_{n}=\{l(T,\delta,Z;\widehat{h},\widehat{m},\widehat{\theta}): \widehat{h}\in \mathcal{H}_{n},\widehat{m} \in \mathcal{M}_{n},\widehat{\theta} \in \Theta\}$. Under condition \ref{cond: G}, with $s_{h}=\frac{2\beta}{2\beta+1}$ and $s_{m}=\frac{2\beta}{2\beta+d}$, there exist constants $c_{h}$ and $c_{m}>0$ such that
    \begin{eqnarray*}
    N_{[]}(\varepsilon,\mathcal{F}_{n},\|\cdot\|_{\infty})\lesssim \frac{1}{\varepsilon} N(c_{h}\varepsilon^{1/s_{h}},\mathcal{H}_{n},\|\cdot\|_{2})\times N(c_{m}\varepsilon^{1/s_{m}},\mathcal{M}_{n},\|\cdot\|_{2}).
    \end{eqnarray*}
\end{lemma}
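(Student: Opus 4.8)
The plan is to bound the $\|\cdot\|_\infty$-bracketing number of $\mathcal F_n$ by a product of covering numbers of its three parameter components (the two nonparametric sieves and the one-dimensional $\Theta$), and then convert the $\|\cdot\|_\infty$-covering numbers of $\mathcal H_n$ and $\mathcal M_n$ into $\|\cdot\|_2$-covering numbers at a power-transformed radius. I would carry this out in three steps.

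\emph{Step 1 (bracketing via sup-norm covering).} By Lemma~\ref{lem: pf_l_bound} every $l(T,\delta,Z;\widehat h,\widehat m,\widehat\theta)\in\mathcal F_n$ is uniformly bounded on $[0,\tau]\times\{0,1\}\times[-1,1]^d$, so $\mathcal F_n$ is a bounded class and the brackets below are finite. Chaining Lemma~\ref{lem: bracketing_number} and Lemma~\ref{lem: covering_number} gives
\[
N_{[]}(\varepsilon,\mathcal F_n,\|\cdot\|_\infty)\;\le\;\widetilde N_{[]}(\varepsilon,\mathcal F_n,\|\cdot\|_\infty)\;=\;\widetilde N(\varepsilon/2,\mathcal F_n,\|\cdot\|_\infty)\;\le\;N(\varepsilon/4,\mathcal F_n,\|\cdot\|_\infty),
\]
so it suffices to control the ordinary sup-norm covering number of $\mathcal F_n$.

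\emph{Step 2 (product cover from Lipschitzness of the loss).} The second inequality of Lemma~\ref{lem: pf_norm_decomp} says that $(\widehat h,\widehat m,\widehat\theta)\mapsto l(\cdot;\widehat h,\widehat m,\widehat\theta)$ is Lipschitz, say with constant $L$ (depending only on $\tau$ and the bounds in Condition~\ref{cond: G}), from $\mathcal H_n\times\mathcal M_n\times\Theta$ equipped with $\|\widehat h_1-\widehat h_2\|_\infty+\|\widehat m_1-\widehat m_2\|_\infty+|\widehat\theta_1-\widehat\theta_2|$ into $(\mathcal F_n,\|\cdot\|_\infty)$. Hence, taking $\|\cdot\|_\infty$-covers of $\mathcal H_n$ and $\mathcal M_n$ and an ordinary cover of $\Theta$, each at radius $\varepsilon/(12L)$, and forming the images $l(\cdot;h_j,m_k,\theta_\ell)$ produces an $\varepsilon/4$-cover of $\mathcal F_n$, so
\[
N(\varepsilon/4,\mathcal F_n,\|\cdot\|_\infty)\;\le\;N\!\left(\frac{\varepsilon}{12L},\mathcal H_n,\|\cdot\|_\infty\right)N\!\left(\frac{\varepsilon}{12L},\mathcal M_n,\|\cdot\|_\infty\right)N\!\left(\frac{\varepsilon}{12L},\Theta,|\cdot|\right).
\]
Since $\Theta$ is a bounded interval, $N(\varepsilon/(12L),\Theta,|\cdot|)\lesssim 1/\varepsilon$, which supplies the leading factor $1/\varepsilon$.

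\emph{Step 3 (from $\|\cdot\|_\infty$ to $\|\cdot\|_2$ on a Hölder ball — the main obstacle).} It remains to replace the sup-norm covering numbers of $\mathcal H_n$ and $\mathcal M_n$ by their $L_2$ counterparts with the stated exponents, and here I would use the fact that, by Condition~\ref{cond: sieve_PF}, $\mathcal H_n\subset\holderspace{\beta}{M_h}{[0,\tau]}$ and $\mathcal M_n\subset\holderspace{\beta}{M_m}{[-1,1]^d}$ with $M_h,M_m$ \emph{fixed} constants, together with the interpolation inequality: for bounded $\mathcal X\subset\mathbb R^{d'}$ and any $f\in\holderspace{\beta}{M}{\mathcal X}$,
\[
\|f\|_\infty\;\le\;C_{\beta,M,\mathcal X}\,\|f\|_2^{\,2\beta/(2\beta+d')}.
\]
The reason is a scaling argument: if $|f(x_0)|=\|f\|_\infty=a$, the bound on the order-$\beta$ derivatives prevents $f$ from dropping from $a$ to $a/2$ on a ball around $x_0$ of radius smaller than $\gtrsim(a/M)^{1/\beta}$, so $\|f\|_2^2\gtrsim a^2\,(a/M)^{d'/\beta}$, which rearranges to the display. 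Applied to the difference of two sieve elements (which lies in a Hölder ball of radius $2M_h$, resp.\ $2M_m$), this shows a $\|\cdot\|_2$-cover of $\mathcal H_n$ at radius $\eta$ is a $\|\cdot\|_\infty$-cover at radius $\lesssim\eta^{s_h}$, i.e.\ $N(\delta,\mathcal H_n,\|\cdot\|_\infty)\le N(c_h\delta^{1/s_h},\mathcal H_n,\|\cdot\|_2)$ with $s_h=2\beta/(2\beta+1)$, and likewise $N(\delta,\mathcal M_n,\|\cdot\|_\infty)\le N(c_m\delta^{1/s_m},\mathcal M_n,\|\cdot\|_2)$ with $s_m=2\beta/(2\beta+d)$. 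Substituting these into Step 2 and then into Step 1, and absorbing $L$, the interpolation constants, and the constant from $N(\cdot,\Theta,|\cdot|)$ into $c_h$ and $c_m$, yields
\[
N_{[]}(\varepsilon,\mathcal F_n,\|\cdot\|_\infty)\;\lesssim\;\frac{1}{\varepsilon}\,N\!\left(c_h\varepsilon^{1/s_h},\mathcal H_n,\|\cdot\|_2\right)N\!\left(c_m\varepsilon^{1/s_m},\mathcal M_n,\|\cdot\|_2\right),
\]
as claimed. I expect the Hölder-ball interpolation inequality of Step 3 to be the only delicate point — it is exactly what produces the exponents $s_h$ and $s_m$ — while Steps 1 and 2 are routine bookkeeping on top of Lemmas~\ref{lem: pf_l_bound}, \ref{lem: pf_norm_decomp}, \ref{lem: covering_number}, and~\ref{lem: bracketing_number}.
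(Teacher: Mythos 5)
Your proof follows the paper's argument step by step: reduce $N_{[]}$ to the centered quantities of Lemmas~\ref{lem: covering_number} and~\ref{lem: bracketing_number}; use the Lipschitz bound of Lemma~\ref{lem: pf_norm_decomp} to factor the cover of $\mathcal F_n$ into a product over $\Theta$, $\mathcal H_n$, $\mathcal M_n$ (the $\Theta$ factor giving the $1/\varepsilon$); and finally convert $\|\cdot\|_\infty$-covers of the two sieves into $\|\cdot\|_2$-covers via an interpolation inequality on a H\"older ball, absorbing constants into $c_h,c_m$. The structure, the lemma invocations, and the final bookkeeping all match.

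The one place you depart from the paper is the justification of the interpolation inequality $\|f\|_\infty\lesssim\|f\|_2^{2\beta/(2\beta+d')}$: the paper simply cites \citet[Lemma 2]{chen1998sieve}, whereas you offer a scaling heuristic, and this heuristic is not correct as a proof. Your claim that ``the bound on the order-$\beta$ derivatives prevents $f$ from dropping from $a$ to $a/2$ on a ball of radius smaller than $\gtrsim(a/M)^{1/\beta}$'' is false for $\beta>1$: the binding pointwise constraint on how fast $f$ can fall from its peak is the first-derivative bound, which only forces the drop radius to exceed $\sim a/M$, and the $\beta$-th derivative bound on its own imposes no pointwise drop constraint at all. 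The drop argument therefore gives $\|f\|_2^2\gtrsim a^2(a/M)^{d'}$, i.e.\ only the exponent $2/(2+d')$, which is strictly weaker than the stated $s_h,s_m$ when $\beta>1$ (and this gap cannot be closed using only $\|f\|_2\lesssim M$). What your scaling argument does correctly identify is the extremal bump of width $(a/M)^{1/\beta}$ that shows the exponent $2\beta/(2\beta+d')$ is sharp; it does not furnish the lower bound on $\|f\|_2$ over the whole H\"older ball. The inequality itself is of Gagliardo--Nirenberg type and is indeed true, which is why the paper cites an external lemma rather than reproving it. So the result is right, the overall route is the paper's, but this one step should be justified by citation (or a genuine proof of the interpolation bound), not by the drop-radius heuristic.
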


\begin{lemma}[Model capacity of FN scheme]\label{lem: fn_capacity}
    Let $\mathcal{G}_{n}=\{l(T,\delta,Z;\widehat{\nu},\widehat{\theta}):\widehat{\nu}\in\mathcal{V}_n,\widehat{\theta}\in\Theta\}$. Under condition \ref{cond: G}, with $s_{\nu}=\frac{2\beta}{2\beta+d+1}$, there exists a constant $c_{\nu}>0$ such that
    \begin{eqnarray*}
    N_{[]}(\varepsilon,\mathcal{G}_{n},\|\cdot\|_{\infty})\lesssim \frac{1}{\varepsilon} N(c_{\nu}\varepsilon^{1/s_{\nu}},\mathcal{V}_n,\|\cdot\|_{2}).
    \end{eqnarray*}
\end{lemma}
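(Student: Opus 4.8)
The plan parallels the argument for Lemma~\ref{lem: pf_capacity}, but is lighter since only one functional component is involved. In outline: (i) reduce the sup-norm bracketing number of $\mathcal{G}_n$ to its sup-norm covering number; (ii) build a sup-norm net of $\mathcal{G}_n$ by combining a net of the scalar interval $\Theta$ with a sup-norm net of the sieve $\mathcal{V}_n$, using that $l$ is Lipschitz in its arguments $(\widehat{\nu},\widehat{\theta})$ (Lemma~\ref{lem: fn_norm_decomp}); and (iii) convert the sup-norm net of $\mathcal{V}_n$ into an $L_2$-net at the price of the exponent $1/s_{\nu}$, via the interpolation inequality on Hölder balls. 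The factor $1/\varepsilon$ on the right-hand side is nothing but the covering number of the bounded interval $\Theta$.

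\textbf{Steps (i)--(ii).} By Lemma~\ref{lem: bracketing_number} and then Lemma~\ref{lem: covering_number}, $N_{[]}(\varepsilon,\mathcal{G}_n,\|\cdot\|_\infty)\le \widetilde{N}(\varepsilon/2,\mathcal{G}_n,\|\cdot\|_\infty)\le N(\varepsilon/4,\mathcal{G}_n,\|\cdot\|_\infty)$, so it suffices to control the sup-norm covering number of $\mathcal{G}_n$. With $C$ the constant from the second estimate in Lemma~\ref{lem: fn_norm_decomp}, the product of an $\varepsilon/(8C)$-net of $\Theta$ in $|\cdot|$ and an $\varepsilon/(8C)$-net of $\mathcal{V}_n$ in $\|\cdot\|_\infty$ is an $\varepsilon/4$-net of $\mathcal{G}_n$; since $\Theta$ is a bounded interval its net has cardinality $\lesssim 1/\varepsilon$, so $N(\varepsilon/4,\mathcal{G}_n,\|\cdot\|_\infty)\lesssim \frac{1}{\varepsilon}\,N(\varepsilon/(8C),\mathcal{V}_n,\|\cdot\|_\infty)$.

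\textbf{Step (iii), the crux.} Every element of $\mathcal{V}_n$ lies in the fixed ball $\holderspace{\beta}{M_{\nu}}{[0,\tau]\times[-1,1]^d}$ (Condition~\ref{cond: sieve_FN}), hence the difference of any two elements lies in $\holderspace{\beta}{2M_{\nu}}{[0,\tau]\times[-1,1]^d}$. On such a bounded set of $\beta$-smooth functions over a $(d+1)$-dimensional domain, the interpolation inequality $\|\phi\|_\infty \lesssim \|\phi\|_2^{2\beta/(2\beta+d+1)}=\|\phi\|_2^{s_{\nu}}$ holds with a constant depending only on $\beta,d,\tau,M_{\nu}$. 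Therefore a net of $\mathcal{V}_n$ in $\|\cdot\|_2$ with centres in $\mathcal{V}_n$ and radius proportional to $\varepsilon^{1/s_{\nu}}$ is automatically a net in $\|\cdot\|_\infty$ of radius $\lesssim\varepsilon$; choosing the proportionality constant $c_{\nu}>0$ small enough makes it an $\varepsilon/(8C)$-net, and the passage between $N$ and the centres-in-set count costs only a constant factor in the radius by Lemma~\ref{lem: covering_number}. This yields $N(\varepsilon/(8C),\mathcal{V}_n,\|\cdot\|_\infty)\le N(c_{\nu}\varepsilon^{1/s_{\nu}},\mathcal{V}_n,\|\cdot\|_2)$, and chaining the three bounds gives the stated inequality.

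\textbf{Main obstacle.} Everything but Step (iii) is routine bookkeeping with the elementary covering/bracketing inequalities of Lemmas~\ref{lem: covering_number}--\ref{lem: bracketing_number} and the Lipschitz estimate of Lemma~\ref{lem: fn_norm_decomp}. The one genuine ingredient is the interpolation inequality $\|\phi\|_\infty \lesssim \|\phi\|_2^{s_{\nu}}$ with $s_{\nu}=2\beta/(2\beta+d+1)$ on the $\beta$-Hölder ball over $[0,\tau]\times[-1,1]^d$: this is a classical Gagliardo--Nirenberg-type estimate (the exponent is forced by a scaling argument on balls, and can be obtained directly by Taylor-expanding $\phi$ around a point where $|\phi|$ is nearly maximal), and since the ball \eqref{eqn: holder_ball} is defined through integer multi-indices no fractional-smoothness technicalities arise. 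The only point needing mild care is that $\|\cdot\|_2$ be taken against a measure comparable to Lebesgue measure on the domain, so that smallness in $L_2$ truly forces smallness in $L_\infty$, and that all implied constants be independent of $n$.
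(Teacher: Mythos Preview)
Your proposal is correct and follows essentially the same route as the paper's proof: reduce bracketing to covering via Lemmas~\ref{lem: covering_number}--\ref{lem: bracketing_number}, factor the net of $\mathcal{G}_n$ into nets of $\Theta$ and $\mathcal{V}_n$ through the Lipschitz estimate of Lemma~\ref{lem: fn_norm_decomp}, and then pass from $\|\cdot\|_\infty$ to $\|\cdot\|_2$ on $\mathcal{V}_n$ via the interpolation inequality $\|\phi\|_\infty\lesssim\|\phi\|_2^{s_\nu}$ on the H\"older ball. The only cosmetic difference is that the paper invokes \citet[Lemma~2]{chen1998sieve} for this interpolation step, whereas you sketch the Gagliardo--Nirenberg argument directly; your remark about needing centres in $\mathcal{V}_n$ (so that differences stay in the H\"older ball) is exactly the reason both proofs route through $\widetilde{N}$ before returning to $N$.
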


\subsection{Proofs of theorem \ref{thm: rate_pf} and \ref{thm: rate_fn}}
\begin{proof}[Proof of theorem \ref{thm: rate_pf}]
    The proof is divided into four steps.
    
    \paragraph{Step $1$} We denote $\phi_{0}=(h_{0},m_{0},\theta_{0})$ and $\widehat{\phi}=(\widehat{h},\widehat{m},\widehat{\theta})$, where $\widehat{h}\in\mathcal{H}_{n}$,$\widehat{m}\in\mathcal{M}_{n}$ and $\widehat{\theta}\in\Theta$. For arbitrary small $\varepsilon>0$, we have that
    \begin{eqnarray*}
    &&\inf_{d_{\textsf{PF}}\left(\widehat{\phi},\phi_{0}\right)\geq \varepsilon} \mathbb{E}\left[ l(T,\delta,Z;h_{0},m_{0},\theta_{0})-l(T,\delta,Z;\widehat{h},\widehat{m},\widehat{\theta})\right]\\
    &&=\inf_{d_{\textsf{PF}}\left(\widehat{\phi},\phi_{0}\right)\geq \varepsilon} \mathbb{E}_Z\left[\mathbb{E}_{T,\delta\mid Z}\left[\log p(T,\delta \mid Z;h_{0},m_{0},\theta_{0})-\log p(T,\delta \mid Z;\widehat{h},\widehat{m},\widehat{\theta})\right]\right]\\
    &&=\inf_{d_{\textsf{PF}}\left(\widehat{\phi},\phi_{0}\right)\geq \varepsilon}\mathbb{E}_Z\left[ 
    \fdivergence{KL}{\mathbb{P}_{\widehat{\phi},Z}}{\mathbb{P}_{\phi_{0},Z}}
    \right]
    \end{eqnarray*}
    Using the fact that $\fdivergence{KL}{\mathbb{P}_{\widehat{\phi},Z}}{\mathbb{P}_{\phi_{0},Z}}\geq 2 H^{2}(\mathbb{P}_{\widehat{\phi},Z}\parallel\mathbb{P}_{\phi_{0},Z})$. Thus, we further obtain that
    \begin{eqnarray*}
    &&\inf_{d_{\textsf{PF}}\left(\widehat{\phi},\phi_{0}\right)\geq \varepsilon} \mathbb{E}\left[ l(T,\delta,Z;h_{0},m_{0},\theta_{0})-l(T,\delta,Z;\widehat{h},\widehat{m},\widehat{\theta})\right]\\
    &&\geq \inf_{d_{\textsf{PF}}\left(\widehat{\phi},\phi_{0}\right)\geq \varepsilon}2\mathbb{E}_Z \left[H^{2}(\mathbb{P}_{\widehat{\phi},Z}\parallel\mathbb{P}_{\phi_{0},Z})\right]\\
    &&=2\inf_{d_{\textsf{PF}}\left(\widehat{\phi},\phi_{0}\right)\geq \varepsilon}d^{2}_{\textsf{PF}}\left(\widehat{\phi},\phi_{0}\right)\\
    &&\geq 2\varepsilon^{2}.
    \end{eqnarray*}
    
    \paragraph{Step $2$} Consider the following derivation.
    \begin{eqnarray*}
    &&\sup_{d_{\textsf{PF}}\left(\widehat{\phi},\phi_{0}\right)\leq \varepsilon}\var\left[l(T,\delta,Z;h_{0},m_{0},\theta_{0})-l(T,\delta,Z;\widehat{h},\widehat{m},\widehat{\theta})\right]\\
    &&\leq \sup_{d_{\textsf{PF}}\left(\widehat{\phi},\phi_{0}\right)\leq \varepsilon} \mathbb{E}\left[\left(l(T,\delta,Z;h_{0},m_{0},\theta_{0})-l(T,\delta,Z;\widehat{h},\widehat{m},\widehat{\theta})\right)^{2}\right]\\
    &&=\sup_{d_{\textsf{PF}}\left(\widehat{\phi},\phi_{0}\right)\leq \varepsilon}\mathbb{E}_Z\mathbb{E}_{T,\delta\mid Z}\left[\log p(T,\delta,Z;h_{0},m_{0},\theta_{0})-\log p(T,\delta,Z;\widehat{h},\widehat{m},\widehat{\theta})\right]^{2}\\
    % &&=4\sup_{d_{\textsf{PF}}\left(\widehat{\phi},\phi_{0}\right)\leq \varepsilon}\mathbb{E}_Z\int \left[p(T,\delta,Z;h_{0},m_{0},\theta_{0})\left(\log \sqrt{p(T,\delta,Z;h_{0},m_{0},\theta_{0})}-\log \sqrt{p(T,\delta,Z;\widehat{h},\widehat{m},\widehat{\theta})}\right)^{2}\right]\mu(dT\times d\delta)\\
    &&=4\sup_{d_{\textsf{PF}}\left(\widehat{\phi},\phi_{0}\right)\leq \varepsilon}\mathbb{E}_Z\left[\int \left(p(T,\delta,Z;h_{0},m_{0},\theta_{0})\left(\log \sqrt{\dfrac{p(T,\delta,Z;h_{0},m_{0},\theta_{0})}{p(T,\delta,Z;\widehat{h},\widehat{m},\widehat{\theta})}}\right)^{2}\right)\mu(dT\times d\delta)\right]
    \end{eqnarray*}
    By Taylor's expansion on $\log x$, there exists $\xi(T,\delta,Z)$ between $p^{\frac{1}{2}}(T,\delta,Z;h_{0},m_{0},\theta_{0})$ and $p^{\frac{1}{2}}(T,\delta,Z;\widehat{h},\widehat{m},\widehat{\theta})$ pointwisely such that
    \begin{eqnarray*}
    &&p(T,\delta,Z;h_{0},m_{0},\theta_{0})\left(\log \sqrt{\dfrac{p(T,\delta,Z;h_{0},m_{0},\theta_{0})}{p(T,\delta,Z;\widehat{h},\widehat{m},\widehat{\theta})}}\right)^{2}\\
    &&=p(T,\delta,Z;h_{0},m_{0},\theta_{0})\left(\log \sqrt{p(T,\delta,Z;h_{0},m_{0},\theta_{0})}-\log \sqrt{p(T,\delta,Z;\widehat{h},\widehat{m},\widehat{\theta})}\right)^{2}\\
    &&= \frac{p(T,\delta,Z;h_{0},m_{0},\theta_{0})}{\xi(T,\delta,Z)^{2}}\left(\sqrt{p(T,\delta,Z;h_{0},m_{0},\theta_{0})}-\sqrt{p(T,\delta,Z;\widehat{h},\widehat{m},\widehat{\theta})}\right)^{2}
    \end{eqnarray*}
    Since 
    \begin{align*}
        \dfrac{p(T,\delta,Z;h_{0},m_{0},\theta_{0})}{p(T,\delta,Z;\widehat{h},\widehat{m},\widehat{\theta})}=e^{l(T,\delta,Z;h_{0},m_{0},\theta_{0})-l(T,\delta,Z;\widehat{h},\widehat{m},\widehat{\theta})}
    \end{align*}
    by lemma \ref{lem: pf_l_bound}, $l(T,\delta,Z;h_{0},m_{0},\theta_{0})$ and $l(T,\delta,Z;\widehat{h},\widehat{m},\widehat{\theta})$ are bounded among$[0,\tau]\times\{0,1\}\times[-1,1]^{d}$ uniformly on all $\widehat{\phi}=(\widehat{h},\widehat{m},\widehat{\theta})$. Thus, there exist constants $C_{1}$ and $C_{2}$ such that $0<C_{1}\leq p(T,\delta,Z;h_{0},m_{0},\theta_{0})/p(T,\delta,Z;\widehat{h},\widehat{m},\widehat{\theta})\leq C_{2}$. This leads to the fact that $p(T,\delta,Z;h_{0},m_{0},\theta_{0})\frac{1}{\xi(T,\delta,Z)^{2}}$ is bounded. We further obtained that
    \begin{eqnarray*}
    &&p(T,\delta,Z;h_{0},m_{0},\theta_{0})\left(\log \sqrt{p(T,\delta,Z;h_{0},m_{0},\theta_{0})}-\log\sqrt{p(T,\delta,Z;\widehat{h},\widehat{m},\widehat{\theta})}\right)^{2}\\
    &&\lesssim \left|\sqrt{p(T,\delta,Z;h_{0},m_{0},\theta_{0})}-\sqrt{p(T,\delta,Z;\widehat{h},\widehat{m},\widehat{\theta})}\right|^{2}.
    \end{eqnarray*}
    Thus, we have that
    \begin{eqnarray*}
    &&\sup_{d_{\textsf{PF}}\left[\widehat{\phi},\phi_{0}\right]\leq \varepsilon}\var(l(T,\delta,Z;h_{0},m_{0},\theta_{0})-l(T,\delta,Z;\widehat{h},\widehat{m},\widehat{\theta}))\\
    &&\lesssim\sup_{d_{\textsf{PF}}\left(\widehat{\phi},\phi_{0}\right)\leq \varepsilon}\mathbb{E}_Z\left[\int\left|\sqrt{p(T,\delta,Z;h_{0},m_{0},\theta_{0})}-\sqrt{p(T,\delta,Z;\widehat{h},\widehat{m},\widehat{\theta})}\right|^{2}\mu(dT\times d\delta)\right]\\
    &&=\sup_{d_{\textsf{PF}}\left(\widehat{\phi},\phi_{0}\right)\leq \varepsilon}d^{2}_{\textsf{PF}}\left(\widehat{\phi},\phi_{0}\right)\\
    &&\leq \varepsilon^{2}.
    \end{eqnarray*}
    
    \paragraph{Step $3$} We define that $\widetilde{\mathcal{F}}_{n}=\{l(T,\delta,Z;\widehat{h},\widehat{m},\widehat{\theta})-l(T,\delta,Z;\pi_{n}h_{0},\pi_{n}m_{0},\pi_{n}\theta_{0}):\widehat{h}\in\mathcal{H}_{n},\widehat{m}\in\mathcal{M}_{n},\widehat{\theta}\in\Theta\}$.
    Here $(\pi_{n}h_{0},\pi_{n}m_{0},\pi_{n}\theta_{0})$ have been defined in lemma \ref{lem: pf_approx}. Obviously, we have that $\log N_{[]}(\varepsilon,\widetilde{\mathcal{F}}_{n},\|\cdot\|_{\infty}) = \log N_{[]}(\varepsilon,\mathcal{F}_{n},\|\cdot\|_{\infty})$, where $\mathcal{F}$ is defined in lemma \ref{lem: pf_capacity}. By lemma \ref{lem: pf_capacity}, we further have that
    \begin{eqnarray*}
    \log N_{[]}(\varepsilon,\mathcal{F}_{n},\|\cdot\|_{\infty})\lesssim \log\frac{1}{\varepsilon}+ \log N(c_{h}\varepsilon^{1/s_{h}},\mathcal{H}_{n},\|\cdot\|_{2})+ \log N(c_{m}\varepsilon^{1/s_{m}},\mathcal{M}_{n},\|\cdot\|_{2}).
    \end{eqnarray*}
    According to \cite[Theorem 7]{bartlett2019nearly}, under condition \ref{cond: sieve_PF}, we have that the VC-dimension of $\mathcal{H}_{n}$ and $\mathcal{M}_{n}$ satisfy that $\vcdim{\mathcal{H}_{n}} \lesssim n^{\frac{1}{\beta+d}}\log^{3}n$ and $\vcdim{\mathcal{M}_{n}}\lesssim n^{\frac{d}{\beta+d}}\log^{3}n$. Thus, we obtain that
    \begin{eqnarray*}
    \log N(c_{h}\varepsilon^{1/s_{h}},\mathcal{H}_{n},\|\cdot\|_{2})\lesssim \frac{\vcdim{\mathcal{H}_{n}}}{s_{h}}\log\frac{1}{\varepsilon}\lesssim n^{\frac{1}{\beta+d}}\log^{3}n\log\frac{1}{\varepsilon},
    \end{eqnarray*}
    and
    \begin{eqnarray*}
    \log N(c_{m}\varepsilon^{1/s_{m}},\mathcal{M}_{n},\|\cdot\|_{2})\lesssim \frac{\vcdim{\mathcal{M}_{n}}}{s_{\nu}}\log\frac{1}{\varepsilon}\lesssim n^{\frac{d}{\beta+d}}\log^{3}n\log\frac{1}{\varepsilon}.
    \end{eqnarray*}
    Thus, we obtain that $\log N_{[]}(\varepsilon,\widetilde{\mathcal{F}}_{n},\|\cdot\|_{\infty})\lesssim n^{\frac{d}{\beta+d}}\log^{3}n\log\frac{1}{\varepsilon}$.
    
    \paragraph{Step $4$} By the Cauchy-Schwartz inequality, we have that
    \begin{eqnarray*}
    \sqrt{\mathbb{E}\left[l(T,\delta,Z;\pi_{n}h_{0},\pi_{n}m_{0},\pi_{n}\theta_{0})-l(T,\delta,Z;h_{0},m_{0},\theta_{0})\right]}\\
    \leq \left[\mathbb{E}(l(T,\delta,Z;\pi_{n}h_{0},\pi_{n}m_{0},\pi_{n}\theta_{0})-l(T,\delta,Z;h_{0},m_{0},\theta_{0}))^{2}\right]^{\frac{1}{4}}.
    \end{eqnarray*}
    Similar to the second part and by lemma \ref{lem: pf_approx}, we further have that
    \begin{eqnarray*}
    \sqrt{\mathbb{E}\left[l(T,\delta,Z;\pi_{n}h_{0},\pi_{n}m_{0},\pi_{n}\theta_{0})-l(T,\delta,Z;h_{0},m_{0},\theta_{0})\right]}\lesssim \sqrt{d_{\textsf{PF}}\left(\pi_{n}\phi_{0},\phi_{0}\right)}\lesssim n^{-\frac{\beta}{2\beta+2d}}.
    \end{eqnarray*}
    Now let 
    \begin{align*}
        \tau = \frac{\beta}{2\beta+2d}-2\frac{\log\log n}{\log n}
    \end{align*}
    By Step 1,2,3 and \cite[Theorem 1]{shen1994convergence}, we have
    \begin{align*}
        d_{\textsf{PF}}\left(\widehat{\phi}_{n},\phi_{0}\right)&= \max\left(n^{-\tau},d_{\textsf{PF}}\left(\pi_{n}\phi_{0},\phi_{0}\right), \right. \\
        &\left. \sqrt{\mathbb{E}\left[l(T,\delta,Z;\pi_{n}h_{0},\pi_{n}m_{0},\pi_{n}\theta_{0})-l(T,\delta,Z;h_{0},m_{0},\theta_{0})\right]}\right)
    \end{align*}
    By lemma \ref{lem: pf_approx}, $d_{\textsf{PF}}\left(\pi_{n}\phi_{0},\phi_{0}\right)=O(n^{-\frac{\beta}{\beta+d}})$.\par 
    By Step 4, $\sqrt{\mathbb{E}\left[l(T,\delta,Z;\pi_{n}h_{0},\pi_{n}m_{0},\pi_{n}\theta_{0})-l(T,\delta,Z;h_{0},m_{0},\theta_{0})\right]}=O\left(n^{-\frac{\beta}{2\beta+2d}}\right)$.
    Thus, we have $d_{\textsf{PF}}\left(\widehat{\phi}_{n},\phi_{0}\right) = O(n^{-\frac{\beta}{2\beta+2d}}\log^{2}n) = \widetilde{O}(n^{-\frac{\beta}{2\beta+2d}})$.
\end{proof}

\begin{proof}[Proof of theorem \ref{thm: rate_fn}]
    The proof is divided into four steps.
    
    \paragraph{Step $1$} We denote $\psi_{0}=(\nu_{0},\theta_{0})$ and $\widehat{\psi}=(\widehat{\nu},\widehat{\theta})$, where $\widehat{\nu}\in\mathcal{V}_{n}$ and $\widehat{\theta}\in\Theta$. For arbitrary $0<\varepsilon\leq 1$, we have that
    \begin{eqnarray*}
    &&\inf_{d_{\textsf{FN}}\left(\widehat{\psi},\psi_{0}\right)\geq \varepsilon} \mathbb{E}\left[ l(T,\delta,Z;\nu_{0},\theta_{0})-l(T,\delta,Z;\widehat{\nu},\widehat{\theta})\right]\\
    &&=\inf_{d_{\textsf{FN}}\left(\widehat{\psi},\psi_{0}\right)\geq \varepsilon} \mathbb{E}_Z\left[\mathbb{E}_{T,\delta\mid Z}\left[\log p(T,\delta \mid Z;\nu_{0},\theta_{0})-\log p(T,\delta \mid Z;\widehat{\nu},\widehat{\theta})\right]\right]\\
    &&=\inf_{d_{\textsf{FN}}\left(\widehat{\psi},\psi_{0}\right)\geq \varepsilon}\mathbb{E}_Z \left[\fdivergence{KL}{\mathbb{P}_{\widehat{\psi},Z}}{\parallel\mathbb{P}_{\psi_{0},Z}}\right]
    \end{eqnarray*}
    Using the fact that $KL(\mathbb{P}_{\widehat{\psi},Z}\parallel\mathbb{P}_{\psi_{0},Z})\geq 2 H^{2}(\mathbb{P}_{\widehat{\psi},Z}\parallel\mathbb{P}_{\psi_{0},Z})$. Thus, we further obtain that
    \begin{eqnarray*}
    &&\inf_{d_{\textsf{FN}}\left(\widehat{\psi},\psi_{0}\right)\geq \varepsilon} \mathbb{E}\left[ l(T,\delta,Z;\nu_{0},\theta_{0})-l(T,\delta,Z;\widehat{\nu},\widehat{\theta})\right]\\
    &&\geq \inf_{d_{\textsf{FN}}\left(\widehat{\psi},\psi_{0}\right)\geq \varepsilon}2\mathbb{E}_Z\left[H^{2}(\mathbb{P}_{\widehat{\psi},Z}\parallel\mathbb{P}_{\psi_{0},Z})\right]\\
    &&=2\inf_{d_{\textsf{FN}}\left(\widehat{\psi},\psi_{0}\right)\geq \varepsilon}d^{2}_{\textsf{FN}}\left(\widehat{\psi},\psi_{0}\right)\\
    &&\geq 2\varepsilon^{2}.
    \end{eqnarray*}
    
    \paragraph{Step $2$} We consider the following derivation.
    \begin{eqnarray*}
    &&\sup_{d_{\textsf{FN}}\left(\widehat{\psi},\psi_{0}\right)\leq \varepsilon}\var\left[l(T,\delta,Z;\nu_{0},\theta_{0})-l(T,\delta,Z;\widehat{\nu},\widehat{\theta})\right]\\
    &&\leq \sup_{d_{\textsf{FN}}\left(\widehat{\psi},\psi_{0}\right)\leq \varepsilon} \mathbb{E}\left[\left(l(T,\delta,Z;\nu_{0},\theta_{0})-l(T,\delta,Z;\widehat{\nu},\widehat{\theta})\right)^{2}\right]\\
    &&=\sup_{d_{\textsf{FN}}\left(\widehat{\psi},\psi_{0}\right)\leq \varepsilon}\mathbb{E}_Z\left[\mathbb{E}_{T,\delta\mid Z}\left[\left(\log p(T,\delta,Z;\nu_{0},\theta_{0})-\log p(T,\delta,Z;\widehat{\nu},\widehat{\theta})\right)^{2}\right]\right]\\
    &&=4\sup_{d_{\textsf{FN}}\left(\widehat{\psi},\psi_{0}\right)\leq \varepsilon}\mathbb{E}_Z\left[\int \left(p(T,\delta,Z;\nu_{0},\theta_{0})(\log \sqrt{\dfrac{p(T,\delta,Z;\nu_{0},\theta_{0})}{p(T,\delta,Z;\widehat{\nu},\widehat{\theta})}})^{2}\right)\mu(dT\times d\delta)\right]\\
    \end{eqnarray*}
    By Taylor's expansion on $\log x$, there exists $\eta(T,\delta,Z)$ between $\sqrt{p(T,\delta,Z;\nu_{0},\theta_{0})}$ and $\sqrt{p(T,\delta,Z;\widehat{\nu},\widehat{\theta})}$ pointwisely such that
    \begin{eqnarray*}
    &&p(T,\delta,Z;\nu_{0},\theta_{0})(\log \sqrt{\dfrac{p(T,\delta,Z;\nu_{0},\theta_{0})}{p(T,\delta,Z;\widehat{\nu},\widehat{\theta})}})^{2}\\
    &&=p(T,\delta,Z;\nu_{0},\theta_{0})\left(\log \sqrt{p(T,\delta,Z;\nu_{0},\theta_{0})}-\log \sqrt{p(T,\delta,Z;\widehat{\nu},\widehat{\theta})}\right)^{2}\\
    &&= \dfrac{p(T,\delta,Z;\nu_{0},\theta_{0})}{\eta(T,\delta,Z)^{2}}\left(\sqrt{p(T,\delta,Z;\nu_{0},\theta_{0}})-\sqrt{p(T,\delta,Z;\widehat{\nu},\widehat{\theta})}\right)^{2}
    \end{eqnarray*}
    Since $p(T,\delta,Z;\nu_{0},\theta_{0})/p(T,\delta,Z;\widehat{\nu},\widehat{\theta})=e^{l(T,\delta,Z;\nu_{0},\theta_{0})-l(T,\delta,Z;\widehat{\nu},\widehat{\theta})}$, by lemma \ref{lem: fn_l_bound}, $l(T,\delta,Z;\nu_{0},\theta_{0})$ and $l(T,\delta,Z;\widehat{\nu},\widehat{\theta})$ are bounded on $[0,\tau]\times\{0,1\}\times[-1,1]^{d}$ uniformly for all $\widehat{\psi}=(\widehat{\nu},\widehat{\theta})$. Thus there exist constants $C_{3}$ and $C_{4}$ such that $0<C_{3}\leq p(T,\delta,Z;\nu_{0},\theta_{0})/p(T,\delta,Z;\widehat{\nu},\widehat{\theta})\leq C_{4}$. This leads to the fact that $p(T,\delta,Z;\nu_{0},\theta_{0})\frac{1}{\eta(T,\delta,Z)^{2}}$ is bounded. We further have that
    \begin{eqnarray*}
    &&p(T,\delta,Z;\nu_{0},\theta_{0})\left(\log \sqrt{p(T,\delta,Z;\nu_{0},\theta_{0})}-\log \sqrt{p(T,\delta,Z;\widehat{\nu},\widehat{\theta})}\right)^{2}\\
    &&\lesssim \left|\sqrt{p(T,\delta,Z;\nu_{0},\theta_{0})}-\sqrt{p(T,\delta,Z;\widehat{\nu},\widehat{\theta})}\right|^{2}.
    \end{eqnarray*}
    Thus, we have that
    \begin{eqnarray*}
    &&\sup_{d_{\textsf{FN}}\left(\widehat{\psi},\psi_{0}\right)\leq \varepsilon}\var\left[l(T,\delta,Z;\nu_{0},\theta_{0})-l(T,\delta,Z;\widehat{\nu},\widehat{\theta})\right]\\
    &&\lesssim\sup_{d_{\textsf{FN}}\left(\widehat{\psi},\psi_{0}\right)\leq \varepsilon}\mathbb{E}_Z\left[\int\left|\sqrt{p(T,\delta,Z;\nu_{0},\theta_{0})}-\sqrt{p(T,\delta,Z;\widehat{\nu},\widehat{\theta})}\right|^{2}\mu(dT\times d\delta)\right]\\
    &&=\sup_{d_{\textsf{FN}}\left(\widehat{\psi},\psi_{0}\right)\leq \varepsilon}d^{2}_{\textsf{FN}}\left(\widehat{\psi},\psi_{0}\right)\\
    &&\leq \varepsilon^{2}.
    \end{eqnarray*}
    
    \paragraph{Step $3$} We define that $\widetilde{\mathcal{G}}_{n}=\{l(T,\delta,Z;\widehat{\nu},\widehat{\theta})-l(T,\delta,Z;\pi_{n}\nu_{0},\pi_{n}\theta_{0}):\widehat{\nu}\in\mathcal{V}_{n},\theta\in\Theta\}$.
    Here $(\pi_{n}\nu_{0},\pi_{n}\theta_{0})$ have been defined in lemma \ref{lem: fn_approx}. Obviously, we have that $\log N_{[]}(\varepsilon,\widetilde{\mathcal{G}}_{n},\|\cdot\|_{\infty}) = \log N_{[]}(\varepsilon,\mathcal{G}_{n},\|\cdot\|_{\infty})$, where $\mathcal{G}$ is defined in lemma \ref{lem: fn_capacity}. By lemma \ref{lem: fn_capacity}, we further obtain that
    \begin{eqnarray*}
    \log N_{[]}(\varepsilon,\mathcal{G}_{n},\|\cdot\|_{\infty})\lesssim \log\frac{1}{\varepsilon}+ \log N(c_{\nu}\varepsilon^{1/s_{\nu}},\mathcal{V}_{n},\|\cdot\|_{2}).
    \end{eqnarray*}
    According to \cite[Theorem 7]{bartlett2019nearly}, under condition \ref{cond: sieve_FN}, we have that the VC-dimension of $\mathcal{V}_{n}$ satisfies that $\vcdim{\mathcal{V}_{n}}\lesssim n^{\frac{d+1}{\beta+d+1}}\log^{3}n$. Thus, we obtain that
    \begin{eqnarray*}
    \log N(c_{h}\varepsilon^{1/s_{\nu}},\mathcal{V}_{n},\|\cdot\|_{2})\lesssim \frac{\vcdim{\mathcal{V}_{n}}}{s_{\nu}}\log\frac{1}{\varepsilon}\lesssim n^{\frac{d+1}{\beta+d+1}}\log^{3}n\log\frac{1}{\varepsilon}.
    \end{eqnarray*}
    Furthermore, we get that $\log N_{[]}(\varepsilon,\widetilde{\mathcal{G}}_{n},\|\cdot\|_{\infty})\lesssim n^{\frac{d+1}{\beta+d+1}}\log^{3}n\log\frac{1}{\varepsilon}$.
    
    \paragraph{Step $4$} By the Cauchy-Schwartz inequality, we have that
    \begin{eqnarray*}
    \sqrt{\mathbb{E}[l(T,\delta,Z;\pi_{n}\nu_{0},\pi_{n}\theta_{0})-l(T,\delta,Z;\nu_{0},\theta_{0})]}\leq \left[\mathbb{E}\left(l(T,\delta,Z;\pi_{n}\nu_{0},\pi_{n}\theta_{0})-l(T,\delta,Z;\nu_{0},\theta_{0})\right)^{2}\right]^{\frac{1}{4}}.
    \end{eqnarray*}
    Similar to the second part and by lemma \ref{lem: fn_approx}, we further obtain that
    \begin{eqnarray*}
    &&\sqrt{\mathbb{E}[l(T,\delta,Z;\pi_{n}\nu_{0},\pi_{n}\theta_{0})-l(T,\delta,Z;\nu_{0},\theta_{0})]}\lesssim \sqrt{d_{\textsf{FN}}\left(\pi_{n}\psi_{0},\psi_{0}\right)}\lesssim n^{-\frac{\beta}{2\beta+2d+2}}
    \end{eqnarray*}
    Now let
    \begin{align*}
        \tau = \frac{\beta}{2\beta+2d+2}-2\frac{\log\log n}{\log n}.
    \end{align*}
    By step 1,2,3 and Step 1,2,3 and \cite[Theorem 1]{shen1994convergence},
    \begin{align*}
        d_{\textsf{FN}}\left(\widehat{\psi}_{n},\psi_{0}\right) &= \max\left(n^{-\tau},d_{\textsf{FN}}\left(\pi_{n}\psi_{0},\psi_{0}\right),\right.\\
        &\left.\sqrt{\mathbb{E}[l(T,\delta,Z;\pi_{n}\nu_{0},\pi_{n}\theta_{0})-l(T,\delta,Z;\nu_{0},\theta_{0})]}\right)
    \end{align*}
    By lemma\ref{lem: fn_approx}, $d_{\textsf{FN}}\left(\pi_{n}\psi_{0},\psi_{0}\right)=O(n^{-\frac{\beta}{\beta+d+1}})$ \par
    By Step 4, $\sqrt{\mathbb{E}[l(T,\delta,Z;\pi_{n}\nu_{0},\pi_{n}\theta_{0})-l(T,\delta,Z;\nu_{0},\theta_{0})]} = O(n^{-\frac{\beta}{2\beta+2d+2}})$.
    Thus, we have  $d_{\textsf{FN}}\left(\widehat{\psi}_{n},\psi_{0}\right) = O(n^{-\frac{\beta}{2\beta+2d+2}}\log^{2}n) = \widetilde{O}(n^{-\frac{\beta}{2\beta+2d+2}})$.
\end{proof}

\subsection{Proofs of technical lemmas}

\begin{proof}[Proof of lemma \ref{lem: pf_l_bound}]
    Since $h_{0}(T)\in \holderspace{\beta}{M}{[0, \tau]}$ and $m_{0}(Z)\in \holderspace{\beta}{M}{[-1, 1]^d}$, we have that $h_{0}(T)\leq M$, $m_{0}(Z)\leq M$ and $e^{m_{0}(Z)}\int_{0}^{T}h_{0}(s)ds\leq \tau e^{2M}$. Let $\mathcal{B}=[0,\tau e^{2M}]$, we have that
    \begin{eqnarray*}
    &&|l(T,\delta,Z;h_{0},m_{0},\theta_{0})|\\
    &&\leq \left|\log g_{\theta_{0}}\left(e^{m_{0}(Z)}\int_{0}^{T}e^{h_{0}(s)}ds\right)\right|+|h_{0}(T)|+ |m_{0}(Z)|+\left|G_{\theta_{0}}\left(e^{m_{0}(Z)}\int_{0}^{T}e^{h_{0}(s)}ds\right)\right|\\
    &&\leq 2M+\sup_{x\in \mathcal{B}}\left|\log g_{\theta_{0}}(x)\right|+\sup_{x\in \mathcal{B}}\left|G_{\theta_{0}}(x)\right|
    \end{eqnarray*}
    By condition 5, we have that $l(T,\delta,Z;h_{0},m_{0},\theta_{0})$ is bounded for $(T,\delta,Z)\in [0,\tau]\times\{0,1\}\times[-1,1]^{d}$. The proof of the boundness of $l(T,\delta,Z;\widehat{h},\widehat{m},\widehat{\theta})$ is similar.
\end{proof}

\begin{proof}[Proof of lemma \ref{lem: fn_l_bound}]
    Since $\nu_{0}(T,Z)\in \holderspace{\beta}{M}{[0, \tau] \times [-1, 1]^d}$, we have $\nu_{0}(T,Z)\leq M$ and $\int_{0}^{T} e^{\nu(s,Z)}ds\leq \tau e^{M}$. Let $\mathcal{B}=[0,\tau e^{M}]$, we have that
    \begin{eqnarray*}
    &&|l(T,\delta,Z;\nu_{0},\theta_{0})|\\
    &&\leq \left|\log G^{\prime}_{\theta_{0}}\left(\int_{0}^{T} e^{\nu_{0}(s,Z)}ds\right)\right|+ |\nu_{0}(T,Z)|+\left|G_{\theta_{0}}\left(\int_{0}^{T} e^{\nu_{0}(s,Z)}ds\right)\right|\\
    &&\leq M+\sup_{x\in \mathcal{B}}\left|\log G^{\prime}_{\theta_{0}}(x)\right|+\sup_{x\in \mathcal{B}}\left|G_{\theta_{0}}(x)\right|.
    \end{eqnarray*}
    By condition \ref{cond: G}, we have that $l(T,\delta,Z;\nu_{0},\theta_{0})$ is bounded among $(T,\delta,Z)\in [0,\tau]\times\{0,1\}\times[-1,1]^{d}$ The proof of the boundness of $l(T,\delta,Z;\widehat{\nu},\widehat{\theta})$ is similar.
\end{proof}

\begin{proof}[Proof of lemma \ref{lem: pf_norm_decomp}]
    By definition, we have that
    \begin{eqnarray*}
    &&|l(T,\delta,Z;h_{0},m_{0},\theta_{0})-l(T,\delta,Z;\widehat{h},\widehat{m},\widehat{\theta})|\\
    &&\leq
    \left|\log g_{\theta_{0}}\left(e^{m_{0}(Z)}\int_{0}^{T}e^{h_{0}(s)}ds\right)-\log g_{\widehat{\theta}}\left(e^{\widehat{m}(Z)}\int_{0}^{T}e^{\widehat{h}(s)}ds\right)\right|+\left|h_{0}(T)-\widehat{h}(T)\right|\\
    &&+|m_{0}(Z)-\widehat{m}(Z)|+\left|G_{\theta_{0}}\left(e^{m_{0}(Z)}\int_{0}^{T}e^{h_{0}(s)}ds\right)-G_{\widehat{\theta}}\left(e^{\widehat{m}(Z)}\int_{0}^{T}e^{\widehat{h}(s)}ds\right)\right|.
    \end{eqnarray*}
    Let $\mathcal{B}=[0,\tau\max(e^{2M},e^{M_{h}+M_{m}})]$. By Taylor's expansion, we can further show that
    \begin{eqnarray*}
    &&|l(T,\delta,Z;h_{0},m_{0},\theta_{0})-l(T,\delta,Z;\widehat{h},\widehat{m},\widehat{\theta})|\\
    &&\leq \sup_{\tilde{\theta}\in\Theta,\tilde{x}\in \mathcal{B}}\left|\frac{\partial \log g_{\tilde{\theta}}(\tilde{x})}{\partial_{\tilde{\theta}}}\right|\cdot\left|\theta_{0}-\widehat{\theta}\right|\\
    &&+\sup_{\tilde{\theta}\in\Theta,\tilde{x}\in \mathcal{B}}\left|\frac{\partial \log g_{\tilde{\theta}}(\tilde{x})}{\partial_{\tilde{x}}}\right|\cdot\left|e^{m_{0}(Z)}\int_{0}^{T}e^{h_{0}(s)}ds-e^{\widehat{m}(Z)}\int_{0}^{T}e^{\widehat{h}(s)}ds\right|\\
    &&+|h_{0}(T)-\widehat{h}(T)|+|m_{0}(Z)-\widehat{m}(Z)|+\sup_{\tilde{\theta}\in\Theta,\tilde{x}\in \mathcal{B}}\left|\frac{\partial G_{\tilde{\theta}}(\tilde{x})}{\partial_{\tilde{\theta}}}\right|\cdot\left|\theta_{0}-\widehat{\theta}\right|\\
    &&+\sup_{\tilde{\theta}\in\Theta,\tilde{x}\in \mathcal{B}}\left|\frac{\partial G_{\tilde{\theta}}(\tilde{x})}{\partial_{\tilde{x}}}\right|\cdot\left|e^{m_{0}(Z)}\int_{0}^{T}e^{h_{0}(s)}ds-e^{\widehat{m}(Z)}\int_{0}^{T}e^{\widehat{h}(s)}ds\right|.
    \end{eqnarray*}
    Again, by Taylor's expansion, we have that
    \begin{eqnarray*}
    &&\left|e^{m_{0}(Z)}\int_{0}^{T}e^{h_{0}(s)}ds-e^{\widehat{m}(Z)}\int_{0}^{T}e^{\widehat{h}(s)}ds\right|\\
    &&\leq \left|e^{m_{0}(Z)}\int_{0}^{T}(e^{h_{0}(s)}-e^{\widehat{h}(s)})ds\right|+\left|(e^{m_{0}(Z)}-e^{\widehat{m}(Z)})\int_{0}^{T}e^{\widehat{h}(s)}ds\right|\\
    &&\leq e^{M}\cdot\tau e^{\max(M,M_{h})}\left\|h_{0}-\widehat{h}\right\|_{\infty}+\tau e^{M_{h}}\cdot e^{\max(M,M_{m})}\|m_{0}-\widehat{m}\|_{\infty}.
    \end{eqnarray*}
    Finally, we obtain that
    \begin{eqnarray*}
    &&|l(T,\delta,Z;h_{0},m_{0},\theta_{0})-l(T,\delta,Z;\widehat{h},\widehat{m},\widehat{\theta})|\\
    &&\leq \sup_{\tilde{\theta}\in\Theta,\tilde{x}\in \mathcal{B}}\left|\frac{\partial \log g_{\tilde{\theta}}(\tilde{x})}{\partial_{\tilde{x}}}\right|\cdot \left[ e^{M}\cdot\tau e^{\max(M,M_{h})}\|h_{0}-\widehat{h}\|_{\infty}+\tau e^{M_{h}}\cdot e^{\max(M,M_{m})}\|m_{0}-\widehat{m}\|_{\infty}\right]\\
    &&+\sup_{\tilde{\theta}\in\Theta,\tilde{x}\in \mathcal{B}}\left|\frac{\partial \log g_{\tilde{\theta}}(\tilde{x})}{\partial_{\tilde{\theta}}}\right|\cdot\left|\theta_{0}-\widehat{\theta}\right|+\left|h_{0}(T)-\widehat{h}(T)\right|+|m_{0}(Z)-\widehat{m}(Z)|\\
    &&+\sup_{\tilde{\theta}\in\Theta,\tilde{x}\in \mathcal{B}}\left|\frac{\partial G_{\tilde{\theta}}(\tilde{x})}{\partial_{\tilde{\theta}}}\right|\cdot\left|\theta_{0}-\widehat{\theta}\right|\\
    &&+\sup_{\tilde{\theta}\in\Theta,\tilde{x}\in \mathcal{B}}\left|\frac{\partial G_{\tilde{\theta}}(\tilde{x})}{\partial_{\tilde{x}}}\right|\cdot \left[e^{M}\cdot\tau e^{\max(M,M_{h})}\|h_{0}-\widehat{h}\|_{\infty}+\tau e^{M_{h}}\cdot e^{\max(M,M_{m})}\|m_{0}-\widehat{m}\|_{\infty}\right].
    \end{eqnarray*}
    Taking supremum on both sides, we conclude that
    \begin{equation*}
    \|l(T,\delta,Z;h_{0},m_{0},\theta_{0})-l(T,\delta,Z;\widehat{h},\widehat{m},\widehat{\theta})\|_{\infty}\lesssim |\theta_{0}-\widehat{\theta}|+\|h_{0}-\widehat{h}\|_{\infty}+\|m_{0}-\widehat{m}\|_{\infty}.
    \end{equation*}
    The proof of the second inequality is similar.
\end{proof}

\begin{proof}[Proof of lemma \ref{lem: fn_norm_decomp}]
    By definition, we have that
    \begin{eqnarray*}
    &&|l(T,\delta,Z;\nu_{0},\theta_{0})-l(T,\delta,Z;\widehat{\nu},\widehat{\theta})|\\
    &&\leq
    \left|\log g_{\theta_{0}}\left(\int_{0}^{T}e^{\nu_{0}(s,Z)}ds\right)-\log g_{\widehat{\theta}}\left(\int_{0}^{T}e^{\widehat{\nu}(s,Z)}ds\right)\right|+\left|\nu_{0}(T,Z)-\widehat{\nu}(T,Z)\right|\\
    &&+\left|G_{\theta_{0}}\left(\int_{0}^{T}e^{\nu_{0}(s,Z)}ds\right)-G_{\widehat{\theta}}\left(\int_{0}^{T}e^{\widehat{\nu}(s,Z)}ds\right)\right|.
    \end{eqnarray*}
    Let $\mathcal{B}=[0,\tau\max(e^{M},e^{M_{\nu}})]$. By Taylor's expansion, we can further show that
    \begin{eqnarray*}
    &&|l(T,\delta,Z;\nu_{0},\theta_{0})-l(T,\delta,Z;\widehat{\nu},\widehat{\theta})|\\
    &&\leq \sup_{\tilde{\theta}\in\Theta,\tilde{x}\in \mathcal{B}}\left|\frac{\partial \log g_{\tilde{\theta}}(\tilde{x})}{\partial_{\tilde{\theta}}}\right|\cdot\left|\theta_{0}-\widehat{\theta}\right|+\sup_{\tilde{\theta}\in\Theta,\tilde{x}\in \mathcal{B}}\left|\frac{\partial \log g_{\tilde{\theta}}(\tilde{x})}{\partial_{\tilde{x}}}\right|\cdot\left|\int_{0}^{T}e^{\nu_{0}(s,Z)}ds-\int_{0}^{T}e^{\widehat{\nu}(s,Z)}ds\right|\\
    &&+|\nu_{0}(T,Z)-\widehat{\nu}(T,Z)|+\sup_{\tilde{\theta}\in\Theta,\tilde{x}\in \mathcal{B}}\left|\frac{\partial G_{\tilde{\theta}}(\tilde{x})}{\partial_{\tilde{\theta}}}\right|\cdot|\theta_{0}-\widehat{\theta}|\\
    &&+\sup_{\tilde{\theta}\in\Theta,\tilde{x}\in \mathcal{B}}\left|\frac{\partial G_{\tilde{\theta}}(\tilde{x})}{\partial_{\tilde{x}}}\right|\cdot\left|\int_{0}^{T}e^{\nu_{0}(s,Z)}ds-\int_{0}^{T}e^{\widehat{\nu}(s,Z)}ds\right|.
    \end{eqnarray*}
    Again, by Taylor's expansion, 
    \begin{eqnarray*}
    \left|\int_{0}^{T}e^{\nu_{0}(s,Z)}ds-\int_{0}^{T}e^{\widehat{\nu}(s,Z)}ds\right|\leq\tau e^{\max(M,M_{\nu})}\|\nu_{0}-\widehat{\nu}\|_{\infty},
    \end{eqnarray*}
    Finally, we obtain that
    \begin{eqnarray*}
    &&\left|l(T,\delta,Z;\nu_{0},\theta_{0})-l(T,\delta,Z;\widehat{\nu},\widehat{\theta})\right|\\
    &&\leq \sup_{\tilde{\theta}\in\Theta,\tilde{x}\in \mathcal{B}}\left|\frac{\partial \log g_{\tilde{\theta}}(\tilde{x})}{\partial_{\tilde{\theta}}}\right|\cdot\left|\theta_{0}-\widehat{\theta}\right| + \sup_{\tilde{\theta}\in\Theta,\tilde{x}\in \mathcal{B}}\left|\frac{\partial \log g_{\tilde{\theta}}(\tilde{x})}{\partial_{\tilde{x}}}\right|\cdot\tau e^{\max(M,M_{\nu})}\left\|\nu_{0}-\widehat{\nu}\right\|_{\infty}\\
    &&+|\nu_{0}(T,Z)-\widehat{\nu}(T,Z)|+\sup_{\tilde{\theta}\in\Theta,\tilde{x}\in \mathcal{B}}\left|\frac{\partial G_{\tilde{\theta}}(\tilde{x})}{\partial_{\tilde{\theta}}}\right|\cdot\left|\theta_{0}-\widehat{\theta}\right|\\
    &&+\sup_{\tilde{\theta}\in\Theta,\tilde{x}\in \mathcal{B}}\left|\frac{\partial G_{\tilde{\theta}}(\tilde{x})}{\partial_{\tilde{x}}}\right|\cdot\tau e^{\max(M,M_{\nu})}\left\|\nu_{0}-\widehat{\nu}\right\|_{\infty}.
    \end{eqnarray*}
    Taking supremum on both sides, we conclude that
    \begin{equation*}
    \|l(T,\delta,Z;\nu_{0},\theta_{0})-l(T,\delta,Z;\widehat{\nu},\widehat{\theta})\|_{\infty}\lesssim |\theta_{0}-\widehat{\theta}|+\|\nu_{0}-\widehat{\nu}\|_{\infty},
    \end{equation*}
    The proof of the second inequality is similar.
\end{proof}

\begin{proof}[Proof of lemma \ref{lem: pf_approx}]
    According to \cite[Theorem 1]{yarotsky2017error}, there exist approximating functions $\widehat{h}^{*}$ and $\widehat{m}^{*}$ such that $\|\widehat{h}^{*}-h_{0}\|_{\infty} = O\left(n^{-\frac{\beta}{\beta+d}}\right)$ and $\|\widehat{m}^{*}-m_{0}\|_{\infty} = O\left(n^{-\frac{\beta}{\beta+d}}\right)$. Let $\pi_{n}h_{0}=\widehat{h}^{*}$, $\pi_{n}m_{0}=\widehat{m}^{*}$, and $\pi_{n}\theta=\theta_{0}$. We have that
    \begin{eqnarray*}
    &&d_{\textsf{PF}}\left(\pi_{n}\phi_{0},\phi_{0}\right)\\
    &&=\sqrt{\mathbb{E}_Z\left[\int |\sqrt{p(T,\delta \mid Z;\pi_{n}h_{0},\pi_{n}m_{0},\pi_{n}\theta_{0})}-\sqrt{p(T,\delta \mid Z;h_{0},m_{0},\theta_{0})}|^{2}\mu(dT\times d\delta)\right]}\\
    &&=\sqrt{\mathbb{E}_Z\left[\int [e^{\frac{1}{2}l(T,\delta,Z;\pi_{n}h_{0},\pi_{n}m_{0},\pi_{n}\theta_{0})}-e^{\frac{1}{2}l(T,\delta,Z;h_{0},m_{0},\theta_{0})}]^{2}f_{C\mid Z}(T)^{1-\delta}S_{C\mid Z}(T)^{\delta}\mu(dT\times d\delta)\right]}\\
    &&\leq \left\|e^{\frac{1}{2}l(T,\delta,Z;\pi_{n}h_{0},\pi_{n}m_{0},\pi_{n}\theta_{0})}-e^{\frac{1}{2}l(T,\delta,Z;h_{0},m_{0},\theta_{0})}\right\|_{\infty} \\
    &&\qquad \qquad \times \sqrt{\mathbb{E}_Z\left[\int f_{C\mid Z}(T)^{1-\delta}S_{C\mid Z}(T)^{\delta}\mu(dT\times d\delta)\right]}.
    \end{eqnarray*}
    By lemma \ref{lem: pf_l_bound} and \ref{lem: pf_norm_decomp}, we have that
    \begin{eqnarray*}
    &&\|e^{\frac{1}{2}l(T,\delta,Z;\pi_{n}h_{0},\pi_{n}m_{0},\pi_{n}\theta_{0})}-e^{\frac{1}{2}l(T,\delta,Z;h_{0},m_{0},\theta_{0})}\|_{\infty}\\
    &&\lesssim \|\pi_{n}\theta_{0}-\theta_{0}\|+\|\pi_{n}h_{0}-h_{0}\|_{\infty}+\|\pi_{n}m_{0}-m_{0}\|_{\infty}\\
    &&=O\left(n^{-\frac{\beta}{\beta+d}}\right).
    \end{eqnarray*}
    Since $f_{C\mid Z}(T)^{1-\delta}\leq f_{C\mid Z}(T)+1$ and $S_{C\mid Z}(T)^{\delta}\leq 1$, we also have that
    \begin{eqnarray*}
    \sqrt{\mathbb{E}_Z\left[\int f_{C\mid Z}(T)^{1-\delta}S_{C\mid Z}(T)^{\delta}\mu(dT\times d\delta)\right]} &\leq&  \sqrt{\mathbb{E}_Z\left[\int (1+f_{C\mid Z}(T))\mu(dT\times d\delta)\right]}\\
    &\leq& \sqrt{2+2\tau}.
    \end{eqnarray*}
    Thus, we obtain that $d_{\textsf{PF}}\left(\pi_{n}\phi_{0},\phi_{0}\right) = O\left(n^{-\frac{\beta}{\beta+d}}\right)$.
\end{proof}

\begin{proof}[Proof of lemma \ref{lem: fn_approx}]
    According to \cite[Theorem 1]{yarotsky2017error}, there exists an approximating function $\widehat{\nu}^{*}$ such that $\|\widehat{\nu}^{*}-\nu_{0}\|_{\infty}=O\left(n^{-\frac{\beta}{\beta+d+1}}\right)$. Let $\pi_{n}\nu_{0} = \widehat{\nu}^{*}$ and $\pi_{n}\theta_{0}=\theta_{0}$. We have that
    \begin{eqnarray*}
    &&d_{\textsf{FN}}\left(\pi_{n}\psi_{0},\psi_{0}\right)\\
    &&=\sqrt{\mathbb{E}_Z\left[\int \left|\sqrt{p(T,\delta \mid Z;\pi_{n}\nu_{0},\pi_{n}\theta_{0})}-\sqrt{p(T,\delta \mid Z;\nu_{0},\theta_{0})}\right|^{2}\mu(dT\times d\delta)\right]}\\
    &&=\sqrt{\mathbb{E}_Z\left[\int \left[e^{\frac{1}{2}l(T,\delta,Z;\pi_{n}\nu_{0},\pi_{n}\theta_{0})}-e^{\frac{1}{2}l(T,\delta,Z;\nu_{0},\theta_{0})}\right]^{2}f_{C\mid Z}(T)^{1-\delta}S_{C\mid Z}(T)^{\delta}\mu(dT\times d\delta)\right]}\\
    &&\leq \left\|\frac{1}{2}e^{l(T,\delta,Z;\pi_{n}\nu_{0},\pi_{n}\theta_{0})}-\frac{1}{2}e^{l(T,\delta,Z;\nu_{0},\theta_{0})}\right\|_{\infty}\sqrt{\mathbb{E}_Z\left[\int f_{C\mid Z}(T)^{1-\delta}S_{C\mid Z}(T)^{\delta}\mu(dT\times d\delta)\right]}.
    \end{eqnarray*}
    By lemma \ref{lem: fn_l_bound} and \ref{lem: fn_norm_decomp}, we have that
    \begin{align*}
        \left\|e^{\frac{1}{2}l(T,\delta,Z;\pi_{n}\nu_{0},\pi_{n}\theta_{0})}-e^{\frac{1}{2}l(T,\delta,Z;\nu_{0},\theta_{0})}\right\|_{\infty} &\lesssim \|\pi_{n}\theta_{0}-\theta_{0}\|+\|\pi_{n}\nu_{0}-\nu_{0}\|_{\infty}\\
        &=O\left(n^{-\frac{\beta}{\beta+d+1}}\right).
    \end{align*}
    Since $f_{C\mid Z}(T)^{1-\delta}\leq f_{C\mid Z}(T)+1$ and $S_{C\mid Z}(T)^{\delta}\leq 1$, we also have that
    \begin{eqnarray*}
    \sqrt{\mathbb{E}_Z\left[\int f_{C\mid Z}(T)^{1-\delta}S_{C\mid Z}(T)^{\delta}\mu(dT\times d\delta)\right]} &\leq&  \sqrt{\mathbb{E}_Z\left[\int (1+f_{C\mid Z}(T))\mu(dT\times d\delta)\right]}\\
    &\leq& \sqrt{2+2\tau}.
    \end{eqnarray*}
    Thus, we obtain that $d_{\textsf{FN}}\left(\pi_{n}\psi_{0},\psi_{0}\right)= O\left(n^{-\frac{\beta}{\beta+d+1}}\right)$.
\end{proof}

\begin{proof}[Proof of lemma \ref{lem: covering_number}]
    The left inequality is trivial according to the definition of covering number.  We need to show that the correctness of the right inequality.

    Suppose that we have $\{B(g_{i},\frac{\varepsilon}{2})\},i=1\ldots,N$, where $N=N(\frac{\varepsilon}{2},\mathcal{F},\|\cdot\|)$, are the minimal number of $\frac{\varepsilon}{2}$-ball that covers $\mathcal{F}$. Then there exists at least one $f_{i}\in\mathcal{F}$ such that $f_{i}\in B(g_{i},\varepsilon)$. Consider the following $\varepsilon-balls$ $\{B(f_{i},\varepsilon)\},i=1\ldots,N$. For arbitrary $f\in \mathcal{F}\cap B(g_{i},\frac{\varepsilon}{2}),$ we have that $\|f-f_{i}\|\leq\|f-g_{i}\|+\|f_{i}-g_{i}\|\leq \varepsilon$. Thus $\{B(f_{i},\varepsilon)\},i=1\ldots,N$ forms a $\varepsilon$-covering of $\mathcal{F}$. By definition, we have that $\widetilde{N}(\varepsilon,\mathcal{F},\|\cdot\|)\leq N(\frac{\varepsilon}{2},\mathcal{F},\|\cdot\|)$.
\end{proof}

\begin{proof}[Proof of lemma \ref{lem: bracketing_number}]
    The proof of the first two inequalities follows exactly the same steps of lemma \ref{lem: covering_number}. Here we just need to mention the rest of the statement that $\widetilde{N}_{[]}(\varepsilon,\mathcal{F},\|\cdot\|_{\infty})=\widetilde{N}(\frac{\varepsilon}{2},\mathcal{F},\|\cdot\|_{\infty})$.
    We first choose a set of $\frac{\varepsilon}{2}$-covering balls $\{B(f_{i},\frac{\varepsilon}{2})\},i=1,\ldots,N_{1}$, where $N_{1}=\widetilde{N}(\frac{\varepsilon}{2},\mathcal{F},\|\cdot\|_{\infty})$. Now we construct a set of brackets $\{[l_{i},u_{i}]\},i=1\ldots,N_{1}$, where $l_{i}=f_{i}-\frac{\varepsilon}{2}$ and $u_{i}=f_{i}+\frac{\varepsilon}{2}$. Noting that the bracket $\{[l_{i},u_{i}]\}$ is exactly the same as $B(f_{i},\frac{\varepsilon}{2})$, The set $\{[l_{i},u_{i}]\},i=1,\ldots,N_{1}$ covers $\mathcal{F}$, which leads to $\widetilde{N}_{[]}(\varepsilon,\mathcal{F},\|\cdot\|_{\infty})\leq\widetilde{N}(\frac{\varepsilon}{2},\mathcal{F},\|\cdot\|_{\infty})$. Likewise, we have that $\widetilde{N}_{[]}(\varepsilon,\mathcal{F},\|\cdot\|_{\infty})\geq\widetilde{N}(\frac{\varepsilon}{2},\mathcal{F},\|\cdot\|_{\infty})$. Consequently,  we have that
    $\widetilde{N}_{[]}(\varepsilon,\mathcal{F},\|\cdot\|_{\infty})=\widetilde{N}(\frac{\varepsilon}{2},\mathcal{F},\|\cdot\|_{\infty})$.
\end{proof}

\begin{proof}[Proof of lemma \ref{lem: pf_capacity}]
    By lemma \ref{lem: bracketing_number}, first we have that $N_{[]}(\varepsilon,\mathcal{F}_{n},\|\cdot\|_{\infty})\leq \widetilde{N}_{[]}(\varepsilon,\mathcal{F}_{n},\|\cdot\|_{\infty})$. By lemma \ref{lem: pf_norm_decomp}, there exists a constant $c_{1}>0$ such that for arbitrary $\widehat{h}_{1},\widehat{h}_{2}\in\mathcal{H}_{n}$,$\widehat{m}_{1},\widehat{m}_{2}\in\mathcal{M}_{n}$ and $\widehat{\theta}_{1},\widehat{\theta}_{2}\in\Theta$, we have that
    \begin{eqnarray*}
    \|l(T,\delta,Z;\widehat{h}_{1},\widehat{m}_{1},\theta_{1})-l(T,\delta,Z;\widehat{h}_{2},\widehat{m}_{2},\theta_{2})\|_{\infty}\leq c_{1}[
    |\widehat{\theta}_{1}-\widehat{\theta}_{2}|+\|\widehat{h}_{1}-\widehat{h}_{2}\|_{\infty}+\|\widehat{m}_{1}-\widehat{m}_{2}\|_{\infty}],
    \end{eqnarray*}
    which indicates that as long as $|\widehat{\theta}_{1}-\widehat{\theta}_{2}|\leq\frac{\varepsilon}{3c_{1}}$, $\|\widehat{h}_{1}-\widehat{h}_{2}\|_{\infty}\leq\frac{\varepsilon}{3c_{1}}$ and $\|\widehat{m}_{1}-\widehat{m}_{2}\|_{\infty}\leq\frac{\varepsilon}{3c_{1}}$, we have that $\|l(T,\delta,Z;\widehat{h}_{1},\widehat{m}_{1},\theta_{1})-l(T,\delta,Z;\widehat{h}_{2},\widehat{m}_{2},\theta_{2})\|_{\infty}\leq \varepsilon$. Consequently, we have that
    \begin{eqnarray*}
    \widetilde{N}_{[]}(\varepsilon,\mathcal{F}_{n},\|\cdot\|_{\infty})\leq \widetilde{N}_{[]}(\frac{\varepsilon}{3c_{1}},\Theta,\|\cdot\|_{\infty})\times
    \widetilde{N}_{[]}(\frac{\varepsilon}{3c_{1}},\mathcal{H}_{n},\|\cdot\|_{\infty})\times\widetilde{N}_{[]}(\frac{\varepsilon}{3c_{1}},\mathcal{M}_{n},\|\cdot\|_{\infty}).
    \end{eqnarray*}
    
    Since $\Theta$ is a compact set on $\mathbb{R}$, by lemma \ref{lem: bracketing_number} and traditional volume argument, we have that $\widetilde{N}_{[]}(\frac{\varepsilon}{3c_{1}},\Theta,\|\cdot\|_{\infty})\leq N_{[]}(\frac{\varepsilon}{6c_{1}},\Theta,\|\cdot\|_{\infty})\lesssim\frac{1}{\varepsilon}$.
    
    For $\widetilde{N}_{[]}(\frac{\varepsilon}{3c_{1}},\mathcal{H}_{n},\|\cdot\|_{\infty})$, by lemma \ref{lem: bracketing_number}, we have that $\widetilde{N}_{[]}(\frac{\varepsilon}{3c_{1}},\mathcal{H}_{n},\|\cdot\|_{\infty})=\widetilde{N}(\frac{\varepsilon}{3c_{1}},\mathcal{H}_{n},\|\cdot\|_{\infty})$. By \cite[Lemma 2]{chen1998sieve}, there exists a constant $c_{2}>0$ such that $\|\widehat{h}_{1}-\widehat{h}_{2}\|_{\infty}\leq c_{2}\|\widehat{h}_{1}-\widehat{h}_{2}\|_{2}^{s_{h}}$, which leads to $\widetilde{N}(\frac{\varepsilon}{3c_{1}},\mathcal{H}_{n},\|\cdot\|_{\infty})\leq \widetilde{N}(\frac{\varepsilon^{1/s_{h}}}{(3c_{1}c_{2})^{1/s_{h}}},\mathcal{H}_{n},\|\cdot\|_{2})$. By lemma \ref{lem: covering_number} we further have that $\widetilde{N}(\frac{\varepsilon^{1/s_{h}}}{(3c_{1}c_{2})^{1/s_{h}}},\mathcal{H}_{n},\|\cdot\|_{2}) \leq N(\frac{\varepsilon^{1/s_{h}}}{2(3c_{1}c_{2})^{1/s_{h}}},\mathcal{H}_{n},\|\cdot\|_{2})$. Let $c_{h}=\frac{1}{2(3c_{1}c_{2})^{1/s_{h}}}$. We have that $\widetilde{N}_{[]}(\frac{\varepsilon}{3c_{1}},\mathcal{H}_{n},\|\cdot\|_{\infty})\leq N(c_{h}\varepsilon^{1/s_{h}},\mathcal{H}_{n},\|\cdot\|_{2})$.
    
    Similarly, there exists a constant $c_{m}>0$ such that $\widetilde{N}_{[]}(\frac{\varepsilon}{3c_{1}},\mathcal{M}_{n},\|\cdot\|_{\infty})\leq N(c_{m}\varepsilon^{1/s_{m}},\mathcal{M}_{n},\|\cdot\|_{2})$.
    
    Thus, finally we can obtain that
    \begin{eqnarray*}
    N_{[]}(\varepsilon,\mathcal{F}_{n},\|\cdot\|_{\infty})\lesssim \frac{1}{\varepsilon} N(c_{h}\varepsilon^{1/s_{h}},\mathcal{H}_{n},\|\cdot\|_{2})\times N(c_{m}\varepsilon^{1/s_{m}},\mathcal{M}_{n},\|\cdot\|_{2}).
    \end{eqnarray*}
\end{proof}

\begin{proof}[Proof of lemma \ref{lem: fn_capacity}]
    By lemma \ref{lem: bracketing_number}, first we have $N_{[]}(\varepsilon,\mathcal{G}_{n},\|\cdot\|_{\infty})\leq \widetilde{N}_{[]}(\varepsilon,\mathcal{G}_{n},\|\cdot\|_{\infty})$. By lemma \ref{lem: fn_norm_decomp}, there exists a constant $c_{3}>0$ such that for arbitrary $\widehat{\nu}_{1},\widehat{\nu}_{2}\in\mathcal{V}_{n}$ and $\widehat{\theta}_{1},\widehat{\theta}_{2}\in\Theta$, we have that
    \begin{eqnarray*}
    \|l(T,\delta,Z;\widehat{\nu}_{1},\widehat{\theta}_{1})-l(T,\delta,Z;\widehat{\nu}_{2},\widehat{\theta}_{2})\|_{\infty}\leq c_{3}[
    |\widehat{\theta}_{1}-\widehat{\theta}_{2}|+\|\widehat{\nu}_{1}-\widehat{\nu}_{2}\|_{\infty}],
    \end{eqnarray*}
    which indicates that as long as $|\widehat{\theta}_{1}-\widehat{\theta}_{2}|\leq\frac{\varepsilon}{2c_{3}}$and $\|\widehat{\nu}_{1}-\widehat{\nu}_{2}\|_{\infty}\leq\frac{\varepsilon}{2c_{3}}$, we have that $\|l(T,\delta,Z;\widehat{\nu}_{1},\widehat{\theta}_{1})-l(T,\delta,Z;\widehat{\nu}_{2},\widehat{\theta}_{2})\|_{\infty}\leq \varepsilon$. Thus, we have:
    \begin{eqnarray*}
    \widetilde{N}_{[]}(\varepsilon,\mathcal{G}_{n},\|\cdot\|_{\infty})\leq \widetilde{N}_{[]}(\frac{\varepsilon}{2c_{3}},\Theta,\|\cdot\|_{\infty})\times
    \widetilde{N}_{[]}(\frac{\varepsilon}{2c_{3}},\mathcal{V}_{n},\|\cdot\|_{\infty}).
    \end{eqnarray*}
    
    Since $\Theta$ is a compact set on $\mathbb{R}$, by lemma \ref{lem: bracketing_number} and traditional volume argument, we have that $\widetilde{N}_{[]}(\frac{\varepsilon}{2c_{3}},\Theta,\|\cdot\|_{\infty})\leq N_{[]}(\frac{\varepsilon}{4c_{3}},\Theta,\|\cdot\|_{\infty})\lesssim\frac{1}{\varepsilon}$.
    
    For $\widetilde{N}_{[]}(\frac{\varepsilon}{2c_{3}},\mathcal{V}_{n},\|\cdot\|_{\infty})$, by lemma \ref{lem: bracketing_number}, we have that $\widetilde{N}_{[]}(\frac{\varepsilon}{2c_{3}},\mathcal{V}_{n},\|\cdot\|_{\infty})=\widetilde{N}(\frac{\varepsilon}{2c_{3}},\mathcal{V}_{n},\|\cdot\|_{\infty})$. By \cite[Lemma 2]{chen1998sieve}, there exists a constant $c_{4}>0$ such that $\|\widehat{\nu}_{1}-\widehat{\nu}_{2}\|_{\infty}\leq c_{4}\|\widehat{\nu}_{1}-\widehat{\nu}_{2}\|_{2}^{s_{h}}$, which leads to $\widetilde{N}(\frac{\varepsilon}{2c_{3}},\mathcal{V}_{n},\|\cdot\|_{\infty})\leq \widetilde{N}(\frac{\varepsilon^{1/s_{\nu}}}{(2c_{3}c_{4})^{1/s_{\nu}}},\mathcal{V}_{n},\|\cdot\|_{2})$. By lemma \ref{lem: covering_number} we further have $\widetilde{N}(\frac{\varepsilon^{1/s_{\nu}}}{(2c_{3}c_{4})^{1/s_{\nu}}},\mathcal{V}_{n},\|\cdot\|_{2}) \leq N(\frac{\varepsilon^{1/s_{\nu}}}{2(2c_{3}c_{4})^{1/s_{\nu}}},\mathcal{V}_{n},\|\cdot\|_{2})$. Let $c_{\nu}=\frac{1}{2(2c_{3}c_{4})^{1/s_{\nu}}}$, we have that $\widetilde{N}_{[]}(\frac{\varepsilon}{2c_{3}},\mathcal{V}_{n},\|\cdot\|_{\infty})\leq N(c_{\nu}\varepsilon^{1/s_{\nu}},\mathcal{V}_{n},\|\cdot\|_{2})$.
    
    Thus, finally we can obtain that
    \begin{eqnarray*}
    N_{[]}(\varepsilon,\mathcal{G}_{n},\|\cdot\|_{\infty})\lesssim \frac{1}{\varepsilon} N(c_{\nu}\varepsilon^{1/s_{\nu}},\mathcal{V}_{n},\|\cdot\|_{2}).
    \end{eqnarray*}
\end{proof}

% \end{document}

    \section{Experimental details}
    \subsection{Dataset summary}\label{sec: dataset_summary}
    We report summaries of descriptive statistics of the $6$ benchmark datasets used in section \ref{sec: benchmark_data} in table \ref{tab: datasets}.
    \begin{table}[]
        \centering
        \caption{Descriptive statistics of benchmark datasets}
        % \resizebox{\textwidth}{!}{%
        \begin{tabular}{l c c c c c c}
            \toprule
                              & METABRIC & RotGBSG    & FLCHAIN & SUPPORT & MIMIC-III & KKBOX     \\
            \midrule
            Size              & $1904$   & $2232$  & $6524$  & $8873$     & $35953$ & $2646746$ \\
            Censoring rate    & $0.423$  & $0.432$ & $0.699$ & $0.320$    & $0.901$ & $0.280$   \\  
            Features          & $9$      & $7$     & $8$     & $14$       & $26$    & $15$      \\
            \bottomrule
        \end{tabular}
        % }
        \label{tab: datasets}
    \end{table}
    \subsection{Details of synthetic experiments}\label{sec: synthetic_details}
    Since the true model is assumed to be of PF form, we generate event time according to the following transformed regression model \cite{dabrowska1988partial}:
    \begin{align}\label{eqn: transformation_mdoel}
        \log H(\tilde{T}) = -m(Z) + \epsilon,
    \end{align}
    where $H(t) = \int_0^t e^{h(s)} ds$ with $h$ defined in \eqref{eqn: proportional_frailty}. The error term $\epsilon$ is generated such that $e^{\epsilon}$ has cumulative hazard function $G_\theta$. The formulation \eqref{eqn: transformation_mdoel} is the equivalent to \eqref{eqn: proportional_frailty} \cite{dabrowska1988partial, cuzick1988rank, kosorok2004robust}. In our experiments, the covariates are of dimension $5$, sampled independently from the uniform distribution over $[0, 1]$. We set $h(t) = t$ and hence $H(t) = e^t$. The function form of $m(Z)$ is set to be $m(Z) = \sin(\langle Z, \beta \rangle) + \langle\sin(Z), \beta \rangle$, where $\beta = (0.1, 0.2, 0.3, 0.4, 0.5)$. Then censoring time $C$ is generated according to 
    \begin{align}
        \log H(C) = -m(Z) + \epsilon_C,
    \end{align}
    which reuses covariate $Z$, and draws independently a noise vector $\epsilon_C$ such that the censoring ratio is controlled at around $40\%$. We generate three datasets with $n \in \{1000, 5000, 10000\}$ respectively. \par
    \textbf{Hyperparameter configurations} We specify below the network architectures and optimization configurations used in all the tasks:
    \begin{description}
        \item[PF scheme: ] For both $\widehat{m}$ and $\widehat{h}$, we use $64$ hidden units for $n=1000$, $128$ hidden units for $n=5000$ and $256$ hidden units for $n=10000$. We train each model for $100$ epochs with batch size $128$, optimized using Adam with learning rate $0.0001$, and no weight decay.
        \item[FN scheme: ] For both $\widehat{\nu}$, we use $64$ hidden units for $n=1000$, $128$ hidden units for $n=5000$ and $256$ hidden units for $n=10000$. We train each model for $100$ epochs with batch size $128$, optimized using Adam with learning rate $0.0001$, and no weight decay.
    \end{description}
    \subsection{Details of public data experiments}\label{sec: public_data_details}
    \textbf{Dataset preprocessing} For METABRIC, RotGBSG, FLCHAIN, SUPPORT and KKBOX dataset, we take the version provided in the {\fontfamily{qcr}\selectfont pycox} package \cite{kvamme2019time}. We standardize continuous features into zero mean and unit variance and do one-hot encodings for all categorical features. For the MIMIC-III dataset, we follow the preprocessing routines in \cite{purushotham2018benchmarking} which extracts $26$ features. The event of interest is defined as the mortality after admission, and the censored time is defined as the last time of being discharged from the hospital. The definition is similar to that in \cite{tang2022soden}. But since the dataset is not open sourced, according to our implementation the resulting dataset exhibits a much higher censoring rate ($90.2\%$ as compared to $61.0\%$ as reported in the SODEN paper \cite{tang2022soden}). Since the major purpose of this paper is for the proposal of the NFM framework, We use our own version of the processed dataset to further verify the predictive performance of NFM.\par 
    \textbf{Hyperparameter configurations} We follow the general training template that uses MLP as all nonparametric function approximators (i.e., $\widehat{m}$ and $\widehat{h}$ in the PF scheme, and $\widehat{\nu}$ in the FN scheme), and train for $100$ epochs across all datasets using Adam as the optimizer. The tunable parameters and their respective tuning ranges are reported as follows:
    \begin{description}
        \item[Number of layers (network depth)] We tune the network depth $L \in \{2, 3, 4\}$. Typically, the performance of two-layer MLPs is sufficiently satisfactory.
        \item[Number of hidden units in each layer (network width)] We tune the network width $W \in \{2^k, 5 \le k \le 10 \}$.  
        \item[Optional dropout] We optionally apply dropout with probability $p \in \{0.1, 0.2, 0.3, 0.5, 0.7\}$. 
        \item[Batch size] We tune batch size within the range $\{128, 256, 512\}$, in the KKBOX dataset, we also tested with larger batch sizes $\{1024\}$.
        \item[Learning rate and weight decay] We tune both the learning rate and weight decay coefficient of Adam within range $\{0.01, 0.001, 0.0001\}$.
        \item[Frailty specification] We tested gamma frailty, Box-Cox transformation frailty, and $\text{IGG}(\alpha)$ frailty with $\alpha \in \{0, 0.25, 0.75\}$. Here note that $\text{IGG}(0.5)$ is equivalent to gamma frailty. We also empirically tried to set $\alpha$ to be a learnable parameter and found that this additional flexibility provides little performance improvement regarding the datasets used for evaluation.
    \end{description}
    
    \subsection{Implementations}
    We use \qcr{pytorch} to implement NFM. \textbf{The source code is provided in the supplementary material}. For the baseline models:
    \begin{itemize}[leftmargin=*]
        \item We use the implementations of \ph, GBM, and RSF from the \texttt{sksurv} package \cite{sksurv}, for the KKBOX dataset, we use the XGBoost library \cite{chen2016xgboost} to implement GBM and RSF, which might yield some performance degradation.
        \item We use the \qcr{pycox} package to implement DeepSurv, CoxTime, and DeepHit models.
        \item We use the official code provided in the SODEN paper \cite{tang2022soden} to implement SODEN.
        \item We obtain results of SuMo and DeepEH based on our re-implementations.
    \end{itemize}
     
    \section{Additional experiments}\label{sec: additional_experiments}
    \subsection{Recovery assessment of $m(Z)$ in PF scheme}\label{sec: m_z}
    We plot empirical recovery results targeting the $m$ function in \eqref{eqn: proportional_frailty} in figure \ref{fig: synthesis_pf_m}. The result demonstrates satisfactory recovery with a moderate amount of data, i.e., $n \ge 1000$.
    \begin{figure}
        % \centering
        \begin{subfigure}[b]{.33\linewidth}
        \resizebox{\linewidth}{0.8\linewidth}{
         % This file was created with tikzplotlib v0.10.1.
\begin{tikzpicture}

\definecolor{darkgray176}{RGB}{176,176,176}
\definecolor{lightgray204}{RGB}{204,204,204}
\definecolor{steelblue31119180}{RGB}{31,119,180}

\begin{axis}[
legend cell align={left},
legend style={
  font=\tiny,
  fill opacity=0.8,
  draw opacity=1,
  text opacity=1,
  at={(0.03,0.97)},
  anchor=north west,
  draw=lightgray204
},
% set width of axis box
scale only axis,
width=4cm,
height=3cm,
tick align=outside,
tick pos=left,
title={\fontsize{6}{6}\selectfont $N=1000$},
tick label style={font=\tiny},
% major tick length=\MajorTickLength,
every tick/.style={
black,
semithick,
},
x label style={at={(axis description cs:0.5,-0.1)},anchor=north,font=\tiny},
y label style={at={(axis description cs:-0.06,.5)},rotate=90,anchor=south,font=\tiny},
x grid style={darkgray176},
xlabel={index},
xmin=-4.95, xmax=103.95,
xtick style={color=black},
y grid style={darkgray176},
ylabel={$m(Z)$},
ymin=-0.75, ymax=2.5,
ytick style={color=black}
]
\addplot [width=1pt, steelblue31119180, mark=*, mark size=0.5, mark options={solid}]
table {%
0 1.57497000694275
1 1.15095257759094
2 1.24041545391083
3 1.3399760723114
4 1.28155565261841
5 0.879355072975159
6 0.719038009643555
7 1.02315044403076
8 0.771758854389191
9 1.35975790023804
10 0.400826245546341
11 0.611046493053436
12 1.01262211799622
13 1.00679564476013
14 0.508017241954803
15 0.83271312713623
16 1.10645389556885
17 0.976385712623596
18 0.501031994819641
19 0.602039337158203
20 1.07092881202698
21 1.28955626487732
22 0.877973675727844
23 0.728686451911926
24 0.900317311286926
25 0.748838305473328
26 0.897845506668091
27 0.782133460044861
28 0.879701375961304
29 0.946312308311462
30 1.44770526885986
31 1.15533781051636
32 0.787004947662354
33 1.12570643424988
34 0.875810384750366
35 1.34348261356354
36 1.16493248939514
37 1.2958949804306
38 0.902328908443451
39 0.78005850315094
40 0.502965211868286
41 0.895460367202759
42 0.568686366081238
43 1.49195671081543
44 0.808770895004272
45 1.38960492610931
46 1.40202236175537
47 1.22090935707092
48 0.618276715278625
49 0.541274189949036
50 0.57879626750946
51 1.24983549118042
52 1.17480874061584
53 1.09575653076172
54 0.916525602340698
55 1.15049576759338
56 0.703235447406769
57 0.648393869400024
58 0.828435242176056
59 0.30644565820694
60 0.564027011394501
61 0.975495159626007
62 0.347937047481537
63 0.907636761665344
64 1.01808476448059
65 0.8969566822052
66 0.817118585109711
67 1.30936086177826
68 1.01583576202393
69 1.00723612308502
70 1.23634219169617
71 1.18156313896179
72 0.455988466739655
73 0.742524027824402
74 1.25001001358032
75 1.11372780799866
76 1.37222063541412
77 0.427757084369659
78 1.20141196250916
79 0.922923922538757
80 0.612215280532837
81 1.25139260292053
82 0.437425285577774
83 0.570523858070374
84 0.950319588184357
85 0.582384705543518
86 0.560596466064453
87 1.1056444644928
88 0.855743169784546
89 1.25217723846436
90 0.801432192325592
91 1.00741791725159
92 0.87658429145813
93 0.5574049949646
94 0.7025306224823
95 0.990389108657837
96 0.755933701992035
97 0.963387846946716
98 1.30714702606201
99 0.795306086540222
};
\addlegendentry{ground truth}
\addplot [width=1pt, red, opacity=0.7]
table {%
0 1.46627020835876
1 0.897751450538635
2 1.07277381420135
3 1.16461038589478
4 1.09390449523926
5 0.753188788890839
6 0.591244339942932
7 1.08397006988525
8 0.615553379058838
9 1.31365919113159
10 0.338740646839142
11 0.50578236579895
12 0.773455142974854
13 0.830889105796814
14 0.366430848836899
15 0.36038863658905
16 1.04778110980988
17 0.793281197547913
18 0.37189507484436
19 0.452165931463242
20 0.691297888755798
21 1.2732857465744
22 0.872041702270508
23 0.562065482139587
24 0.782313823699951
25 0.616823613643646
26 0.728181123733521
27 0.598521888256073
28 0.790544748306274
29 0.845514953136444
30 1.18972682952881
31 1.17812585830688
32 0.558782398700714
33 1.34303951263428
34 0.775462746620178
35 1.02600860595703
36 0.89701521396637
37 1.43358910083771
38 0.605113983154297
39 0.483703672885895
40 0.443449139595032
41 1.14561867713928
42 0.310254216194153
43 1.7034512758255
44 0.66513454914093
45 1.35068941116333
46 1.27393507957458
47 1.3245313167572
48 0.231390058994293
49 0.167154967784882
50 0.560238242149353
51 1.02528369426727
52 0.923549592494965
53 1.2445056438446
54 1.01678991317749
55 1.03876209259033
56 0.42801433801651
57 0.382587969303131
58 0.74907124042511
59 0.106025271117687
60 0.495606064796448
61 0.761173009872437
62 0.365689039230347
63 0.600231647491455
64 0.692326307296753
65 0.377035200595856
66 0.580044746398926
67 1.44493722915649
68 0.963061809539795
69 0.93490594625473
70 1.22871708869934
71 0.890346467494965
72 0.554029226303101
73 0.393452376127243
74 1.35665416717529
75 1.10235333442688
76 1.06051921844482
77 0.347979128360748
78 0.79928183555603
79 0.746177554130554
80 0.547842383384705
81 1.25723242759705
82 0.30756676197052
83 0.356465876102448
84 0.571497797966003
85 0.57218211889267
86 0.636925339698792
87 0.644957780838013
88 0.564066052436829
89 0.947020053863525
90 0.492300033569336
91 0.538690328598022
92 0.70821475982666
93 0.47789254784584
94 0.440053045749664
95 0.799906253814697
96 0.520761728286743
97 0.834645509719849
98 1.22271060943604
99 0.531633377075195
};
\addlegendentry{estimated}
\end{axis}

\end{tikzpicture}
        }
        \end{subfigure}
        \begin{subfigure}[b]{.33\linewidth}
        \resizebox{\linewidth}{0.8\linewidth}{
         % This file was created with tikzplotlib v0.10.1.
\begin{tikzpicture}

\definecolor{darkgray176}{RGB}{176,176,176}
\definecolor{lightgray204}{RGB}{204,204,204}
\definecolor{steelblue31119180}{RGB}{31,119,180}

\begin{axis}[
legend cell align={left},
legend style={
  font=\tiny,
  fill opacity=0.8,
  draw opacity=1,
  text opacity=1,
  at={(0.03,0.97)},
  anchor=north west,
  draw=lightgray204
},
% set width of axis box
scale only axis,
width=4cm,
height=3cm,
tick align=outside,
tick pos=left,
title={\fontsize{6}{6}\selectfont $N=5000$},
tick label style={font=\tiny},
% major tick length=\MajorTickLength,
every tick/.style={
black,
semithick,
},
x label style={at={(axis description cs:0.5,-0.1)},anchor=north,font=\tiny},
y label style={at={(axis description cs:-0.06,.5)},rotate=90,anchor=south,font=\tiny},
x grid style={darkgray176},
xlabel={index},
xmin=-4.95, xmax=103.95,
xtick style={color=black},
y grid style={darkgray176},
ylabel={$m(Z)$},
ymin=-0.75, ymax=2.5,
ytick style={color=black}
]
\addplot [width=1pt, steelblue31119180, mark=*, mark size=0.5, mark options={solid}]
table {%
0 1.57497000694275
1 1.15095257759094
2 1.24041545391083
3 1.3399760723114
4 1.28155565261841
5 0.879355072975159
6 0.719038009643555
7 1.02315044403076
8 0.771758854389191
9 1.35975790023804
10 0.400826245546341
11 0.611046493053436
12 1.01262211799622
13 1.00679564476013
14 0.508017241954803
15 0.83271312713623
16 1.10645389556885
17 0.976385712623596
18 0.501031994819641
19 0.602039337158203
20 1.07092881202698
21 1.28955626487732
22 0.877973675727844
23 0.728686451911926
24 0.900317311286926
25 0.748838305473328
26 0.897845506668091
27 0.782133460044861
28 0.879701375961304
29 0.946312308311462
30 1.44770526885986
31 1.15533781051636
32 0.787004947662354
33 1.12570643424988
34 0.875810384750366
35 1.34348261356354
36 1.16493248939514
37 1.2958949804306
38 0.902328908443451
39 0.78005850315094
40 0.502965211868286
41 0.895460367202759
42 0.568686366081238
43 1.49195671081543
44 0.808770895004272
45 1.38960492610931
46 1.40202236175537
47 1.22090935707092
48 0.618276715278625
49 0.541274189949036
50 0.57879626750946
51 1.24983549118042
52 1.17480874061584
53 1.09575653076172
54 0.916525602340698
55 1.15049576759338
56 0.703235447406769
57 0.648393869400024
58 0.828435242176056
59 0.30644565820694
60 0.564027011394501
61 0.975495159626007
62 0.347937047481537
63 0.907636761665344
64 1.01808476448059
65 0.8969566822052
66 0.817118585109711
67 1.30936086177826
68 1.01583576202393
69 1.00723612308502
70 1.23634219169617
71 1.18156313896179
72 0.455988466739655
73 0.742524027824402
74 1.25001001358032
75 1.11372780799866
76 1.37222063541412
77 0.427757084369659
78 1.20141196250916
79 0.922923922538757
80 0.612215280532837
81 1.25139260292053
82 0.437425285577774
83 0.570523858070374
84 0.950319588184357
85 0.582384705543518
86 0.560596466064453
87 1.1056444644928
88 0.855743169784546
89 1.25217723846436
90 0.801432192325592
91 1.00741791725159
92 0.87658429145813
93 0.5574049949646
94 0.7025306224823
95 0.990389108657837
96 0.755933701992035
97 0.963387846946716
98 1.30714702606201
99 0.795306086540222
};
\addlegendentry{ground truth}
\addplot [width=1pt, red, opacity=0.7]
table {%
0 1.50803232192993
1 1.06128025054932
2 1.12493681907654
3 1.22440195083618
4 1.18201470375061
5 0.815121650695801
6 0.517638862133026
7 0.818450629711151
8 0.705982446670532
9 1.21139216423035
10 0.376715093851089
11 0.525013089179993
12 0.954158365726471
13 0.797038674354553
14 0.399192303419113
15 0.712458729743958
16 0.944109857082367
17 0.795746445655823
18 0.399291008710861
19 0.57230818271637
20 0.924117922782898
21 1.07906293869019
22 0.766451478004456
23 0.538043916225433
24 0.837923169136047
25 0.682390451431274
26 0.81484317779541
27 0.721214056015015
28 0.663907945156097
29 0.811123311519623
30 1.33946025371552
31 1.0397834777832
32 0.681319117546082
33 1.09027886390686
34 0.8216832280159
35 1.26218032836914
36 1.04172921180725
37 1.28559708595276
38 0.803898930549622
39 0.821199893951416
40 0.455263197422028
41 0.839714169502258
42 0.444897413253784
43 1.39755117893219
44 0.703104853630066
45 1.26050078868866
46 1.38027346134186
47 1.05815649032593
48 0.499504446983337
49 0.413176476955414
50 0.568924427032471
51 1.17266356945038
52 1.04647445678711
53 0.91742330789566
54 0.792809367179871
55 1.04525542259216
56 0.606478571891785
57 0.533961236476898
58 0.710926651954651
59 0.165618553757668
60 0.597756743431091
61 0.931312620639801
62 0.312218844890594
63 0.756780326366425
64 0.960880219936371
65 0.713736891746521
66 0.739117026329041
67 1.24435901641846
68 0.867084801197052
69 0.939327597618103
70 1.13351047039032
71 1.08258247375488
72 0.451942712068558
73 0.655619323253632
74 1.15557622909546
75 0.962614774703979
76 1.27903032302856
77 0.388431131839752
78 1.09044563770294
79 0.797232568264008
80 0.530967891216278
81 1.12726151943207
82 0.339643508195877
83 0.528816223144531
84 0.99182540178299
85 0.569911956787109
86 0.522472143173218
87 1.04966354370117
88 0.719271063804626
89 1.16427576541901
90 0.952123284339905
91 0.951805114746094
92 0.77667510509491
93 0.496179670095444
94 0.578585028648376
95 0.864468574523926
96 0.561168968677521
97 0.842448234558105
98 1.15654265880585
99 0.805738508701324
};
\addlegendentry{estimated}
\end{axis}

\end{tikzpicture}
        }
        \end{subfigure}
        \begin{subfigure}[b]{.33\linewidth}
        \resizebox{\linewidth}{0.8\linewidth}{
         % This file was created with tikzplotlib v0.10.1.
\begin{tikzpicture}

\definecolor{darkgray176}{RGB}{176,176,176}
\definecolor{lightgray204}{RGB}{204,204,204}
\definecolor{steelblue31119180}{RGB}{31,119,180}

\begin{axis}[
legend cell align={left},
legend style={
  font=\tiny,
  fill opacity=0.8,
  draw opacity=1,
  text opacity=1,
  at={(0.03,0.97)},
  anchor=north west,
  draw=lightgray204
},
% set width of axis box
scale only axis,
width=4cm,
height=3cm,
tick align=outside,
tick pos=left,
title={\fontsize{6}{6}\selectfont $N=10000$},
tick label style={font=\tiny},
% major tick length=\MajorTickLength,
every tick/.style={
black,
semithick,
},
x label style={at={(axis description cs:0.5,-0.1)},anchor=north,font=\tiny},
y label style={at={(axis description cs:-0.06,.5)},rotate=90,anchor=south,font=\tiny},
x grid style={darkgray176},
xlabel={index},
xmin=-4.95, xmax=103.95,
xtick style={color=black},
y grid style={darkgray176},
ylabel={$m(Z)$},
ymin=-0.75, ymax=2.5,
ytick style={color=black}
]
\addplot [width=1pt, steelblue31119180, mark=*, mark size=0.5, mark options={solid}]
table {%
0 1.57497000694275
1 1.15095257759094
2 1.24041545391083
3 1.3399760723114
4 1.28155565261841
5 0.879355072975159
6 0.719038009643555
7 1.02315044403076
8 0.771758854389191
9 1.35975790023804
10 0.400826245546341
11 0.611046493053436
12 1.01262211799622
13 1.00679564476013
14 0.508017241954803
15 0.83271312713623
16 1.10645389556885
17 0.976385712623596
18 0.501031994819641
19 0.602039337158203
20 1.07092881202698
21 1.28955626487732
22 0.877973675727844
23 0.728686451911926
24 0.900317311286926
25 0.748838305473328
26 0.897845506668091
27 0.782133460044861
28 0.879701375961304
29 0.946312308311462
30 1.44770526885986
31 1.15533781051636
32 0.787004947662354
33 1.12570643424988
34 0.875810384750366
35 1.34348261356354
36 1.16493248939514
37 1.2958949804306
38 0.902328908443451
39 0.78005850315094
40 0.502965211868286
41 0.895460367202759
42 0.568686366081238
43 1.49195671081543
44 0.808770895004272
45 1.38960492610931
46 1.40202236175537
47 1.22090935707092
48 0.618276715278625
49 0.541274189949036
50 0.57879626750946
51 1.24983549118042
52 1.17480874061584
53 1.09575653076172
54 0.916525602340698
55 1.15049576759338
56 0.703235447406769
57 0.648393869400024
58 0.828435242176056
59 0.30644565820694
60 0.564027011394501
61 0.975495159626007
62 0.347937047481537
63 0.907636761665344
64 1.01808476448059
65 0.8969566822052
66 0.817118585109711
67 1.30936086177826
68 1.01583576202393
69 1.00723612308502
70 1.23634219169617
71 1.18156313896179
72 0.455988466739655
73 0.742524027824402
74 1.25001001358032
75 1.11372780799866
76 1.37222063541412
77 0.427757084369659
78 1.20141196250916
79 0.922923922538757
80 0.612215280532837
81 1.25139260292053
82 0.437425285577774
83 0.570523858070374
84 0.950319588184357
85 0.582384705543518
86 0.560596466064453
87 1.1056444644928
88 0.855743169784546
89 1.25217723846436
90 0.801432192325592
91 1.00741791725159
92 0.87658429145813
93 0.5574049949646
94 0.7025306224823
95 0.990389108657837
96 0.755933701992035
97 0.963387846946716
98 1.30714702606201
99 0.795306086540222
};
\addlegendentry{ground truth}
\addplot [width=1pt, red, opacity=0.7]
table {%
0 1.506915807724
1 0.995303273200989
2 1.08683085441589
3 1.24711966514587
4 1.16423368453979
5 0.650518298149109
6 0.703270554542542
7 0.958547472953796
8 0.597302913665771
9 1.31974005699158
10 0.294819295406342
11 0.516876220703125
12 0.903245329856873
13 0.911450386047363
14 0.417524814605713
15 0.763852477073669
16 0.97513222694397
17 0.886940121650696
18 0.412055432796478
19 0.49459308385849
20 1.04031980037689
21 1.24880170822144
22 0.821953773498535
23 0.745874464511871
24 0.652772784233093
25 0.646919131278992
26 0.744763314723969
27 0.597540438175201
28 0.926343441009521
29 0.864323019981384
30 1.38812637329102
31 1.01368355751038
32 0.725333333015442
33 0.986115753650665
34 0.633220672607422
35 1.21857452392578
36 1.13362336158752
37 1.23694610595703
38 0.776928186416626
39 0.681355953216553
40 0.411112397909164
41 0.735036849975586
42 0.482979118824005
43 1.43013787269592
44 0.676398873329163
45 1.25818407535553
46 1.34350800514221
47 1.07970094680786
48 0.581497967243195
49 0.511860489845276
50 0.360148280858994
51 1.20105934143066
52 1.15568494796753
53 0.994195818901062
54 0.737276077270508
55 0.998487591743469
56 0.671376585960388
57 0.525413691997528
58 0.710247159004211
59 0.347412526607513
60 0.468660295009613
61 0.775604486465454
62 0.274635404348373
63 0.867542743682861
64 0.882698893547058
65 0.827828109264374
66 0.742825269699097
67 1.21224117279053
68 0.886755347251892
69 0.821677803993225
70 1.13378167152405
71 1.0764936208725
72 0.440938174724579
73 0.644577980041504
74 1.14731240272522
75 0.974859297275543
76 1.27189207077026
77 0.37605282664299
78 1.1191041469574
79 0.847446978092194
80 0.442495405673981
81 1.14640414714813
82 0.378545731306076
83 0.456591725349426
84 0.883922100067139
85 0.527442693710327
86 0.46040090918541
87 0.997472882270813
88 0.802040696144104
89 1.12571859359741
90 0.699444055557251
91 0.889258027076721
92 0.71195775270462
93 0.54178375005722
94 0.674866199493408
95 0.848102509975433
96 0.794758081436157
97 0.853863000869751
98 1.20589900016785
99 0.667768478393555
};
\addlegendentry{estimated}
\end{axis}

\end{tikzpicture}
        }
        \end{subfigure}
        \caption{Visualizations of synthetic data results under the PF scheme of NFM framework, regarding empirical recovery of the $m$ function in \eqref{eqn: proportional_frailty}}
        \label{fig: synthesis_pf_m}
    \end{figure}
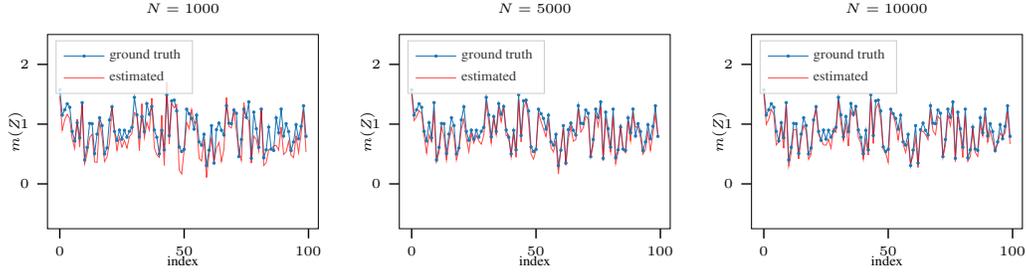

    \subsection{Recovery assessment of survival functions}\label{sec: surv}
    To assess the recovery performance of NFM with respect to survival functions, we consider the following setup: under the same data generation framework as in section \ref{sec: synthetic_details}, we compute the test feature $\bar{Z}$ as the sample mean of all the $100$ hold-out test points. And plot $\widehat{S}(t|\bar{Z})$ against the ground truth $S(t|\bar{Z})$ regarding both PF and FN schemes. The results are shown in figure \ref{fig: surv_recovery}. The results suggest that both scheme provides accurate estimation of survival functions when the sample size is sufficiently large. 
    \begin{figure}
        \centering
        \begin{subfigure}[b]{.32\linewidth}
        \resizebox{\linewidth}{0.8\linewidth}{
         % This file was created with tikzplotlib v0.10.1.
\begin{tikzpicture}

\definecolor{darkgray176}{RGB}{176,176,176}
\definecolor{lightgray204}{RGB}{204,204,204}
\definecolor{steelblue31119180}{RGB}{31,119,180}

\begin{axis}[
legend cell align={left},
legend style={
  font=\tiny,
  fill opacity=0.8,
  draw opacity=1,
  text opacity=1,
  at={(0.40,0.97)},
  anchor=north west,
  draw=lightgray204
},
% set width of axis box
scale only axis,
width=4cm,
height=3cm,
tick align=outside,
tick pos=left,
title={\fontsize{6}{6}\selectfont $N=1000$},
tick label style={font=\tiny},
% major tick length=\MajorTickLength,
every tick/.style={
black,
semithick,
},
x label style={at={(axis description cs:0.5,-0.1)},anchor=north,font=\tiny},
y label style={at={(axis description cs:-0.06,.5)},rotate=90,anchor=south,font=\tiny},
x grid style={darkgray176},
xlabel={$t$},
xmin=-0.157433523936197, xmax=3.40243121623062,
xtick style={color=black},
y grid style={darkgray176},
ylabel={$S(t|Z)$},
ymin=-0.00870856288820505, ymax=1.04395221937448,
ytick={0, 0.25, 0.75, 1},
ytick style={color=black}
]
\addplot [semithick, steelblue31119180, mark=*, mark size=0.5, mark options={solid}]
table {%
0.00437850970774889 0.995631098747253
0.00542990444228053 0.994584858417511
0.00595268839970231 0.99406498670578
0.00755271734669805 0.992475688457489
0.0123842898756266 0.987692058086395
0.0166294146329165 0.983508110046387
0.0172695983201265 0.982878684997559
0.0208375565707684 0.979378044605255
0.0214008055627346 0.978826522827148
0.0237781554460526 0.976502299308777
0.0328329727053642 0.967700242996216
0.0398957096040249 0.960889637470245
0.044037751853466 0.956917881965637
0.0456374324858189 0.955388247966766
0.0461156778037548 0.954931497573853
0.0586898550391197 0.942999243736267
0.0675230771303177 0.934706151485443
0.0703325197100639 0.93208384513855
0.0790752917528152 0.923970401287079
0.0811049863696098 0.922096908092499
0.0839139670133591 0.919510304927826
0.0910551473498344 0.912967383861542
0.0914278626441956 0.912627160549164
0.0929863452911377 0.911205887794495
0.0971563383936882 0.907414138317108
0.0996829569339752 0.905124306678772
0.0998237505555153 0.904996871948242
0.115672208368778 0.890767157077789
0.127606585621834 0.880199551582336
0.133811295032501 0.874755144119263
0.140404969453812 0.869006276130676
0.141848400235176 0.86775279045105
0.143481746315956 0.866336584091187
0.14919476211071 0.861401319503784
0.165343880653381 0.847602188587189
0.166629523038864 0.846513211727142
0.17111437022686 0.842725217342377
0.175462424755096 0.83906888961792
0.179158613085747 0.835973262786865
0.184549406170845 0.831478834152222
0.204394772648811 0.815140545368195
0.206485122442245 0.813438355922699
0.228909358382225 0.795400619506836
0.229227304458618 0.795147776603699
0.231799572706223 0.793105065822601
0.25389552116394 0.775772869586945
0.295848220586777 0.74390035867691
0.301604896783829 0.739630222320557
0.317365616559982 0.72806453704834
0.321536540985107 0.725034117698669
0.323987185955048 0.723259508609772
0.325569301843643 0.722116112709045
0.327896296977997 0.720437705516815
0.333507657051086 0.716406404972076
0.354743450880051 0.701353311538696
0.368787109851837 0.691572666168213
0.374075591564178 0.687924921512604
0.376269787549973 0.686417102813721
0.392039835453033 0.675677180290222
0.448523104190826 0.638570547103882
0.467623382806778 0.626489400863647
0.51166170835495 0.599498510360718
0.528185546398163 0.589673936367035
0.541024386882782 0.582151591777802
0.568550884723663 0.566345512866974
0.581681251525879 0.558957815170288
0.583209156990051 0.558104455471039
0.639981210231781 0.527302324771881
0.662271738052368 0.515678524971008
0.67755538225174 0.507856965065002
0.712697148323059 0.490319907665253
0.730794370174408 0.481526345014572
0.732405066490173 0.48075133562088
0.762117743492126 0.466677069664001
0.773440182209015 0.461422920227051
0.783490061759949 0.456808924674988
0.78608775138855 0.455623835325241
0.790945649147034 0.453415811061859
0.791604399681091 0.453117221593857
0.804874062538147 0.447144210338593
0.805576264858246 0.446830362081528
0.81831419467926 0.441174775362015
0.881467580795288 0.414174646139145
0.906880497932434 0.403781890869141
0.996952712535858 0.369002193212509
1.03600311279297 0.354870229959488
1.0573570728302 0.347372680902481
1.17803061008453 0.307884484529495
1.22611820697784 0.293429404497147
1.35938203334808 0.256819427013397
1.36120343208313 0.256352096796036
1.40997350215912 0.244149759411812
1.41032409667969 0.244064167141914
1.42464458942413 0.240593954920769
1.43722772598267 0.237585499882698
1.68432581424713 0.185569494962692
1.87879085540771 0.152774721384048
2.93655371665955 0.0530482344329357
2.93978977203369 0.0528768450021744
3.24061918258667 0.0391396544873714
};
\addlegendentry{ground truth}
\addplot [width=1pt, red, opacity=0.7]
table {%
0.00437850970774889 0.996104001998901
0.00542990444228053 0.995167970657349
0.00595268839970231 0.994702279567719
0.00755271734669805 0.993276000022888
0.0123842898756266 0.98895651102066
0.0166294146329165 0.985145926475525
0.0172695983201265 0.98456996679306
0.0208375565707684 0.981354475021362
0.0214008055627346 0.980845868587494
0.0237781554460526 0.978696823120117
0.0328329727053642 0.970471501350403
0.0398957096040249 0.964013040065765
0.044037751853466 0.960208296775818
0.0456374324858189 0.958735406398773
0.0461156778037548 0.958294808864594
0.0586898550391197 0.946653723716736
0.0675230771303177 0.938418328762054
0.0703325197100639 0.935790419578552
0.0790752917528152 0.927588045597076
0.0811049863696098 0.92567902803421
0.0839139670133591 0.923034310340881
0.0910551473498344 0.916297793388367
0.0914278626441956 0.915945708751678
0.0929863452911377 0.914472997188568
0.0971563383936882 0.910529017448425
0.0996829569339752 0.908136963844299
0.0998237505555153 0.908003568649292
0.115672208368778 0.892966985702515
0.127606585621834 0.881620049476624
0.133811295032501 0.875717163085938
0.140404969453812 0.869444191455841
0.141848400235176 0.868071138858795
0.143481746315956 0.866517543792725
0.14919476211071 0.861084818840027
0.165343880653381 0.845748543739319
0.166629523038864 0.844529449939728
0.17111437022686 0.840279340744019
0.175462424755096 0.836163282394409
0.179158613085747 0.832667946815491
0.184549406170845 0.827576875686646
0.204394772648811 0.808916389942169
0.206485122442245 0.806959450244904
0.228909358382225 0.786098003387451
0.229227304458618 0.785804510116577
0.231799572706223 0.783433198928833
0.25389552116394 0.763274550437927
0.295848220586777 0.726230084896088
0.301604896783829 0.721282303333282
0.317365616559982 0.707896709442139
0.321536540985107 0.704392969608307
0.323987185955048 0.702344000339508
0.325569301843643 0.701023697853088
0.327896296977997 0.699085593223572
0.333507657051086 0.694433569908142
0.354743450880051 0.677102744579315
0.368787109851837 0.66587108373642
0.374075591564178 0.661692917346954
0.376269787549973 0.659966349601746
0.392039835453033 0.647685825824738
0.448523104190826 0.60561192035675
0.467623382806778 0.592050492763519
0.51166170835495 0.56207001209259
0.528185546398163 0.551283955574036
0.541024386882782 0.543063461780548
0.568550884723663 0.52594667673111
0.581681251525879 0.518006205558777
0.583209156990051 0.517092287540436
0.639981210231781 0.484487324953079
0.662271738052368 0.472387492656708
0.67755538225174 0.464304715394974
0.712697148323059 0.446380734443665
0.730794370174408 0.437488585710526
0.732405066490173 0.436708718538284
0.762117743492126 0.422617107629776
0.773440182209015 0.417378902435303
0.783490061759949 0.412783354520798
0.78608775138855 0.411604881286621
0.790945649147034 0.409407436847687
0.791604399681091 0.409110575914383
0.804874062538147 0.403181284666061
0.805576264858246 0.402870416641235
0.81831419467926 0.397273540496826
0.881467580795288 0.370755285024643
0.906880497932434 0.360659211874008
0.996952712535858 0.327358603477478
1.03600311279297 0.314072608947754
1.0573570728302 0.307088434696198
1.17803061008453 0.271072804927826
1.22611820697784 0.258228242397308
1.35938203334808 0.226565092802048
1.36120343208313 0.2261683344841
1.40997350215912 0.21590293943882
1.41032409667969 0.215831249952316
1.42464458942413 0.212931528687477
1.43722772598267 0.210428223013878
1.68432581424713 0.168468460440636
1.87879085540771 0.143174260854721
2.93655371665955 0.0688162744045258
2.93978977203369 0.0686835870146751
3.24061918258667 0.0578004196286201
};
\addlegendentry{estimated}
\end{axis}

\end{tikzpicture}
        }
        \end{subfigure}
        \begin{subfigure}[b]{.32\linewidth}
        \resizebox{\linewidth}{0.8\linewidth}{
         % This file was created with tikzplotlib v0.10.1.
\begin{tikzpicture}

\definecolor{darkgray176}{RGB}{176,176,176}
\definecolor{lightgray204}{RGB}{204,204,204}
\definecolor{steelblue31119180}{RGB}{31,119,180}

\begin{axis}[
legend cell align={left},
legend style={
  font=\tiny,
  fill opacity=0.8,
  draw opacity=1,
  text opacity=1,
  at={(0.40,0.97)},
  anchor=north west,
  draw=lightgray204
},
% set width of axis box
scale only axis,
width=4cm,
height=3cm,
tick align=outside,
tick pos=left,
title={\fontsize{6}{6}\selectfont $N=5000$},
tick label style={font=\tiny},
% major tick length=\MajorTickLength,
every tick/.style={
black,
semithick,
},
x label style={at={(axis description cs:0.5,-0.1)},anchor=north,font=\tiny},
y label style={at={(axis description cs:-0.06,.5)},rotate=90,anchor=south,font=\tiny},
x grid style={darkgray176},
xlabel={$t$},
xmin=-0.157433523936197, xmax=3.40243121623062,
xtick style={color=black},
y grid style={darkgray176},
ylabel={$S(t|Z)$},
ymin=-0.00870856288820505, ymax=1.04395221937448,
ytick={0, 0.25, 0.75, 1},
ytick style={color=black}
]
\addplot [semithick, steelblue31119180, mark=*, mark size=0.5, mark options={solid}]
table {%
0.00437850970774889 0.995631098747253
0.00542990444228053 0.994584858417511
0.00595268839970231 0.99406498670578
0.00755271734669805 0.992475688457489
0.0123842898756266 0.987692058086395
0.0166294146329165 0.983508110046387
0.0172695983201265 0.982878684997559
0.0208375565707684 0.979378044605255
0.0214008055627346 0.978826522827148
0.0237781554460526 0.976502299308777
0.0328329727053642 0.967700242996216
0.0398957096040249 0.960889637470245
0.044037751853466 0.956917881965637
0.0456374324858189 0.955388247966766
0.0461156778037548 0.954931497573853
0.0586898550391197 0.942999243736267
0.0675230771303177 0.934706151485443
0.0703325197100639 0.93208384513855
0.0790752917528152 0.923970401287079
0.0811049863696098 0.922096908092499
0.0839139670133591 0.919510304927826
0.0910551473498344 0.912967383861542
0.0914278626441956 0.912627160549164
0.0929863452911377 0.911205887794495
0.0971563383936882 0.907414138317108
0.0996829569339752 0.905124306678772
0.0998237505555153 0.904996871948242
0.115672208368778 0.890767157077789
0.127606585621834 0.880199551582336
0.133811295032501 0.874755144119263
0.140404969453812 0.869006276130676
0.141848400235176 0.86775279045105
0.143481746315956 0.866336584091187
0.14919476211071 0.861401319503784
0.165343880653381 0.847602188587189
0.166629523038864 0.846513211727142
0.17111437022686 0.842725217342377
0.175462424755096 0.83906888961792
0.179158613085747 0.835973262786865
0.184549406170845 0.831478834152222
0.204394772648811 0.815140545368195
0.206485122442245 0.813438355922699
0.228909358382225 0.795400619506836
0.229227304458618 0.795147776603699
0.231799572706223 0.793105065822601
0.25389552116394 0.775772869586945
0.295848220586777 0.74390035867691
0.301604896783829 0.739630222320557
0.317365616559982 0.72806453704834
0.321536540985107 0.725034117698669
0.323987185955048 0.723259508609772
0.325569301843643 0.722116112709045
0.327896296977997 0.720437705516815
0.333507657051086 0.716406404972076
0.354743450880051 0.701353311538696
0.368787109851837 0.691572666168213
0.374075591564178 0.687924921512604
0.376269787549973 0.686417102813721
0.392039835453033 0.675677180290222
0.448523104190826 0.638570547103882
0.467623382806778 0.626489400863647
0.51166170835495 0.599498510360718
0.528185546398163 0.589673936367035
0.541024386882782 0.582151591777802
0.568550884723663 0.566345512866974
0.581681251525879 0.558957815170288
0.583209156990051 0.558104455471039
0.639981210231781 0.527302324771881
0.662271738052368 0.515678524971008
0.67755538225174 0.507856965065002
0.712697148323059 0.490319907665253
0.730794370174408 0.481526345014572
0.732405066490173 0.48075133562088
0.762117743492126 0.466677069664001
0.773440182209015 0.461422920227051
0.783490061759949 0.456808924674988
0.78608775138855 0.455623835325241
0.790945649147034 0.453415811061859
0.791604399681091 0.453117221593857
0.804874062538147 0.447144210338593
0.805576264858246 0.446830362081528
0.81831419467926 0.441174775362015
0.881467580795288 0.414174646139145
0.906880497932434 0.403781890869141
0.996952712535858 0.369002193212509
1.03600311279297 0.354870229959488
1.0573570728302 0.347372680902481
1.17803061008453 0.307884484529495
1.22611820697784 0.293429404497147
1.35938203334808 0.256819427013397
1.36120343208313 0.256352096796036
1.40997350215912 0.244149759411812
1.41032409667969 0.244064167141914
1.42464458942413 0.240593954920769
1.43722772598267 0.237585499882698
1.68432581424713 0.185569494962692
1.87879085540771 0.152774721384048
2.93655371665955 0.0530482344329357
2.93978977203369 0.0528768450021744
3.24061918258667 0.0391396544873714
};
\addlegendentry{ground truth}
\addplot [width=1pt, red, opacity=0.7]
table {%
0.00437850970774889 0.995184004306793
0.00542990444228053 0.994030892848969
0.00595268839970231 0.993458211421967
0.00755271734669805 0.991707026958466
0.0123842898756266 0.986437916755676
0.0166294146329165 0.981831610202789
0.0172695983201265 0.98113876581192
0.0208375565707684 0.977286696434021
0.0214008055627346 0.976680040359497
0.0237781554460526 0.974123477935791
0.0328329727053642 0.964446067810059
0.0398957096040249 0.956960678100586
0.044037751853466 0.952596426010132
0.0456374324858189 0.950916111469269
0.0461156778037548 0.95041424036026
0.0586898550391197 0.937311053276062
0.0675230771303177 0.928211450576782
0.0703325197100639 0.92533540725708
0.0790752917528152 0.916442036628723
0.0811049863696098 0.914389550685883
0.0839139670133591 0.911556720733643
0.0910551473498344 0.904395580291748
0.0914278626441956 0.904023468494415
0.0929863452911377 0.902469635009766
0.0971563383936882 0.898327350616455
0.0996829569339752 0.895828306674957
0.0998237505555153 0.895689249038696
0.115672208368778 0.880188286304474
0.127606585621834 0.868705809116364
0.133811295032501 0.862800061702728
0.140404969453812 0.856572151184082
0.141848400235176 0.855215668678284
0.143481746315956 0.853683412075043
0.14919476211071 0.848346829414368
0.165343880653381 0.833458304405212
0.166629523038864 0.832285463809967
0.17111437022686 0.828208386898041
0.175462424755096 0.82427567243576
0.179158613085747 0.820949792861938
0.184549406170845 0.816124796867371
0.204394772648811 0.798633456230164
0.206485122442245 0.796815633773804
0.228909358382225 0.777606546878815
0.229227304458618 0.777337968349457
0.231799572706223 0.77516907453537
0.25389552116394 0.756822764873505
0.295848220586777 0.723361253738403
0.301604896783829 0.718905925750732
0.317365616559982 0.706872582435608
0.321536540985107 0.703725934028625
0.323987185955048 0.701884269714355
0.325569301843643 0.700698375701904
0.327896296977997 0.698957920074463
0.333507657051086 0.694781899452209
0.354743450880051 0.679223656654358
0.368787109851837 0.669140338897705
0.374075591564178 0.665386557579041
0.376269787549973 0.663834810256958
0.392039835453033 0.652807533740997
0.448523104190826 0.614897191524506
0.467623382806778 0.602611601352692
0.51166170835495 0.575220108032227
0.528185546398163 0.565252661705017
0.541024386882782 0.557629406452179
0.568550884723663 0.541635692119598
0.581681251525879 0.534176647663116
0.583209156990051 0.533315598964691
0.639981210231781 0.502331972122192
0.662271738052368 0.490661382675171
0.67755538225174 0.482806146144867
0.712697148323059 0.46521458029747
0.730794370174408 0.456397891044617
0.732405066490173 0.455621659755707
0.762117743492126 0.441553443670273
0.773440182209015 0.436306029558182
0.783490061759949 0.43170490860939
0.78608775138855 0.430525034666061
0.790945649147034 0.428329110145569
0.791604399681091 0.428032010793686
0.804874062538147 0.42209193110466
0.805576264858246 0.42178013920784
0.81831419467926 0.416168540716171
0.881467580795288 0.389508664608002
0.906880497932434 0.379310846328735
0.996952712535858 0.345549434423447
1.03600311279297 0.332005351781845
1.0573570728302 0.324866235256195
1.17803061008453 0.287800818681717
1.22611820697784 0.274488806724548
1.35938203334808 0.241483390331268
1.36120343208313 0.241068601608276
1.40997350215912 0.230302304029465
1.41032409667969 0.230227261781693
1.42464458942413 0.227188259363174
1.43722772598267 0.224562123417854
1.68432581424713 0.18033342063427
1.87879085540771 0.153501272201538
2.93655371665955 0.0740536600351334
2.93978977203369 0.0739123001694679
3.24061918258667 0.0623059906065464
};
\addlegendentry{estimated}
\end{axis}

\end{tikzpicture}
        }
        \end{subfigure}
        \begin{subfigure}[b]{.32\linewidth}
        \resizebox{\linewidth}{0.8\linewidth}{
         % This file was created with tikzplotlib v0.10.1.
\begin{tikzpicture}

\definecolor{darkgray176}{RGB}{176,176,176}
\definecolor{lightgray204}{RGB}{204,204,204}
\definecolor{steelblue31119180}{RGB}{31,119,180}

\begin{axis}[
legend cell align={left},
legend style={
  font=\tiny,
  fill opacity=0.8,
  draw opacity=1,
  text opacity=1,
  at={(0.40,0.97)},
  anchor=north west,
  draw=lightgray204
},
% set width of axis box
scale only axis,
width=4cm,
height=3cm,
tick align=outside,
tick pos=left,
title={\fontsize{6}{6}\selectfont $N=10000$},
tick label style={font=\tiny},
% major tick length=\MajorTickLength,
every tick/.style={
black,
semithick,
},
x label style={at={(axis description cs:0.5,-0.1)},anchor=north,font=\tiny},
y label style={at={(axis description cs:-0.06,.5)},rotate=90,anchor=south,font=\tiny},
x grid style={darkgray176},
xlabel={$t$},
xmin=-0.157433523936197, xmax=3.40243121623062,
xtick style={color=black},
y grid style={darkgray176},
ylabel={$S(t|Z)$},
ymin=-0.00870856288820505, ymax=1.04395221937448,
ytick={0, 0.25, 0.75, 1},
ytick style={color=black}
]
\addplot [semithick, steelblue31119180, mark=*, mark size=0.5, mark options={solid}]
table {%
0.00437850970774889 0.995631098747253
0.00542990444228053 0.994584858417511
0.00595268839970231 0.99406498670578
0.00755271734669805 0.992475688457489
0.0123842898756266 0.987692058086395
0.0166294146329165 0.983508110046387
0.0172695983201265 0.982878684997559
0.0208375565707684 0.979378044605255
0.0214008055627346 0.978826522827148
0.0237781554460526 0.976502299308777
0.0328329727053642 0.967700242996216
0.0398957096040249 0.960889637470245
0.044037751853466 0.956917881965637
0.0456374324858189 0.955388247966766
0.0461156778037548 0.954931497573853
0.0586898550391197 0.942999243736267
0.0675230771303177 0.934706151485443
0.0703325197100639 0.93208384513855
0.0790752917528152 0.923970401287079
0.0811049863696098 0.922096908092499
0.0839139670133591 0.919510304927826
0.0910551473498344 0.912967383861542
0.0914278626441956 0.912627160549164
0.0929863452911377 0.911205887794495
0.0971563383936882 0.907414138317108
0.0996829569339752 0.905124306678772
0.0998237505555153 0.904996871948242
0.115672208368778 0.890767157077789
0.127606585621834 0.880199551582336
0.133811295032501 0.874755144119263
0.140404969453812 0.869006276130676
0.141848400235176 0.86775279045105
0.143481746315956 0.866336584091187
0.14919476211071 0.861401319503784
0.165343880653381 0.847602188587189
0.166629523038864 0.846513211727142
0.17111437022686 0.842725217342377
0.175462424755096 0.83906888961792
0.179158613085747 0.835973262786865
0.184549406170845 0.831478834152222
0.204394772648811 0.815140545368195
0.206485122442245 0.813438355922699
0.228909358382225 0.795400619506836
0.229227304458618 0.795147776603699
0.231799572706223 0.793105065822601
0.25389552116394 0.775772869586945
0.295848220586777 0.74390035867691
0.301604896783829 0.739630222320557
0.317365616559982 0.72806453704834
0.321536540985107 0.725034117698669
0.323987185955048 0.723259508609772
0.325569301843643 0.722116112709045
0.327896296977997 0.720437705516815
0.333507657051086 0.716406404972076
0.354743450880051 0.701353311538696
0.368787109851837 0.691572666168213
0.374075591564178 0.687924921512604
0.376269787549973 0.686417102813721
0.392039835453033 0.675677180290222
0.448523104190826 0.638570547103882
0.467623382806778 0.626489400863647
0.51166170835495 0.599498510360718
0.528185546398163 0.589673936367035
0.541024386882782 0.582151591777802
0.568550884723663 0.566345512866974
0.581681251525879 0.558957815170288
0.583209156990051 0.558104455471039
0.639981210231781 0.527302324771881
0.662271738052368 0.515678524971008
0.67755538225174 0.507856965065002
0.712697148323059 0.490319907665253
0.730794370174408 0.481526345014572
0.732405066490173 0.48075133562088
0.762117743492126 0.466677069664001
0.773440182209015 0.461422920227051
0.783490061759949 0.456808924674988
0.78608775138855 0.455623835325241
0.790945649147034 0.453415811061859
0.791604399681091 0.453117221593857
0.804874062538147 0.447144210338593
0.805576264858246 0.446830362081528
0.81831419467926 0.441174775362015
0.881467580795288 0.414174646139145
0.906880497932434 0.403781890869141
0.996952712535858 0.369002193212509
1.03600311279297 0.354870229959488
1.0573570728302 0.347372680902481
1.17803061008453 0.307884484529495
1.22611820697784 0.293429404497147
1.35938203334808 0.256819427013397
1.36120343208313 0.256352096796036
1.40997350215912 0.244149759411812
1.41032409667969 0.244064167141914
1.42464458942413 0.240593954920769
1.43722772598267 0.237585499882698
1.68432581424713 0.185569494962692
1.87879085540771 0.152774721384048
2.93655371665955 0.0530482344329357
2.93978977203369 0.0528768450021744
3.24061918258667 0.0391396544873714
};
\addlegendentry{ground truth}
\addplot [width=1pt, red, opacity=0.7]
table {%
0.00437850970774889 0.995076715946198
0.00542990444228053 0.993899464607239
0.00595268839970231 0.993314921855927
0.00755271734669805 0.991528630256653
0.0123842898756266 0.986161708831787
0.0166294146329165 0.981479465961456
0.0172695983201265 0.980776071548462
0.0208375565707684 0.976868510246277
0.0214008055627346 0.976253688335419
0.0237781554460526 0.973664224147797
0.0328329727053642 0.963889181613922
0.0398957096040249 0.956355929374695
0.044037751853466 0.951973617076874
0.0456374324858189 0.95028817653656
0.0461156778037548 0.949785053730011
0.0586898550391197 0.936680912971497
0.0675230771303177 0.927616834640503
0.0703325197100639 0.924758076667786
0.0790752917528152 0.915934383869171
0.0811049863696098 0.913901329040527
0.0839139670133591 0.91109710931778
0.0910551473498344 0.904015719890594
0.0914278626441956 0.903647899627686
0.0929863452911377 0.902111947536469
0.0971563383936882 0.898017346858978
0.0996829569339752 0.895547330379486
0.0998237505555153 0.895409941673279
0.115672208368778 0.880107223987579
0.127606585621834 0.868799209594727
0.133811295032501 0.862989902496338
0.140404969453812 0.856867074966431
0.141848400235176 0.85553377866745
0.143481746315956 0.854027986526489
0.14919476211071 0.848785996437073
0.165343880653381 0.834176659584045
0.166629523038864 0.833026707172394
0.17111437022686 0.829030215740204
0.175462424755096 0.825177788734436
0.179158613085747 0.821919977664948
0.184549406170845 0.817196130752563
0.204394772648811 0.800092160701752
0.206485122442245 0.798316538333893
0.228909358382225 0.779549121856689
0.229227304458618 0.779286205768585
0.231799572706223 0.77716326713562
0.25389552116394 0.7591592669487
0.295848220586777 0.726113617420197
0.301604896783829 0.721691846847534
0.317365616559982 0.709729254245758
0.321536540985107 0.706596672534943
0.323987185955048 0.70476359128952
0.325569301843643 0.70358270406723
0.327896296977997 0.701849460601807
0.333507657051086 0.69768899679184
0.354743450880051 0.682178676128387
0.368787109851837 0.672122418880463
0.374075591564178 0.6683748960495
0.376269787549973 0.666826367378235
0.392039835453033 0.655802071094513
0.448523104190826 0.61784827709198
0.467623382806778 0.605554044246674
0.51166170835495 0.578223645687103
0.528185546398163 0.568314611911774
0.541024386882782 0.560718894004822
0.568550884723663 0.544732689857483
0.581681251525879 0.537250280380249
0.583209156990051 0.536385715007782
0.639981210231781 0.505207121372223
0.662271738052368 0.49347248673439
0.67755538225174 0.485579609870911
0.712697148323059 0.467846721410751
0.730794370174408 0.458923935890198
0.732405066490173 0.458136320114136
0.762117743492126 0.443766385316849
0.773440182209015 0.438371956348419
0.783490061759949 0.433622390031815
0.78608775138855 0.432400226593018
0.790945649147034 0.430118709802628
0.791604399681091 0.429810047149658
0.804874062538147 0.423628896474838
0.805576264858246 0.423303872346878
0.81831419467926 0.417442858219147
0.881467580795288 0.389206260442734
0.906880497932434 0.378253966569901
0.996952712535858 0.341399341821671
1.03600311279297 0.326436340808868
1.0573570728302 0.318504184484482
1.17803061008453 0.277163565158844
1.22611820697784 0.262294918298721
1.35938203334808 0.225568816065788
1.36120343208313 0.225109964609146
1.40997350215912 0.213213086128235
1.41032409667969 0.21313039958477
1.42464458942413 0.209779977798462
1.43722772598267 0.206886023283005
1.68432581424713 0.158740550279617
1.87879085540771 0.130456298589706
2.93655371665955 0.0537940114736557
2.93978977203369 0.0536710657179356
3.24061918258667 0.0438040047883987
};
\addlegendentry{estimated}
\end{axis}

\end{tikzpicture}
        }
        \end{subfigure}
        \begin{subfigure}[b]{.32\linewidth}
        \resizebox{\linewidth}{0.8\linewidth}{
         % This file was created with tikzplotlib v0.10.1.
\begin{tikzpicture}

\definecolor{darkgray176}{RGB}{176,176,176}
\definecolor{lightgray204}{RGB}{204,204,204}
\definecolor{steelblue31119180}{RGB}{31,119,180}

\begin{axis}[
legend cell align={left},
legend style={
  font=\tiny,
  fill opacity=0.8,
  draw opacity=1,
  text opacity=1,
  at={(0.40,0.97)},
  anchor=north west,
  draw=lightgray204
},
% set width of axis box
scale only axis,
width=4cm,
height=3cm,
tick align=outside,
tick pos=left,
title={\fontsize{6}{6}\selectfont $N=1000$},
tick label style={font=\tiny},
% major tick length=\MajorTickLength,
every tick/.style={
black,
semithick,
},
x label style={at={(axis description cs:0.5,-0.1)},anchor=north,font=\tiny},
y label style={at={(axis description cs:-0.06,.5)},rotate=90,anchor=south,font=\tiny},
x grid style={darkgray176},
xlabel={$t$},
xmin=-0.157433523936197, xmax=3.40243121623062,
xtick style={color=black},
y grid style={darkgray176},
ylabel={$S(t|Z)$},
ymin=-0.00870856288820505, ymax=1.04395221937448,
ytick={0, 0.25, 0.75, 1},
ytick style={color=black}
]
\addplot [semithick, steelblue31119180, mark=*, mark size=0.5, mark options={solid}]
table {%
0.00437850970774889 0.995631098747253
0.00542990444228053 0.994584858417511
0.00595268839970231 0.99406498670578
0.00755271734669805 0.992475688457489
0.0123842898756266 0.987692058086395
0.0166294146329165 0.983508110046387
0.0172695983201265 0.982878684997559
0.0208375565707684 0.979378044605255
0.0214008055627346 0.978826522827148
0.0237781554460526 0.976502299308777
0.0328329727053642 0.967700242996216
0.0398957096040249 0.960889637470245
0.044037751853466 0.956917881965637
0.0456374324858189 0.955388247966766
0.0461156778037548 0.954931497573853
0.0586898550391197 0.942999243736267
0.0675230771303177 0.934706151485443
0.0703325197100639 0.93208384513855
0.0790752917528152 0.923970401287079
0.0811049863696098 0.922096908092499
0.0839139670133591 0.919510304927826
0.0910551473498344 0.912967383861542
0.0914278626441956 0.912627160549164
0.0929863452911377 0.911205887794495
0.0971563383936882 0.907414138317108
0.0996829569339752 0.905124306678772
0.0998237505555153 0.904996871948242
0.115672208368778 0.890767157077789
0.127606585621834 0.880199551582336
0.133811295032501 0.874755144119263
0.140404969453812 0.869006276130676
0.141848400235176 0.86775279045105
0.143481746315956 0.866336584091187
0.14919476211071 0.861401319503784
0.165343880653381 0.847602188587189
0.166629523038864 0.846513211727142
0.17111437022686 0.842725217342377
0.175462424755096 0.83906888961792
0.179158613085747 0.835973262786865
0.184549406170845 0.831478834152222
0.204394772648811 0.815140545368195
0.206485122442245 0.813438355922699
0.228909358382225 0.795400619506836
0.229227304458618 0.795147776603699
0.231799572706223 0.793105065822601
0.25389552116394 0.775772869586945
0.295848220586777 0.74390035867691
0.301604896783829 0.739630222320557
0.317365616559982 0.72806453704834
0.321536540985107 0.725034117698669
0.323987185955048 0.723259508609772
0.325569301843643 0.722116112709045
0.327896296977997 0.720437705516815
0.333507657051086 0.716406404972076
0.354743450880051 0.701353311538696
0.368787109851837 0.691572666168213
0.374075591564178 0.687924921512604
0.376269787549973 0.686417102813721
0.392039835453033 0.675677180290222
0.448523104190826 0.638570547103882
0.467623382806778 0.626489400863647
0.51166170835495 0.599498510360718
0.528185546398163 0.589673936367035
0.541024386882782 0.582151591777802
0.568550884723663 0.566345512866974
0.581681251525879 0.558957815170288
0.583209156990051 0.558104455471039
0.639981210231781 0.527302324771881
0.662271738052368 0.515678524971008
0.67755538225174 0.507856965065002
0.712697148323059 0.490319907665253
0.730794370174408 0.481526345014572
0.732405066490173 0.48075133562088
0.762117743492126 0.466677069664001
0.773440182209015 0.461422920227051
0.783490061759949 0.456808924674988
0.78608775138855 0.455623835325241
0.790945649147034 0.453415811061859
0.791604399681091 0.453117221593857
0.804874062538147 0.447144210338593
0.805576264858246 0.446830362081528
0.81831419467926 0.441174775362015
0.881467580795288 0.414174646139145
0.906880497932434 0.403781890869141
0.996952712535858 0.369002193212509
1.03600311279297 0.354870229959488
1.0573570728302 0.347372680902481
1.17803061008453 0.307884484529495
1.22611820697784 0.293429404497147
1.35938203334808 0.256819427013397
1.36120343208313 0.256352096796036
1.40997350215912 0.244149759411812
1.41032409667969 0.244064167141914
1.42464458942413 0.240593954920769
1.43722772598267 0.237585499882698
1.68432581424713 0.185569494962692
1.87879085540771 0.152774721384048
2.93655371665955 0.0530482344329357
2.93978977203369 0.0528768450021744
3.24061918258667 0.0391396544873714
};
\addlegendentry{ground truth}
\addplot [width=1pt, red, opacity=0.7]
table {%
0.00437850970774889 0.994512021541595
0.00542990444228053 0.99319976568222
0.00595268839970231 0.992547929286957
0.00755271734669805 0.990556478500366
0.0123842898756266 0.984572410583496
0.0166294146329165 0.979350864887238
0.0172695983201265 0.978566288948059
0.0208375565707684 0.974207580089569
0.0214008055627346 0.973521530628204
0.0237781554460526 0.970632493495941
0.0328329727053642 0.959721088409424
0.0398957096040249 0.951310694217682
0.044037751853466 0.946418762207031
0.0456374324858189 0.944537460803986
0.0461156778037548 0.943975925445557
0.0586898550391197 0.929350733757019
0.0675230771303177 0.919236123561859
0.0703325197100639 0.916046321392059
0.0790752917528152 0.906203091144562
0.0811049863696098 0.903935611248016
0.0839139670133591 0.90080863237381
0.0910551473498344 0.892915964126587
0.0914278626441956 0.892506301403046
0.0929863452911377 0.890795528888702
0.0971563383936882 0.886236965656281
0.0996829569339752 0.883488297462463
0.0998237505555153 0.88333535194397
0.115672208368778 0.866349101066589
0.127606585621834 0.853843510150909
0.133811295032501 0.847436904907227
0.140404969453812 0.840695917606354
0.141848400235176 0.839230597019196
0.143481746315956 0.837575376033783
0.14919476211071 0.831817865371704
0.165343880653381 0.815816700458527
0.166629523038864 0.814559042453766
0.17111437022686 0.810189843177795
0.175462424755096 0.805985510349274
0.179158613085747 0.802433431148529
0.184549406170845 0.797285676002502
0.204394772648811 0.778684139251709
0.206485122442245 0.776757836341858
0.228909358382225 0.756449639797211
0.229227304458618 0.756166398525238
0.231799572706223 0.753879010677338
0.25389552116394 0.7345831990242
0.295848220586777 0.699539005756378
0.301604896783829 0.694893658161163
0.317365616559982 0.68235981464386
0.321536540985107 0.679089546203613
0.323987185955048 0.677176654338837
0.325569301843643 0.675945103168488
0.327896296977997 0.674137353897095
0.333507657051086 0.669801414012909
0.354743450880051 0.653687238693237
0.368787109851837 0.643277764320374
0.374075591564178 0.63940954208374
0.376269787549973 0.637812316417694
0.392039835453033 0.626461505889893
0.448523104190826 0.587673664093018
0.467623382806778 0.575173020362854
0.51166170835495 0.54749196767807
0.528185546398163 0.537496089935303
0.541024386882782 0.529877007007599
0.568550884723663 0.51394134759903
0.581681251525879 0.506524205207825
0.583209156990051 0.505669176578522
0.639981210231781 0.474951058626175
0.662271738052368 0.463401824235916
0.67755538225174 0.455637097358704
0.712697148323059 0.438295841217041
0.730794370174408 0.429626822471619
0.732405066490173 0.42886421084404
0.762117743492126 0.415042519569397
0.773440182209015 0.40988427400589
0.783490061759949 0.405362904071808
0.78608775138855 0.404202669858932
0.790945649147034 0.402042031288147
0.791604399681091 0.401749968528748
0.804874062538147 0.395911157131195
0.805576264858246 0.39560455083847
0.81831419467926 0.390083491802216
0.881467580795288 0.363785445690155
0.906880497932434 0.353721708059311
0.996952712535858 0.32012528181076
1.03600311279297 0.306571573019028
1.0573570728302 0.299386084079742
1.17803061008453 0.261900842189789
1.22611820697784 0.248321354389191
1.35938203334808 0.214512974023819
1.36120343208313 0.214087769389153
1.40997350215912 0.203033953905106
1.41032409667969 0.202956765890121
1.42464458942413 0.199824526906013
1.43722772598267 0.19711010158062
1.68432581424713 0.151219561696053
1.87879085540771 0.123288325965405
2.93655371665955 0.0437710545957088
2.93978977203369 0.0436383895576
3.24061918258667 0.0329694077372551
};
\addlegendentry{estimated}
\end{axis}

\end{tikzpicture}
        }
        \end{subfigure}
        \begin{subfigure}[b]{.32\linewidth}
        \resizebox{\linewidth}{0.8\linewidth}{
         % This file was created with tikzplotlib v0.10.1.
\begin{tikzpicture}

\definecolor{darkgray176}{RGB}{176,176,176}
\definecolor{lightgray204}{RGB}{204,204,204}
\definecolor{steelblue31119180}{RGB}{31,119,180}

\begin{axis}[
legend cell align={left},
legend style={
  font=\tiny,
  fill opacity=0.8,
  draw opacity=1,
  text opacity=1,
  at={(0.40,0.97)},
  anchor=north west,
  draw=lightgray204
},
% set width of axis box
scale only axis,
width=4cm,
height=3cm,
tick align=outside,
tick pos=left,
title={\fontsize{6}{6}\selectfont $N=5000$},
tick label style={font=\tiny},
% major tick length=\MajorTickLength,
every tick/.style={
black,
semithick,
},
x label style={at={(axis description cs:0.5,-0.1)},anchor=north,font=\tiny},
y label style={at={(axis description cs:-0.06,.5)},rotate=90,anchor=south,font=\tiny},
x grid style={darkgray176},
xlabel={$t$},
xmin=-0.157433523936197, xmax=3.40243121623062,
xtick style={color=black},
y grid style={darkgray176},
ylabel={$S(t|Z)$},
ymin=-0.00870856288820505, ymax=1.04395221937448,
ytick={0, 0.25, 0.75, 1},
ytick style={color=black}
]
\addplot [semithick, steelblue31119180, mark=*, mark size=0.5, mark options={solid}]
table {%
0.00437850970774889 0.995631098747253
0.00542990444228053 0.994584858417511
0.00595268839970231 0.99406498670578
0.00755271734669805 0.992475688457489
0.0123842898756266 0.987692058086395
0.0166294146329165 0.983508110046387
0.0172695983201265 0.982878684997559
0.0208375565707684 0.979378044605255
0.0214008055627346 0.978826522827148
0.0237781554460526 0.976502299308777
0.0328329727053642 0.967700242996216
0.0398957096040249 0.960889637470245
0.044037751853466 0.956917881965637
0.0456374324858189 0.955388247966766
0.0461156778037548 0.954931497573853
0.0586898550391197 0.942999243736267
0.0675230771303177 0.934706151485443
0.0703325197100639 0.93208384513855
0.0790752917528152 0.923970401287079
0.0811049863696098 0.922096908092499
0.0839139670133591 0.919510304927826
0.0910551473498344 0.912967383861542
0.0914278626441956 0.912627160549164
0.0929863452911377 0.911205887794495
0.0971563383936882 0.907414138317108
0.0996829569339752 0.905124306678772
0.0998237505555153 0.904996871948242
0.115672208368778 0.890767157077789
0.127606585621834 0.880199551582336
0.133811295032501 0.874755144119263
0.140404969453812 0.869006276130676
0.141848400235176 0.86775279045105
0.143481746315956 0.866336584091187
0.14919476211071 0.861401319503784
0.165343880653381 0.847602188587189
0.166629523038864 0.846513211727142
0.17111437022686 0.842725217342377
0.175462424755096 0.83906888961792
0.179158613085747 0.835973262786865
0.184549406170845 0.831478834152222
0.204394772648811 0.815140545368195
0.206485122442245 0.813438355922699
0.228909358382225 0.795400619506836
0.229227304458618 0.795147776603699
0.231799572706223 0.793105065822601
0.25389552116394 0.775772869586945
0.295848220586777 0.74390035867691
0.301604896783829 0.739630222320557
0.317365616559982 0.72806453704834
0.321536540985107 0.725034117698669
0.323987185955048 0.723259508609772
0.325569301843643 0.722116112709045
0.327896296977997 0.720437705516815
0.333507657051086 0.716406404972076
0.354743450880051 0.701353311538696
0.368787109851837 0.691572666168213
0.374075591564178 0.687924921512604
0.376269787549973 0.686417102813721
0.392039835453033 0.675677180290222
0.448523104190826 0.638570547103882
0.467623382806778 0.626489400863647
0.51166170835495 0.599498510360718
0.528185546398163 0.589673936367035
0.541024386882782 0.582151591777802
0.568550884723663 0.566345512866974
0.581681251525879 0.558957815170288
0.583209156990051 0.558104455471039
0.639981210231781 0.527302324771881
0.662271738052368 0.515678524971008
0.67755538225174 0.507856965065002
0.712697148323059 0.490319907665253
0.730794370174408 0.481526345014572
0.732405066490173 0.48075133562088
0.762117743492126 0.466677069664001
0.773440182209015 0.461422920227051
0.783490061759949 0.456808924674988
0.78608775138855 0.455623835325241
0.790945649147034 0.453415811061859
0.791604399681091 0.453117221593857
0.804874062538147 0.447144210338593
0.805576264858246 0.446830362081528
0.81831419467926 0.441174775362015
0.881467580795288 0.414174646139145
0.906880497932434 0.403781890869141
0.996952712535858 0.369002193212509
1.03600311279297 0.354870229959488
1.0573570728302 0.347372680902481
1.17803061008453 0.307884484529495
1.22611820697784 0.293429404497147
1.35938203334808 0.256819427013397
1.36120343208313 0.256352096796036
1.40997350215912 0.244149759411812
1.41032409667969 0.244064167141914
1.42464458942413 0.240593954920769
1.43722772598267 0.237585499882698
1.68432581424713 0.185569494962692
1.87879085540771 0.152774721384048
2.93655371665955 0.0530482344329357
2.93978977203369 0.0528768450021744
3.24061918258667 0.0391396544873714
};
\addlegendentry{ground truth}
\addplot [width=1pt, red, opacity=0.7]
table {%
0.00437850970774889 0.994074881076813
0.00542990444228053 0.992660045623779
0.00595268839970231 0.991957664489746
0.00755271734669805 0.989812850952148
0.0123842898756266 0.983378827571869
0.0166294146329165 0.977778315544128
0.0172695983201265 0.97693794965744
0.0208375565707684 0.972274422645569
0.0214008055627346 0.971541345119476
0.0237781554460526 0.968456327915192
0.0328329727053642 0.956841707229614
0.0398957096040249 0.947928667068481
0.044037751853466 0.942760050296783
0.0456374324858189 0.940775275230408
0.0461156778037548 0.940183162689209
0.0586898550391197 0.924815237522125
0.0675230771303177 0.914245665073395
0.0703325197100639 0.910922110080719
0.0790752917528152 0.900691986083984
0.0811049863696098 0.898341000080109
0.0839139670133591 0.895102202892303
0.0910551473498344 0.886944890022278
0.0914278626441956 0.886522233486176
0.0929863452911377 0.884757816791534
0.0971563383936882 0.880061328411102
0.0996829569339752 0.877233564853668
0.0998237505555153 0.877076387405396
0.115672208368778 0.859642148017883
0.127606585621834 0.846842467784882
0.133811295032501 0.840297162532806
0.140404969453812 0.833420515060425
0.141848400235176 0.831925809383392
0.143481746315956 0.830239236354828
0.14919476211071 0.824378848075867
0.165343880653381 0.808133006095886
0.166629523038864 0.806859374046326
0.17111437022686 0.802439510822296
0.175462424755096 0.798188090324402
0.179158613085747 0.794599711894989
0.184549406170845 0.789408266544342
0.204394772648811 0.770711779594421
0.206485122442245 0.768779575824738
0.228909358382225 0.748477041721344
0.229227304458618 0.748194456100464
0.231799572706223 0.745913684368134
0.25389552116394 0.726711273193359
0.295848220586777 0.692038953304291
0.301604896783829 0.687451779842377
0.317365616559982 0.67510050535202
0.321536540985107 0.671882092952728
0.323987185955048 0.670000910758972
0.325569301843643 0.668790221214294
0.327896296977997 0.667014956474304
0.333507657051086 0.662756145000458
0.354743450880051 0.646953165531158
0.368787109851837 0.636779129505157
0.374075591564178 0.633001446723938
0.376269787549973 0.631440877914429
0.392039835453033 0.62036794424057
0.448523104190826 0.582635045051575
0.467623382806778 0.57052206993103
0.51166170835495 0.543747961521149
0.528185546398163 0.534088432788849
0.541024386882782 0.526728749275208
0.568550884723663 0.51135641336441
0.581681251525879 0.504201114177704
0.583209156990051 0.50337541103363
0.639981210231781 0.473784744739532
0.662271738052368 0.462662309408188
0.67755538225174 0.455203086137772
0.712697148323059 0.438543111085892
0.730794370174408 0.430226057767868
0.732405066490173 0.429493606090546
0.762117743492126 0.416213244199753
0.773440182209015 0.411265432834625
0.783490061759949 0.406927317380905
0.78608775138855 0.405812501907349
0.790945649147034 0.403735816478729
0.791604399681091 0.403454929590225
0.804874062538147 0.397845774888992
0.805576264858246 0.397551447153091
0.81831419467926 0.392252951860428
0.881467580795288 0.367063194513321
0.906880497932434 0.357408493757248
0.996952712535858 0.325240522623062
1.03600311279297 0.312225133180618
1.0573570728302 0.305340170860291
1.17803061008453 0.269067138433456
1.22611820697784 0.255856066942215
1.35938203334808 0.222493812441826
1.36120343208313 0.222069203853607
1.40997350215912 0.211005792021751
1.41032409667969 0.210928112268448
1.42464458942413 0.207780584692955
1.43722772598267 0.205056190490723
1.68432581424713 0.158287703990936
1.87879085540771 0.129083678126335
2.93655371665955 0.0417474396526814
2.93978977203369 0.0416013486683369
3.24061918258667 0.0299900081008673
};
\addlegendentry{estimated}
\end{axis}

\end{tikzpicture}
        }
        \end{subfigure}
        \begin{subfigure}[b]{.32\linewidth}
        \resizebox{\linewidth}{0.8\linewidth}{
         % This file was created with tikzplotlib v0.10.1.
\begin{tikzpicture}

\definecolor{darkgray176}{RGB}{176,176,176}
\definecolor{lightgray204}{RGB}{204,204,204}
\definecolor{steelblue31119180}{RGB}{31,119,180}

\begin{axis}[
legend cell align={left},
legend style={
  font=\tiny,
  fill opacity=0.8,
  draw opacity=1,
  text opacity=1,
  at={(0.40,0.97)},
  anchor=north west,
  draw=lightgray204
},
% set width of axis box
scale only axis,
width=4cm,
height=3cm,
tick align=outside,
tick pos=left,
title={\fontsize{6}{6}\selectfont $N=10000$},
tick label style={font=\tiny},
% major tick length=\MajorTickLength,
every tick/.style={
black,
semithick,
},
x label style={at={(axis description cs:0.5,-0.1)},anchor=north,font=\tiny},
y label style={at={(axis description cs:-0.06,.5)},rotate=90,anchor=south,font=\tiny},
x grid style={darkgray176},
xlabel={$t$},
xmin=-0.157433523936197, xmax=3.40243121623062,
xtick style={color=black},
y grid style={darkgray176},
ylabel={$S(t|Z)$},
ymin=-0.00870856288820505, ymax=1.04395221937448,
ytick={0, 0.25, 0.75, 1},
ytick style={color=black}
]
\addplot [semithick, steelblue31119180, mark=*, mark size=0.5, mark options={solid}]
table {%
0.00437850970774889 0.995631098747253
0.00542990444228053 0.994584858417511
0.00595268839970231 0.99406498670578
0.00755271734669805 0.992475688457489
0.0123842898756266 0.987692058086395
0.0166294146329165 0.983508110046387
0.0172695983201265 0.982878684997559
0.0208375565707684 0.979378044605255
0.0214008055627346 0.978826522827148
0.0237781554460526 0.976502299308777
0.0328329727053642 0.967700242996216
0.0398957096040249 0.960889637470245
0.044037751853466 0.956917881965637
0.0456374324858189 0.955388247966766
0.0461156778037548 0.954931497573853
0.0586898550391197 0.942999243736267
0.0675230771303177 0.934706151485443
0.0703325197100639 0.93208384513855
0.0790752917528152 0.923970401287079
0.0811049863696098 0.922096908092499
0.0839139670133591 0.919510304927826
0.0910551473498344 0.912967383861542
0.0914278626441956 0.912627160549164
0.0929863452911377 0.911205887794495
0.0971563383936882 0.907414138317108
0.0996829569339752 0.905124306678772
0.0998237505555153 0.904996871948242
0.115672208368778 0.890767157077789
0.127606585621834 0.880199551582336
0.133811295032501 0.874755144119263
0.140404969453812 0.869006276130676
0.141848400235176 0.86775279045105
0.143481746315956 0.866336584091187
0.14919476211071 0.861401319503784
0.165343880653381 0.847602188587189
0.166629523038864 0.846513211727142
0.17111437022686 0.842725217342377
0.175462424755096 0.83906888961792
0.179158613085747 0.835973262786865
0.184549406170845 0.831478834152222
0.204394772648811 0.815140545368195
0.206485122442245 0.813438355922699
0.228909358382225 0.795400619506836
0.229227304458618 0.795147776603699
0.231799572706223 0.793105065822601
0.25389552116394 0.775772869586945
0.295848220586777 0.74390035867691
0.301604896783829 0.739630222320557
0.317365616559982 0.72806453704834
0.321536540985107 0.725034117698669
0.323987185955048 0.723259508609772
0.325569301843643 0.722116112709045
0.327896296977997 0.720437705516815
0.333507657051086 0.716406404972076
0.354743450880051 0.701353311538696
0.368787109851837 0.691572666168213
0.374075591564178 0.687924921512604
0.376269787549973 0.686417102813721
0.392039835453033 0.675677180290222
0.448523104190826 0.638570547103882
0.467623382806778 0.626489400863647
0.51166170835495 0.599498510360718
0.528185546398163 0.589673936367035
0.541024386882782 0.582151591777802
0.568550884723663 0.566345512866974
0.581681251525879 0.558957815170288
0.583209156990051 0.558104455471039
0.639981210231781 0.527302324771881
0.662271738052368 0.515678524971008
0.67755538225174 0.507856965065002
0.712697148323059 0.490319907665253
0.730794370174408 0.481526345014572
0.732405066490173 0.48075133562088
0.762117743492126 0.466677069664001
0.773440182209015 0.461422920227051
0.783490061759949 0.456808924674988
0.78608775138855 0.455623835325241
0.790945649147034 0.453415811061859
0.791604399681091 0.453117221593857
0.804874062538147 0.447144210338593
0.805576264858246 0.446830362081528
0.81831419467926 0.441174775362015
0.881467580795288 0.414174646139145
0.906880497932434 0.403781890869141
0.996952712535858 0.369002193212509
1.03600311279297 0.354870229959488
1.0573570728302 0.347372680902481
1.17803061008453 0.307884484529495
1.22611820697784 0.293429404497147
1.35938203334808 0.256819427013397
1.36120343208313 0.256352096796036
1.40997350215912 0.244149759411812
1.41032409667969 0.244064167141914
1.42464458942413 0.240593954920769
1.43722772598267 0.237585499882698
1.68432581424713 0.185569494962692
1.87879085540771 0.152774721384048
2.93655371665955 0.0530482344329357
2.93978977203369 0.0528768450021744
3.24061918258667 0.0391396544873714
};
\addlegendentry{ground truth}
\addplot [width=1pt, red, opacity=0.7]
table {%
0.00437850970774889 0.994504988193512
0.00542990444228053 0.993192315101624
0.00595268839970231 0.99254047870636
0.00755271734669805 0.990549802780151
0.0123842898756266 0.984574735164642
0.0166294146329165 0.979369640350342
0.0172695983201265 0.978588283061981
0.0208375565707684 0.974250614643097
0.0214008055627346 0.973568499088287
0.0237781554460526 0.970697343349457
0.0328329727053642 0.959877550601959
0.0398957096040249 0.95156317949295
0.044037751853466 0.946737349033356
0.0456374324858189 0.944883406162262
0.0461156778037548 0.944330155849457
0.0586898550391197 0.929957151412964
0.0675230771303177 0.920054852962494
0.0703325197100639 0.916938424110413
0.0790752917528152 0.907340407371521
0.0811049863696098 0.905133545398712
0.0839139670133591 0.902092576026917
0.0910551473498344 0.894429683685303
0.0914278626441956 0.894032418727875
0.0929863452911377 0.892374038696289
0.0971563383936882 0.88795930147171
0.0996829569339752 0.885300099849701
0.0998237505555153 0.885152220726013
0.115672208368778 0.868742644786835
0.127606585621834 0.8566814661026
0.133811295032501 0.850506067276001
0.140404969453812 0.8440101146698
0.141848400235176 0.842597186565399
0.143481746315956 0.841001749038696
0.14919476211071 0.835455417633057
0.165343880653381 0.820041477680206
0.166629523038864 0.818830847740173
0.17111437022686 0.814627766609192
0.175462424755096 0.810581922531128
0.179158613085747 0.807161569595337
0.184549406170845 0.802208662033081
0.204394772648811 0.784327805042267
0.206485122442245 0.782475411891937
0.228909358382225 0.762971162796021
0.229227304458618 0.7626993060112
0.231799572706223 0.760504126548767
0.25389552116394 0.741998732089996
0.295848220586777 0.70846962928772
0.301604896783829 0.704026281833649
0.317365616559982 0.692033648490906
0.321536540985107 0.688905537128448
0.323987185955048 0.68707549571991
0.325569301843643 0.685896754264832
0.327896296977997 0.684167683124542
0.333507657051086 0.680022537708282
0.354743450880051 0.66463565826416
0.368787109851837 0.654708981513977
0.374075591564178 0.651019871234894
0.376269787549973 0.649496972560883
0.392039835453033 0.638674676418304
0.448523104190826 0.601759552955627
0.467623382806778 0.589893698692322
0.51166170835495 0.563591837882996
0.528185546398163 0.554091155529022
0.541024386882782 0.546838581562042
0.568550884723663 0.531643509864807
0.581681251525879 0.524574935436249
0.583209156990051 0.523758769035339
0.639981210231781 0.494429796934128
0.662271738052368 0.483410656452179
0.67755538225174 0.476006060838699
0.712697148323059 0.459415197372437
0.730794370174408 0.451112240552902
0.732405066490173 0.450381219387054
0.762117743492126 0.437132000923157
0.773440182209015 0.432177990674973
0.783490061759949 0.42783185839653
0.78608775138855 0.426716029644012
0.790945649147034 0.424639016389847
0.791604399681091 0.424358308315277
0.804874062538147 0.41874486207962
0.805576264858246 0.418449997901917
0.81831419467926 0.413141280412674
0.881467580795288 0.387777000665665
0.906880497932434 0.378011882305145
0.996952712535858 0.345370590686798
1.03600311279297 0.332095980644226
1.0573570728302 0.325040221214294
1.17803061008453 0.287877351045609
1.22611820697784 0.274237453937531
1.35938203334808 0.239694118499756
1.36120343208313 0.239253655076027
1.40997350215912 0.227749735116959
1.41032409667969 0.227668732404709
1.42464458942413 0.224388346076012
1.43722772598267 0.221543356776237
1.68432581424713 0.172323629260063
1.87879085540771 0.141303211450577
2.93655371665955 0.0472247749567032
2.93978977203369 0.0470651276409626
3.24061918258667 0.0343340300023556
};
\addlegendentry{estimated}
\end{axis}

\end{tikzpicture}
        }
        \end{subfigure}
        \caption{Visualizations of synthetic data results under the NFM framework. The plots in the first row compare the empirical estimates of the survival function $S(t|\bar{Z})$ against its true value with $\bar{Z}$ being the average of the features of the $100$ hold-out points, under the PF scheme. The plots in the second row are obtained using the FN scheme, with analogous semantics to the first row.}
        \label{fig: surv_recovery}
    \end{figure}
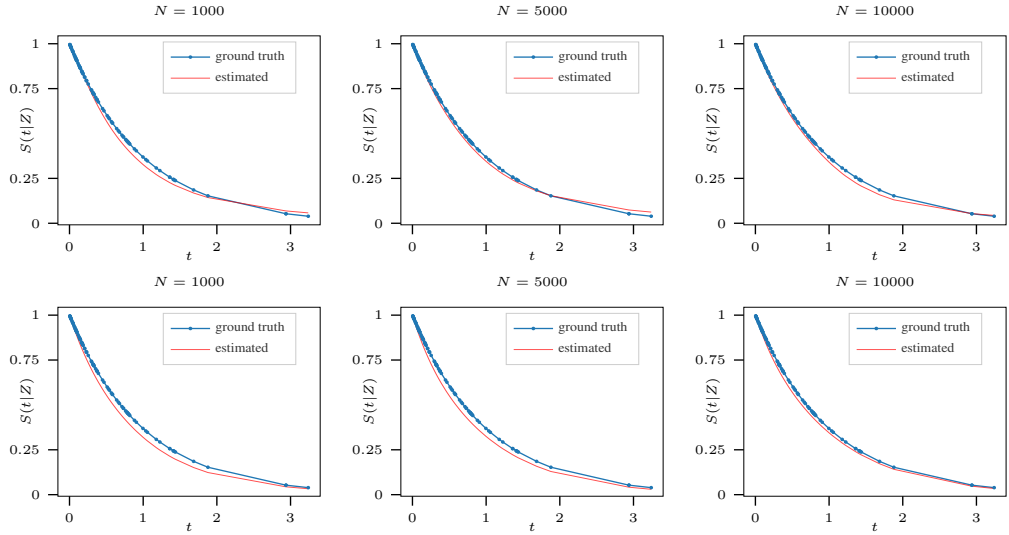

    \subsection{Numerical results of the synthetic experiments}\label{sec: synthetic_numericals}
    Following \cite{zhong2021deep}, we report the relative integrated mean squared error (RISE) of the estimated survival function against the ground truth and list the results in table . The reults suggest that the goodness of fit becomes better with a larger sample size. Moreover, since the true model in the simulation is generated as an PF model, we found PF to perform slightly better than FN, which is reasonable since the inductive bias of PF is more correct in this setup.
    \begin{table}[]
        \centering
        \caption{RISE of the estimated survival function in synthetic experiments}
        \begin{tabular}{l c c c}
     \toprule
     & $N=1000$ & $N=5000$ & $N=10000$ \\
     \midrule
     NFM-PF	& $0.0473$ & $0.0145$ & $0.0137$ \\
     NFM-FN	& $0.0430$ & $0.0184$ & $0.0165$ \\
     \bottomrule
\end{tabular}
        \label{tab: synthetic}
    \end{table}
    
    \subsection{Performance evaluations under the concordance index (C-index)}\label{sec: c_index}
    The concordance index (C-index) \cite{antolini2005time} is yet another evaluation metric that is commonly used in survival analysis. The C-index estimates the probability that, for a random pair of individuals, the predicted survival times of the two individuals have the same ordering as their true survival times. Formally, C-index is defined as
    \begin{align}\label{eqn: c_index}
        \mathcal{C} = \mathbb{P}\left[\widehat{S}(T_i \mid Z_i) < \widehat{S}(T_j \mid Z_j) \mid T_i < T_j, \delta_i = 1\right].
    \end{align}
    We report performance evaluations based on C-index over all the $6$ benchmark datasets in table \ref{tab: cindex}. 
    \begin{table}[]
        \centering
        \caption{Survival prediction results measured in C-index (\%) on all the $6$ benchmark datasets. In each column, the \textbf{boldfaced} score denotes the best result and the \underline{underlined} score represents the second-best result. The average rank of each model is reported in the rightmost column. We did not manage to obtain reasonable results for DeepEH and SODEN on two larger datasets MIMIC-III and KKBOX, and we set corresponding ranks to be the worst on those datasets.}
        \resizebox{\textwidth}{!}{%
            \begin{tabular}{l c c c c c c >{\columncolor[gray]{0.8}}c}
    \toprule
    Model & METABRIC & RotGBSG & FLCHAIN & SUPPORT & MIMIC-III & KKBOX & Ave. Rank \\
    \midrule
    \ph & \result{63.42}{1.81} & \result{66.14}{1.46} & \result{79.09}{1.11} & \result{56.89}{0.91} & \result{74.91}{0.00} & \result{83.01}{0.00} & $11.33$ \\
    GBM & \result{64.02}{1.79} & \result{67.35}{1.16} & \resultf{79.47}{1.08} & \result{61.46}{0.80} & \result{75.20}{0.00} & \result{85.84}{0.00} & $7.17$\\
    RSF & \result{64.47}{1.82} & \result{67.33}{1.34} & \result{78.75}{1.07} & \result{61.63}{0.84} & \result{75.47}{0.17} & \result{85.79}{0.00} & $8.00$\\
    DeepSurv & \result{63.95}{2.12} & \result{67.20}{1.22} & \result{79.04}{1.14} & \result{60.91}{0.85} & \result{80.08}{0.44} & \result{85.59}{0.08} & $8.50$\\
    CoxTime & \result{66.22}{1.69} & \result{67.41}{1.35} & \result{78.95}{1.01} & \result{61.54}{0.87} & \result{78.78}{0.62} & \resultf{87.31}{0.24} & $5.00$\\
    DeepHit & \result{66.33}{1.61} & \result{66.38}{1.07} & \result{78.48}{1.09} & \resultf{63.20}{0.85} & \result{79.16}{0.59} & \result{86.12}{0.26} & $6.50$\\
    DeepEH & \result{66.59}{2.00} & \resultf{67.93}{1.28} & \result{78.71}{1.44} & \result{61.51}{1.04} & $-$ & $-$ & $6.33$ \\
    SuMo-net & \result{64.82}{1.80} & \result{67.20}{1.31} & \result{79.28}{1.02} & \result{62.18}{0.78} & \result{76.23}{1.06} & \result{84.77}{0.02} & $7.00$\\
    SODEN & \result{64.82}{1.05} & \result{66.97}{0.75} & \result{79.00}{0.96} & \result{61.10}{0.59} & $-$ & $-$ &
    $10.17$\\
    SurvNode & \result{64.64}{4.91} & \result{67.30}{1.65} & \result{76.11}{0.98} & \result{55.37}{0.77} & $-$ & $-$ &
    $11.50$\\
    DCM & \result{65.76}{1.25} & \result{66.75}{1.35} & \result{78.61}{0.79} & \result{62.19}{0.95} & \result{76.45}{0.34} & \result{83.48}{0.07} &
    $8.33$\\
    DeSurv & \result{65.88}{2.02} & \result{67.30}{1.45} & \result{78.97}{1.64} & \result{61.47}{0.97} & \resultf{80.97}{0.30} & \result{86.11}{0.05} &$5.67$\\
    \midrule
    \textbf{NFM-PF} & \result{64.98}{1.87} & \results{67.77}{1.35} & \results{79.45}{1.03} & \result{61.33}{0.83} & \result{79.56}{0.15} & \result{86.23}{0.01} & $\underline{4.67}$\\
    \textbf{NFM-FN} & \resultf{66.63}{1.82} & \result{67.73}{1.29} & \result{79.29}{0.93} & \results{62.21}{0.41} & \results{80.18}{0.20} & \results{86.61}{0.01} & $\mathbf{2.16}$ \\
    \bottomrule
\end{tabular}
        }
        \label{tab: cindex}
    \end{table}
    From table \ref{tab: cindex}, it appears that there's no clear winner regarding the C-index metric across the $6$ selected datasets. We conjecture this phenomenon to be closely related to the loose correlation between the C-index and the likelihood-based learning objective, as was observed in \cite{rindt2022a}. Therefore we compute the average rank of each model as an overall assessment of performance, as illustrated in the last column in table \ref{tab: cindex}. The results suggest that the two NFM models perform better on average.
    \subsection{Benefits of frailty}\label{sec: benefits_frailty}
    We compute the (relative) performance gain of NFM-PF and NFM-FN, against their non-frailty counterparts, namely DeepSurv \cite{katzman2018deepsurv} and SuMo-net \cite{rindt2022a} based on results in table \ref{tab: survival_results}, table \ref{tab: kkbox} and table \ref{tab: cindex}. The results are shown in table \ref{tab: frailty_benefits}
    \begin{table}[]
        \centering
        \caption{Relative improvement of NFM models in comparison to their non-frailty counterparts, measured in IBS, INBLL, and C-index.}
        \begin{tabular}{l c c c c c c}
            \toprule
            Dataset           & \multicolumn{3}{c}{NFM-PF vs DeepSurv} & \multicolumn{3}{c}{NFM-FN vs SuMo-net}\\
                            & IBS   & INBLL  & C-index  & IBS  & INBLL  & C-index \\
            \midrule
            METABRIC            & $+1.33\%$ & $+1.56\%$ & $+1.61\%$ & $+2.30\%$ & $+3.08\%$  & $+2.79\%$ \\
            RotGBSG             & $+1.11\%$ & $+0.95\%$ & $+0.84\%$ & $+0.62\%$ & $+0.40\%$ & $+0.79\%$ \\
            FLCHAIN             & $+1.29\%$ & $+1.32\%$ & $+0.52\%$ & $+0.20\%$ & $+0.27\%$ & $+0.01\%$ \\
            SUPPORT             & $+0.31\%$ & $+0.23\%$ & $+0.69\%$ & $+2.22\%$ & $+1.76\%$ & $+0.05\%$ \\
            MIMIC-III           & $+12.38\%$ & $+12.15\%$ & \cellcolor{gray!25}$-0.64\%$ & $+6.18\%$ & $+5.56\%$ & $+5.18\%$ \\
            KKBOX               & $+2.56\%$ & $+0.51\%$ & $+0.75\%$ & $+8.20\%$ & $+10.38\%$ & $+2.17\%$ \\
            % RotGBSG             & \result{17.70}{0.00} & \result{52.30}{0.00} & \result{11.81}{0.00}    & \result{38.15}{0.00} \\
            % FLCHAIN             & \result{17.79}{0.19} & \result{53.34}{0.41} & \result{14.46}{0.00} & \result{44.39}{0.00} \\
            % SUPPORT        & \result{18.58}{0.92} & \result{55.98}{2.43} & \result{11.31}{0.05} & \result{35.28}{0.15}\\
            % MIMIC-III         & \result{17.68}{1.36} & \result{52.08}{3.06} & \results{10.70}{0.06} & \results{33.10}{0.21}\\
            % KKBOX         & \result{19.80}{1.31} & \result{59.03}{4.20} & \result{16.00}{0.34} & \result{48.64}{1.04}\\
            \bottomrule
        \end{tabular}
        \label{tab: frailty_benefits}
    \end{table}
    The results suggest solid improvement in incorporating frailty, especially for IBS and INBLL metrics, as the relative increase in performance could be over $10\%$ for both NFM models. For the IBS and INBLL metrics, the performance improvement is consistent across all datasets. The only performance degradation appears on the MIMIC-III dataset evaluated under C-index. This phenomenon is also understandable: Since the DeepSurv model utilized a variant of partial likelihood (PL) for model training, as previous works \cite{NIPS2007_33e8075e} pointed out that PL type objective is closely related to the ranking problem. As C-index could be considered a certain type of ranking measure, it is possible that DeepSurv obtains better ranking performance than NFM-type models which are trained using scale-sensitive likelihood objective. 
    \subsection{Limitations}\label{sec: limitations}
    In this section we discuss the limitations of this paper form both theoretical and empirical standpoints.
    \begin{description}
        \item[Theoretical limitations] As we have established formal statistical guarantees regarding the estimation properties of NFM, the guanrantees do not necessarily imply that NFM perform well on prediction tasks under metrics such as IBS and INBLL. Following the spirit of classical learning theory \cite{shalev2014understanding}, for prediction guarantees it is ideal to directly optimize the underlying metric or its surrogates, which is difficult in survival problems since the metrics involve both a model over the event time and a working model on the censoring time. So far as we have noticed, the only effort that aims to address this issue is the method of inversely-weighted survival games \cite{han2021inverse}. However, the authors in \cite{han2021inverse} did not provide rigorous learning-theoretic statements, which is a promising research direction for future works. 
        \item[Empirical limitations] While NFM is shown to be a competitive survival model for prognosis empirically, we have observed from the empirical results that we can hardly obtain \emph{statistically significant improvements} over the baselines, which is a common problem exhibited in previous works on neural survival regressions \cite{zhong2021deep, rindt2022a}. We conjecture that this phenomenon is primarily due to two facts: Firstly, there lacks open-to-public large-scale survival datasets that allows scalable evaluation of neural survival models. Secondly, for most of the current available datasets, there are no authoritative train-test splits. Consequently, most experiments are done using cross-validation on moderate scale datasets, causing the resulting variability of the modeling algorithms to be relatively large. Therefore we suggest the survival analysis community to release (in a privacy-preserving manner) more large-scale, sanitized datasets equipped with standard train-test splits, which we believe will greatly benefit the state-of-the-art for neural survival modeling.
    \end{description}
%%%%%%%%%%%%%%%%%%%%%%%%%%%%%%%%%%%%%%%%%%%%%%%%%%%%%%%%%%%%%%%%%%%%%%%%%%%%%%%
%%%%%%%%%%%%%%%%%%%%%%%%%%%%%%%%%%%%%%%%%%%%%%%%%%%%%%%%%%%%%%%%%%%%%%%%%%%%%%%

\end{document}